\definecolor{codegreen}{rgb}{0,0.6,0}
\definecolor{codegray}{rgb}{0.5,0.5,0.5}
\definecolor{codepurple}{rgb}{0.58,0,0.82}
\definecolor{backcolour}{rgb}{0.95,0.95,0.92}
\definecolor{RoyalAzure}{rgb}{0.0, 0.22, 0.66}
\newtheorem{theorem}{Theorem}   %
\newtheorem{proposition}{Proposition}
\newtheorem{problem}{Problem}
\newtheorem{lemma}{Lemma}
\newtheorem{remark}{Remark}
\newtheorem{scenario}{Scenario}
\newtheorem{definition}{Definition}
\newtheorem{assumption}{Assumption}
\newif\if@restonecol
\newcommand\numeq[1]%
\newcommand\numeqq[1]%
\newcommand\numleq[1]%
\newcommand\numgeq[1]%
\newcommand\numimp[1]%
\newcommand\numin[1]%
\newcommand\numiff[1]%
\newcommand{\acro}[1]{\textsc{#1}\xspace}
\newcommand{\apt}{\acro{apt}}
\newcommand{\pomdp}{\acro{pomdp}}
\newcommand{\ssh}{\acro{ssh}}
\newcommand{\http}{\acro{http}}
\newcommand{\cve}{\acro{cve}}
\newcommand{\cwe}{\acro{cwe}}
\newcommand{\mab}{\acro{mab}}
\newcommand{\pomis}{\acro{pomis}}
\newcommand{\indep}{\perp \!\!\! \perp}
\newcommand{\scm}{\acro{scm}}
\newcommand{\mat}[1]{\mathbf{#1}}
\newcommand{\X}{\mathbf{X}}
\newcommand{\N}{\mathbf{N}}
\newcommand{\Y}{\mathbf{Y}}
\newcommand{\V}{\mathbf{V}}
\newcommand{\U}{\mathbf{U}}
\newcommand{\x}{\mathbf{x}}
\newcommand{\graph}{\mathcal{G}}
\newcommand{\myP}[1]{P \! \left ( #1    \right)}
\newcommand{\an}[2]{\mathrm{an}(#1)_{#2}}
\newcommand{\de}[2]{\mathrm{de}(#1)_{#2}}
\newcommand{\ch}[2]{\mathrm{ch}(#1)_{#2}}
\newcommand{\pa}[2]{\mathrm{pa}(#1)_{#2}}
\lstdefinestyle{mystyle}{
  backgroundcolor=\color{backcolour},
  commentstyle=\color{codegreen},
  keywordstyle=\color{magenta},
  numberstyle=\tiny\color{codegray},
  stringstyle=\color{codepurple},
  basicstyle=\ttfamily\footnotesize,
  breakatwhitespace=false,
  breaklines=true,
  captionpos=b,
  keepspaces=true,
  showspaces=false,
  showstringspaces=false,
  showtabs=false,
  tabsize=2,
  xleftmargin=50pt,
  xrightmargin=50pt
}
\pgfplotsset{compat=1.15}
\definecolor{gray2}{HTML}{ededed}
\definecolor{gray3}{HTML}{F5F5F5}
\tikzset{set/.style={draw,circle,inner sep=0pt,align=center}}
\tikzstyle{abstractbox} = [draw=black, fill=white, rectangle,
\tikzstyle{abstracttitle} =[fill=white]
\DeclareMathOperator*{\argmax}{arg\,max}
\DeclareMathOperator*{\maximize}{maximize}
\tikzstyle{cblue}=[circle, draw, thin,fill=cyan!20, scale=0.8]
\tikzstyle{qgre}=[rectangle, draw, thin,fill=green!20, scale=0.8]
\tikzstyle{rpath}=[ultra thick, red, opacity=0.4]
\tikzstyle{legend_isps}=[rectangle, rounded corners, thin,
\tikzstyle{legend_overlay}=[rectangle, rounded corners, thin,
\tikzstyle{legend_phytop}=[rectangle, rounded corners, thin,
\tikzstyle{legend_general}=[rectangle, rounded corners, thin,
\colorlet{myRed}{red!20}
\tikzset{
  rows/.style 2 args={/utils/temp/.style={row ##1/.append style={nodes={#2}}},
    /utils/temp/.list={#1}},
  columns/.style 2 args={/utils/temp/.style={column ##1/.append style={nodes={#2}}},
    /utils/temp/.list={#1}}}
\definecolor{switch}{HTML}{006996}
    \pgfmathsetlength\pgfutil@tempdima{\pgfkeysvalueof{/pgf/parallelepiped
        offset x}}
    \pgfmathsetlength\pgfutil@tempdimb{\pgfkeysvalueof{/pgf/parallelepiped
        offset y}}
    \def\ppd@offset{\pgfpoint{\pgfutil@tempdima}{\pgfutil@tempdimb}}
\tikzset{anchor/.append code=\let\tikz@auto@anchor\relax,
  add font/.code=%
  \expandafter\def\expandafter\tikz@textfont\expandafter{\tikz@textfont#1},
  left delimiter/.style 2 args={append after command={\tikz@delimiter{south east}
      {south west}{every delimiter,every left delimiter,#2}{south}{north}{#1}{.}{\pgf@y}}}}
\tikzstyle{sms} = [rectangle callout, draw,very thick, rounded corners, minimum height=20pt]
\tikzset{anchor/.append code=\let\tikz@auto@anchor\relax,
  add font/.code=%
  \expandafter\def\expandafter\tikz@textfont\expandafter{\tikz@textfont#1},
  left delimiter/.style 2 args={append after command={\tikz@delimiter{south east}
      {south west}{every delimiter,every left delimiter,#2}{south}{north}{#1}{.}{\pgf@y}}}}
\tikzstyle{sms} = [rectangle callout, draw,very thick, rounded corners, minimum height=20pt]
\tikzstyle{block} = [rectangle, draw,
\tikzstyle{line} = [draw, -latex]
\tikzset{
  mybackground9/.style={execute at end picture={
      \begin{scope}[on background layer]
        \draw[black,fill=black!5,rounded corners=6ex] (current bounding box.south west)
        rectangle (current bounding box.north east);
        \node[draw,fill=white,ellipse,anchor=west,inner sep=1pt,minimum width=4ex] at (current bounding box.north
        west){#1};
      \end{scope}
    }},
}
\tikzset{
  mybackground13/.style={execute at end picture={
      \begin{scope}[on background layer]
        \draw[black, fill=gray2, rounded corners=4ex] (current bounding box.south west)
        rectangle (current bounding box.north east);
        \node[draw,fill=white,ellipse,anchor=west,inner sep=1pt,minimum width=4ex] at (current bounding box.north
        west){#1};
      \end{scope}
    }},
}
\tikzset{
  mybackground14/.style={execute at end picture={
      \begin{scope}[on background layer]
        \draw[black, rounded corners=2ex] (current bounding box.south west)
        rectangle (current bounding box.north east);
        \node[draw,fill=white,ellipse,anchor=west,inner sep=1pt,minimum width=4ex] at (current bounding box.north
        west){#1};
      \end{scope}
    }},
}
\tikzset{
  mybackground6/.style={execute at end picture={
      \begin{scope}[on background layer]
        \draw[black,rounded corners=1ex, line width=0.15mm] (current bounding box.south west)
        rectangle (current bounding box.north east);
        \node[draw,fill=white,ellipse,anchor=west,inner sep=1pt,minimum width=4ex] at (current bounding box.north
        west){#1};
      \end{scope}
    }},
}
\tikzset{
  mybackground11/.style={execute at end picture={
      \begin{scope}[on background layer]
        \draw[black, fill=Black!80!Sepia!9, rounded corners=6ex] (current bounding box.south west)
        rectangle (current bounding box.north east);
        \node[draw,fill=white,ellipse,anchor=west,inner sep=1pt,minimum width=4ex] at (current bounding box.north
        west){#1};
      \end{scope}
    }},
}
\tikzset{
  mybackground15/.style={execute at end picture={
      \begin{scope}[on background layer]
        \draw[black, fill=Black!80!Sepia!9, rounded corners=3ex] (current bounding box.south west)
        rectangle (current bounding box.north east);
        \node[draw,fill=white,ellipse,anchor=west,inner sep=1pt,minimum width=4ex] at (current bounding box.north
        west){#1};
      \end{scope}
    }},
}
\tikzset{
  mybackground12/.style={execute at end picture={
      \begin{scope}[on background layer]
        \draw[black, fill=Black!40!Emerald!30, rounded corners=3ex, line width=0.3mm] (current bounding box.south west)
        rectangle (current bounding box.north east);
      \end{scope}
    }},
}
\tikzset{
  mybackground18/.style={execute at end picture={
        \begin{scope}[on background layer]
          \draw[black, fill=gray3, rounded corners=4ex] (current bounding box.south west)
                    rectangle (current bounding box.north east);
          \node[draw,fill=white,ellipse,anchor=west,inner sep=1pt,minimum width=4ex] at (current bounding box.north
                   west){#1};
        \end{scope}
    }},
}
\tikzset{
  mybackground58/.style={execute at end picture={
      \begin{scope}[on background layer]
        \draw[black, fill=blue!40!black!5, rounded corners=1ex] (current bounding box.south west)
        rectangle (current bounding box.north east);
        \node[draw,fill=white,ellipse,anchor=west,inner sep=1pt,minimum width=4ex, rounded corners=1ex] at (current bounding box.north
        west){#1};
      \end{scope}
    }},
}
\tikzset{l3 switch/.style={
    parallelepiped,fill=switch, draw=white,
    minimum width=0.75cm,
    minimum height=0.75cm,
    parallelepiped offset x=1.75mm,
    parallelepiped offset y=1.25mm,
    path picture={
      \node[fill=white,
      circle,
      minimum size=6pt,
      inner sep=0pt,
      append after command={
        \pgfextra{
          \foreach \angle in {0,45,...,360}
          \draw[-latex,fill=white] (\tikzlastnode.\angle)--++(\angle:2.25mm);
        }
      }
      ]
      at ([xshift=-0.75mm,yshift=-0.5mm]path picture bounding box.center){};
    }
  },
  ports/.style={
    line width=0.3pt,
    top color=gray!20,
    bottom color=gray!80
  },
  rack switch/.style={
    parallelepiped,fill=white, draw,
    minimum width=1.25cm,
    minimum height=0.25cm,
    parallelepiped offset x=2mm,
    parallelepiped offset y=1.25mm,
    xscale=-1,
    path picture={
      \draw[top color=gray!5,bottom color=gray!40]
      (path picture bounding box.south west) rectangle
      (path picture bounding box.north east);
      \coordinate (A-west) at ([xshift=-0.2cm]path picture bounding box.west);
      \coordinate (A-center) at ($(path picture bounding box.center)!0!(path
      picture bounding box.south)$);
      \foreach \x in {0.275,0.525,0.775}{
        \draw[ports]([yshift=-0.05cm]$(A-west)!\x!(A-center)$)
        rectangle +(0.1,0.05);
        \draw[ports]([yshift=-0.125cm]$(A-west)!\x!(A-center)$)
        rectangle +(0.1,0.05);
      }
      \coordinate (A-east) at (path picture bounding box.east);
      \foreach \x in {0.085,0.21,0.335,0.455,0.635,0.755,0.875,1}{
        \draw[ports]([yshift=-0.1125cm]$(A-east)!\x!(A-center)$)
        rectangle +(0.05,0.1);
      }
    }
  },
  server/.style={
    parallelepiped,
    fill=white, draw,
    minimum width=0.35cm,
    minimum height=0.75cm,
    parallelepiped offset x=3mm,
    parallelepiped offset y=2mm,
    xscale=-1,
    path picture={
      \draw[top color=gray!5,bottom color=gray!40]
      (path picture bounding box.south west) rectangle
      (path picture bounding box.north east);
      \coordinate (A-center) at ($(path picture bounding box.center)!0!(path
      picture bounding box.south)$);
      \coordinate (A-west) at ([xshift=-0.575cm]path picture bounding box.west);
      \draw[ports]([yshift=0.1cm]$(A-west)!0!(A-center)$)
      rectangle +(0.2,0.065);
      \draw[ports]([yshift=0.01cm]$(A-west)!0.085!(A-center)$)
      rectangle +(0.15,0.05);
      \fill[black]([yshift=-0.35cm]$(A-west)!-0.1!(A-center)$)
      rectangle +(0.235,0.0175);
      \fill[black]([yshift=-0.385cm]$(A-west)!-0.1!(A-center)$)
      rectangle +(0.235,0.0175);
      \fill[black]([yshift=-0.42cm]$(A-west)!-0.1!(A-center)$)
      rectangle +(0.235,0.0175);
    }
  },
}
\tikzset{cross/.style={cross out, draw=black, minimum size=2*(#1-\pgflinewidth), inner sep=0pt, outer sep=0pt},
  % default radius will be 1pt.
  cross/.default={1pt}}
\tikzset{%
  interface/.style={draw, rectangle, rounded corners, font=\LARGE\sffamily},
  ethernet/.style={interface, fill=yellow!50},% ethernet interface
  serial/.style={interface, fill=green!70},% serial interface
  speed/.style={sloped, anchor=south, font=\large\sffamily},% line speed at edge
  route/.style={draw, shape=single arrow, single arrow head extend=4mm,
    minimum height=1.7cm, minimum width=3mm, white, fill=switch!20,
    drop shadow={opacity=.8, fill=switch}, font=\tiny}% inroute/outroute arrows
}
\tikzset{
%    -Latex,auto,node distance =1 cm and 1 cm,semithick,
    state/.style ={ellipse, draw, minimum width = 0.7 cm},
    point/.style = {circle, draw, inner sep=0.04cm,fill,node contents={}},
    bidirected/.style={Latex-Latex,dashed}
%    el/.style = {inner sep=2pt, align=left, sloped}
}
\newcommand*{\shift}{1.3cm}% For placing the arrows later
\newcommand*{\router}[1]{
  \begin{tikzpicture}
    \coordinate (ll) at (-3,0.5);
    \coordinate (lr) at (3,0.5);
    \coordinate (ul) at (-3,2);
    \coordinate (ur) at (3,2);
    \shade [shading angle=90, left color=switch, right color=white] (ll)
    arc (-180:-60:3cm and .75cm) -- +(0,1.5) arc (-60:-180:3cm and .75cm)
    -- cycle;
    \shade [shading angle=270, right color=switch, left color=white!50] (lr)
    arc (0:-60:3cm and .75cm) -- +(0,1.5) arc (-60:0:3cm and .75cm) -- cycle;
    \draw [thick] (ll) arc (-180:0:3cm and .75cm)
    -- (ur) arc (0:-180:3cm and .75cm) -- cycle;
    \draw [thick, shade, upper left=switch, lower left=switch,
    upper right=switch, lower right=white] (ul)
    arc (-180:180:3cm and .75cm);
    \node at (0,0.5){\color{blue!60!black}\Huge #1};% The name of the router
    % The four arrows, symbols for incoming and outgoing routes:
    \begin{scope}[yshift=2cm, yscale=0.28, transform shape]
      \node[route, rotate=45, xshift=\shift] {\strut};
      \node[route, rotate=-45, xshift=-\shift] {\strut};
      \node[route, rotate=-135, xshift=\shift] {\strut};
      \node[route, rotate=135, xshift=-\shift] {\strut};
    \end{scope}
  \end{tikzpicture}}
  \def\tikz@shading{cloud}\tikz@addmode{\tikz@mode@shadetrue}}
\tikzset{my cloud/.style={
    cloud, draw, aspect=2,
    cloud color={gray!5!white}
  }
}
\renewcommand*{\fnum@model}{\fname@model}
\newcommand{\customlabel}[2]{%
   \protected@write \@auxout {}{\string \newlabel {#1}{{#2}{\thepage}{#2}{#1}{}} }%
   \hypertarget{#1}{#2}
}
\newcommand\xlabel[2][]{\phantomsection\def\@currentlabelname{#1}\label{#2}}
\begin{document}
\bstctlcite{MyBSTcontrol}
\title{Optimal Defender Strategies for CAGE-2 \\using Causal Modeling and Tree Search}
\author{\IEEEauthorblockN{Kim Hammar \IEEEauthorrefmark{2}, Neil Dhir \IEEEauthorrefmark{3}, and Rolf Stadler\IEEEauthorrefmark{2}}\\
 \IEEEauthorblockA{\IEEEauthorrefmark{2}
   KTH Royal Institute of Technology, Sweden\\
 \IEEEauthorblockA{\IEEEauthorrefmark{3}
   Siemens Technology, USA\\
 }
 Email: \{kimham, stadler\}@kth.se}, neil.dhir@siemens.com\\\
\today
}
\maketitle
\begin{abstract}
The \textsc{cage-2} challenge is considered a standard benchmark to compare methods for autonomous cyber defense. Current state-of-the-art methods evaluated against this benchmark are based on model-free (offline) reinforcement learning, which does not provide provably optimal defender strategies. We address this limitation and present a formal (causal) model of \textsc{cage}-2 together with a method that produces a provably optimal defender strategy, which we call Causal Partially Observable Monte-Carlo Planning (\textsc{c-pomcp}). It has two key properties. First, it incorporates the causal structure of the target system, i.e., the causal relationships among the system variables. This structure allows for a significant reduction of the search space of defender strategies. Second, it is an online method that updates the defender strategy at each time step via tree search. Evaluations against the \textsc{cage-2} benchmark show that \textsc{c-pomcp} achieves state-of-the-art performance with respect to effectiveness and is two orders of magnitude more efficient in computing time than the closest competitor method.
%This finding advocates for the integration of causal structure into methods for autonomous cyber defense.
%and shows that, tree search, which has been successfully in other domains, can be effective in the context of autonomous cyber defense.p
\end{abstract}
\begin{IEEEkeywords}
cybersecurity, network security, causal inference, \textsc{scm}, \textsc{apt}, \textsc{cage}-2, \textsc{pomdp}, intrusion response.
\end{IEEEkeywords}
\section{Introduction}
\let\thefootnote\relax\footnotetext{\textsc{Distribution Statement A} (Approved for Public Release, Distribution Unlimited). This research is supported by the Defense Advanced Research Project Agency (\textsc{darpa}) through the \textsc{castle} program under Contract No. \texttt{W912CG23C0029}. The views, opinions, and/or findings expressed are those of the authors and should not be interpreted as representing the official views or policies of the Department of Defense or the U.S. Government.}
Domain experts have traditionally defined and updated an organization's security strategy. Though this approach can offer basic security for an organization’s IT infrastructure, a growing concern is that infrastructure update cycles become shorter and attacks increase in sophistication. To address this challenge, significant research efforts to automate the process of obtaining effective security strategies have started \cite{nework_security_alpcan,Miehling_control_theoretic_approaches_summary, control_rl_reviews}. A driving factor behind this research is the development of evaluation benchmarks, which allow researchers to compare the performance of different methods. Presently, the most popular benchmark is the Cyber Autonomy Gym for Experimentation 2 (\textsc{cage-2}) \cite{cage_challenge_2_announcement}, which involves defending a networked system against an Advanced Persistent Threat (\apt), see Fig. \ref{fig:use_case}.

At the time of writing, more than $30$ methods have been evaluated against \textsc{cage-2} \cite{cage_challenge_2_announcement}. Detailed descriptions of some methods can be found in \cite{vyas2023automated,wolk2022cage,alan_turing_1, foley_cage_1, foley2023inroads,sussex_1, TANG2024103871,kiely2023autonomous,Richer2023,cage_cognitive,doi:10.1177/1071181322661504,tabular_Q_andy,wiebe2023learning,10476122,yan2024depending,cheng2024rice,llm_cage_2_5}. While good results have been obtained, key aspects remain unexplored. For example, current methods are narrowly focused on \textit{offline} reinforcement learning and require a lengthy training phase to obtain effective strategies. Further, these methods are \textit{model-free} and do not provide provably optimal strategies. In addition, current methods provide limited ways to include domain expertise in the learning process, though attempts have been made with reward shaping \cite{alan_turing_1}.

\begin{figure}
  \centering
  \scalebox{1.2}{
    \input{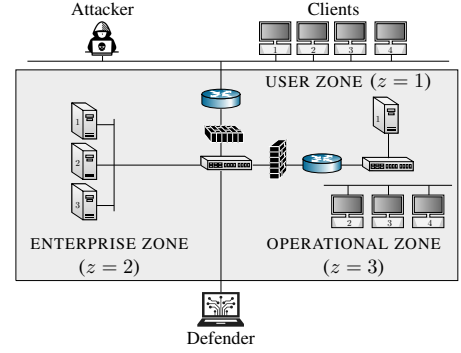}
  }
  \caption{The \textsc{cage}-2 scenario \cite{cage_challenge_2_announcement}: a defender aims to protect a networked system against an Advanced Persistent Threat (\apt{}) caused by an attacker while maintaining services for clients; the system configuration is listed in Appendix \ref{appendix:infrastructure_configuration}.}
  \label{fig:use_case}
\end{figure}
% Further, despite the growing interest in \textsc{cage-2}, no prior work has provided a formal model of the \textsc{cage-2} scenario.
%Using this model, we formulate the \textsc{cage-2} benchmark as the problem of finding an optimal \textit{intervention strategy} for the defender \cite[Def. 3.2.1]{pearl2000causality}.
In this paper, we address the above limitations and use the \textsc{cage-2} scenario to illustrate and evaluate our solution method. First, we develop a formal (causal) model of \textsc{cage-2}, which allows us to define and prove the existence of an optimal defender strategy. This model is based on the source code of \textsc{cage-2} and is formalized as a Structural Causal Model (\scm) \cite[Def 7.1.1]{pearl2000causality}. We prove that this \scm is equivalent to a specific Partially Observed Markov Decision Process (\pomdp) \cite[P.1]{ASTROM1965174}. Compared to the \pomdp{}, our \scm offers a more expressive representation of the underlying causal structure, allowing us to understand the causal effects of defender strategies \cite[Def. 3.2.1]{pearl2000causality}.

Second, we design an online method that produces a \textit{provably optimal} defender strategy, which we call \textit{Causal Partially Observable Monte-Carlo Planning (\textsc{c-pomcp})}. The method has two key properties: (1) it incorporates causal information of the target system in the form of a \textit{causal graph} \cite[Def. 2.2.1]{pearl2000causality}, which allows us to prune the search space of defender strategies; and (2) it is an \textit{online} method that updates the defender strategy at each time step via \textit{tree search}.

Our causal model represents one of many ways of formally modeling \textsc{cage-2}. A key question is the level of abstraction at which \textsc{cage-2} is modeled. The more detailed we construct the model, the closer it can capture the \textsc{cage-2} implementation. However, this comes at the expense of higher computational complexity and lower generalization ability. When balancing this trade-off, we follow the principle that a model should be detailed enough so that a theoretically optimal defender strategy exhibits state-of-the-art performance in a practical implementation \cite{cage_challenge_2_announcement}.

We evaluate \textsc{c-pomcp} against the \textsc{cage-2} benchmark and show that it achieves state-of-the-art effectiveness while being two orders of magnitude more computationally efficient than the closest competitor method: \textsc{cardiff-ppo} \cite{vyas2023automated}. The evaluation results also show that \textsc{c-pomcp} performs significantly better than its non-causal version: \textsc{pomcp} \cite[Alg. 1]{pomcp}. While prior work has focused on offline methods that require hours of training, \textsc{c-pomcp} produces equally effective defender strategies through less than $15$ seconds of online search.

Our contributions can be summarized as follows:
\begin{itemize}
\item We present a causal model of the \textsc{cage-2} scenario (\hyperref[pos]{M1}). This model allows us to define and prove the existence of optimal defender strategies (Thm. \ref{thm:well_defined_prob}).
\item We design \textsc{c-pomcp}, an online method that leverages the causal structure of the target system to efficiently find an optimal defender strategy (Alg. \ref{alg:c_pomcp}). \textsc{c-pomcp} includes a novel approach to leverage causal information for tree search, which may be of independent interest. The code is available at \cite{csle_docs}.
\item We prove that \textsc{c-pomcp} converges to an optimal strategy with increasing search time (Thm. \ref{thm:pomcp_convergence}).
\item We evaluate \textsc{c-pomcp} against the \textsc{cage-2} benchmark. The results show that \textsc{c-pomcp} outperforms the current state-of-the-art methods in effectiveness and performs significantly better in computational efficiency \cite{vyas2023automated}.
\end{itemize}
\section{Related Work}\label{sec:related_work}
Autonomous cyber defense is an active area of research that uses concepts and methods from various fields (see Fig. \ref{fig:related_work}): reinforcement learning \cite{control_rl_reviews,4725362,janisch2023nasimemu,hammar_stadler_cnsm_20,hammar_stadler_cnsm_21,hammar_stadler_cnsm_22,hammar_stadler_tnsm,hammar_stadler_tnsm_23,kim_gamesec23,9833086,kunz2023multiagent,foley2023inroads,tifs_hlsz_extended,Malialis2013MultiagentRT,nyberg2023training,kiely2023autonomous,Richer2023,tabular_Q_andy, wiebe2023learning,10476122,cheng2024rice,TANG2024103871}, control theory \cite{feedback_control_computing_systems,Miehling_control_theoretic_approaches_summary,7011201,dsn24_hammar_stadler,li2024conjectural,Kreidl2004FeedbackCA,scada_control_example,8941015,control_ddos_10}, causal inference \cite{causal_neil_agent,highnam2023adaptive,7946131,8999155}, game theory \cite{nework_security_alpcan,tambe,levente_book,9087864,5270307,dynamic_game_linan_zhu,apt_rl_simulation,DBLP:journals/compsec/HorakBTKK19,posg_cyber_deception_network_epidemic,stocahstic_games_security_indep_nodes_nguyen_alpcan_basar,9328143,ddos_game_smart_grid}, rule-based systems \cite{playbook_response, snort, trellix,adepts_irs,1006572,pbft}, large language models \cite{rigaki2023cage,moskal2023llms,yan2024depending,llm_cage_2_5}, evolutionary computation \cite{hemberg_oreily_evo,cyberevo_hemberg,armsrace_malware,evo_apt,evo_ddos,4024067}, and general optimization \cite{7127023,6514999,wang2024intrusion,quan_opt_5,7054460}. Several of these works use the \textsc{cage-2} benchmark \cite{cage_challenge_2_announcement} for evaluating their methods, see for example \cite{vyas2023automated,wolk2022cage,alan_turing_1, foley_cage_1, foley2023inroads,sussex_1,TANG2024103871,kiely2023autonomous,Richer2023,tabular_Q_andy,wiebe2023learning,10476122,cheng2024rice,yan2024depending,llm_cage_2_5}. The best benchmark performance is achieved by those methods that are based on deep reinforcement learning, where the current state-of-the-art methods use Proximal Policy Optimization (\textsc{ppo}) \cite[Alg. 1]{ppo}\cite{vyas2023automated}.

To our knowledge, no prior work has provided a formal model of \textsc{cage-2}, nor considered tree search for finding effective defender strategies. Moreover, the only prior works that use causal inference are \cite{causal_neil_agent,highnam2023adaptive,7946131,8999155}. This paper differs from them in two ways. First, the studies presented in \cite{highnam2023adaptive,7946131,8999155} use causality for analyzing the effects of attacks and countermeasures but do not present methods for finding defender strategies. Second, the method for finding defender strategies in \cite{causal_neil_agent} uses Bayesian optimization and is \textit{myopic}, i.e., it does not consider the future when selecting strategies. While this approach simplifies computations, the method is sub-optimal for most practical scenarios. By contrast, our method is non-myopic and produces optimal strategies (Thm. \ref{thm:pomcp_convergence}).
\begin{figure}
  \centering
  \scalebox{0.65}{
    \input{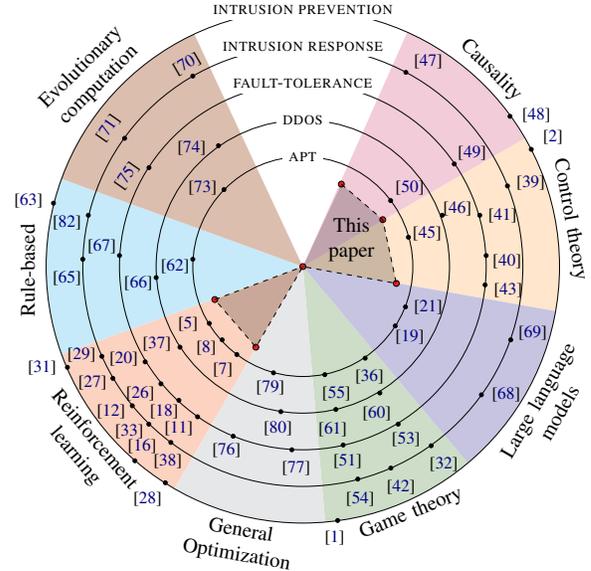}
  }
\caption{Related work on autonomous cyber defense; this paper addresses \apt defense using causality, control theory, and reinforcement learning.}
  \label{fig:related_work}
\end{figure}
\section{The cage-2 Scenario}\label{sec:use_case}
The \textsc{cage-2} scenario involves defending a networked system against \apt{}s \cite{cage_challenge_2_announcement}. The operator of the system, which we call the defender, takes measures to protect it against an attacker while providing services to a client population (see Fig. \ref{fig:use_case}). The system is segmented into \textit{zones} with \textit{nodes} (servers and workstations) that run network services. Services are realized by \textit{workflows} that are accessed by clients through a gateway, which is also open to the attacker. The detailed system configuration can be found in Appendix \ref{appendix:infrastructure_configuration}.

The attacker aims to intrude on the system and disrupt service for clients. To achieve this goal, it can take five actions: scan the network to discover nodes; exploit a vulnerability to compromise a node; perform a brute force attack to obtain login credentials to a node; escalate privileges on a compromised node to gain root access; and disrupt the service on a compromised node.
%These five action types constitute a so-called \textit{cyber kill chain} \cite{kill_chain}.

The defender monitors the system through log files and network statistics. It can make four types of \textit{interventions} on a node to prevent, detect, and respond to attacks: analyze the node for a possible intrusion; start a decoy service on the node; remove malware from the node; and restore the node to a secure state, which temporarily disrupts its service. When deciding between these interventions, the defender balances two conflicting objectives: maximize service utility towards its clients and minimize the cost of attacks.
\section{Causal Inference Preliminaries}
This section covers notation and provides an overview of causal inference, which lays the foundation for the subsequent section, where we deduce a causal model of the \textsc{cage-2} scenario.

\vspace{2mm}

\noindent\textbf{Notation.} Random variables are denoted by upper-case letters (e.g., $X$), their values by lower-case letters (e.g., $x$), and their domains by $\mathrm{dom}(\cdot)$. $X \indep Y$ means that $X$ and $Y$ are independent. $P$ is a probability measure. (Since we focus on countable sample spaces, the construction of the underlying probability space is standard.) The expectation of $f$ with respect to $X$ is written as $\mathbb{E}_X[f]$, and $x \sim f$ means that $x$ is sampled from $f$. (As the sample spaces are countable, no question of the existence of $\mathbb{E}_X[f]$ will arise.) If $f$ includes many random variables that depend on $\pi$, we simply write $\mathbb{E}_{\pi}[f]$. We use $P(x)$ as a shorthand for $P(X=x)$. Random vectors (or sets) and their values are written in bold upper-case and lower-case letters, respectively (e.g., $\X$ and $\x$). A (column) vector is written as $(x_1, x_2,\hdots)$ and a $1$-dimensional vector as $(x_1,)$. $\mathbbm{1}$ is the indicator function. $\pa{X}{\graph}, \ch{X}{\graph}, \an{X}{\graph}$ and $\de{X}{\graph}$ denote parents, children, ancestors, and descendants of node $X$ in a graph $\graph$. $\mathcal{G}[\mathbf{V}]$ denotes the subgraph obtained by restricting $\mathcal{G}$ to the nodes in $\mathbf{V}$. Further notation is listed in Table \ref{tab:notation}.

\begin{table}
  \centering
  \scalebox{0.87}{
  \begin{tabular}{ll} \toprule
    {\textit{Notation(s)}} & {\textit{Description}} \\ \midrule
    $\mathcal{G}_{\mathrm{S}}$, $\mathcal{G}_{\mathrm{W}}$ & System graph and workflow graph (\S \ref{sec:infrastructucture})\\
    $\mathcal{V}$, $\mathcal{E}$, $\mathcal{Z}$ & Set of nodes and edges in $\mathcal{G}_{\mathrm{S}}$, set of zones (\S \ref{sec:infrastructucture})\\
    $\mathbf{D}_t, \mathbf{I}_t, \mathbf{S}_t$ & Decoy states, intrusion states, and service states (\S \ref{sec:infrastructucture})\\
    $\mathsf{U}, \mathsf{K}, \mathsf{S}$ & The unknown, known, and scanned intrusion states (Fig. \ref{fig:attacker_actions})\\
    $\mathsf{C}, \mathsf{R}$ & The compromised and root intrusion states (Fig. \ref{fig:attacker_actions})\\
    $\mathbf{A}_t, \alpha_{t}$ & Attacker action and attacker action type (\S \ref{sec:attacker})\\
    $V_t, P_t, T_t$ & Vulnerability, privileges, and target of attacker action (\S \ref{sec:attacker})\\
    $\mathtt{S}, \mathtt{E}, \mathtt{P}$ & Scan, exploit, and privilege escalation attacker actions (\S \ref{sec:attacker})\\
    $\mathtt{I}, \mathtt{D}$ & Impact and discover attacker actions (\S \ref{sec:attacker})\\
    $f_{\mathrm{I}}$, $f_{\mathrm{S}}, f_{\mathrm{C}}$ & Causal functions for $I_{i,t}$ (\ref{eq:intrusion_state_fun}), $S_{i,t}$ (\ref{eq:service_state_fun}), and $C_{t}$ (\ref{eq:client_function})\\
    $Z_{i,t}$, $\mathbf{Z}_t$ & Observation of node $i$ and observations for all nodes (\S \ref{sec:observations})\\
    $f_{\mathrm{Z},i}$, $W_{i,t}$, $\mathbf{W}_t$ & Causal function for $Z_{i,t}$ (\ref{eq:obs_function}), noise variable, noise variables \\
    $C_t$, $\mathscr{A}_t$, $\mathscr{D}_t$ & Number of clients, arrivals, and departures (\ref{eq:client_function}) \\
    $\widehat{\mathbf{X}}_t, \widehat{\mathbf{x}}_t,\mathscr{T}$ & Intervention variables and values (\S \ref{sec:defender_interventions}), search operator (\ref{eq:planning_operator})\\
    $\mathrm{do}(\widehat{\mathbf{X}}_t=\widehat{\mathbf{x}}_t)$ & Intervention at time $t$ (\ref{eq:defender_interventions})\\
    $\mathrm{do}(\mathbf{X}^{\star}_t=\mathbf{x}^{\star}_t)$ & Optimal intervention at time $t$ \\
    $R_t$, $J$ & Defender reward at time $t$ and defender objective (\ref{eq:defender_objective}) \\
    $f_{\mathrm{R}}$, $f_{\mathrm{J}}, f_{\mathrm{D}}$ & Causal functions of $R_t$, $J$ (\ref{eq:defender_objective}), and $\mathbf{D}_{i,t}$ \\
    $\U, \V$ & Exogeneous and endogenous variables (\hyperref[pos]{M1})\\
    $\X_{t}, \N_t$ & Manipulative, non-manipulative variables (\S \ref{sec:scm})\\
   $\mathbf{O}_{t}, \mathbf{L}_t, f_{\mathrm{A}}$ & Observed and latent variables (\S \ref{sec:scm}), causal function of $\mathbf{A}_{t}$ (\ref{eq:attacker_function})\\
    $\mathbf{Y}_t$, $\mathcal{D}$ & Target variables (\S \ref{sec:scm}) at time $t$, set of decoys (Appendix \ref{appendix:infrastructure_configuration})\\
    $\mathscr{M},\mat{F}$ & \scm and causal functions (\hyperref[pos]{M1})\\
    $\mathbf{o}_t, \mathcal{T}$ & Observation (\ref{eq:feedback}) and time horizon (\ref{eq:defender_objective})\\
    $\mathbf{H}_{t}, \mathbf{h}_t, \gamma$ & History and its realization (\ref{eq:history_def}), discount factor \\
    $\pi_{\mathrm{A}}, \pi_{\mathrm{D}} \in \Pi$ & Attacker and defender strategies (\S \ref{sec:attacker} and \S \ref{sec:defender_interventions}) \\
    $\pi^{\star}_{\mathrm{D}},\mathfrak{R}$ & Optimal defender strategy (Thm. \ref{thm:well_defined_prob}), cumulative regret (\ref{eq:regret})\\
    $q_t, \psi_{z_i}, \beta_{z_i}$ & Parameters in the defender objective (\ref{eq:defender_objective}) \\
    $\widehat{\mathbf{b}}_t, \mathbf{P}_{\mathcal{G}}^{\star}$ & Belief state (\ref{eq:belief_def}), the set of \pomis{}s (Def. \ref{def:pomis})\\
    $\mathbf{\Sigma}_t$, $\bm{\sigma}, \mathcal{G}$ & Markov state and its realization (Thm. \ref{thm:well_defined_prob}), causal graph (Fig. \ref{fig:causal_diagram})\\
    $M, s_{\mathrm{T}}, c$ & Number of particles, search time, exploration parameter (\ref{eq:ucb}) \\
    \bottomrule\\
  \end{tabular}}
  \caption{Notation.}\label{tab:notation}
\end{table}
\subsection{Structural Causal Models}\label{sec:scms_intro}
A Structural Causal Model (\scm{}) is defined as
\begin{align}
\mathscr{M} \triangleq \left\langle \U,\V,\mat{F}, \myP{\U} \right\rangle,\label{eq:scm_def}
\end{align}
where $\U$ is a set of \textit{exogenous} random variables and $\mat{V}$ is a set of \textit{endogenous} random variables \cite[Def 7.1.1]{pearl2000causality}. Within $\V \cup \U$ we distinguish between five subsets that may overlap: the set of \textit{manipulative} variables $\X$; \textit{non-manipulative} $\N$; \textit{observed} $\mathbf{O}$; \textit{latent} $\mathbf{L}$; and \textit{targets} $\Y$. (\scm{}s with latent variables are called ``partially observable'' \cite[Def. 1]{10.5555/3495724.3496752}.)

An \scm induces a \textit{causal graph} $\graph$ \cite[Def. 2.2.1]{pearl2000causality}, where nodes correspond to $\V \cup \U$ and edges represent (causal) functions $\mat{F} \triangleq \{f_i\}_{V_i \in \mat{V}}$. A function $f_i$ is a mapping from the domains of a subset $\mathbf{K} \subseteq (\mathbf{U} \cup \pa{V_i}{\graph})$ to the domain of $V_i$, which is represented graphically by directed edges from the nodes in $\mathbf{K}$ to $V_i$ (see Fig. \ref{fig:causal_graphs_intro}). If each function is independent of time, the \scm{} is said to be \textit{stationary}.
%and if both $\mat{U}$ and $\mat{V}$ are finite, the \scm{} is said to be finite.

Causal graphs with latent variables can be drawn in two ways (cf. Fig. \ref{fig:causal_graphs_intro}.a and Fig. \ref{fig:causal_graphs_intro}.b). One option is to include the latent variables in the graph (Fig. \ref{fig:causal_graphs_intro}.a). Another option is to represent the latent variables with bidirected edges, where a bidirected edge between two observed variables means that they share an \textit{unobserved confounder} \cite[Def. 6.2.1]{pearl2000causality}, i.e., a latent variable that influences both of them (Fig. \ref{fig:causal_graphs_intro}.b).

\begin{figure}
  \centering
  \scalebox{1.1}{
    \begin{tikzpicture}[fill=white, >=stealth,
    node distance=3cm,
    database/.style={
      cylinder,
      cylinder uses custom fill,
      shape border rotate=90,
      aspect=0.25,
      draw}]

    \tikzset{
node distance = 9em and 4em,
sloped,
   box/.style = {%
    shape=rectangle,
    rounded corners,
    draw=blue!40,
    fill=blue!15,
    align=center,
    font=\fontsize{12}{12}\selectfont},
 arrow/.style = {%
    line width=0.1mm,
    -{Triangle[length=5mm,width=2mm]},
    shorten >=1mm, shorten <=1mm,
    font=\fontsize{8}{8}\selectfont},
}

\node[scale=0.7] (gi) at (3,0)
{
\begin{tikzpicture}
\node[draw,circle, minimum width=0.8cm, scale=0.7](x) at (0,0) {};
\node[draw,circle, minimum width=0.8cm, scale=0.7](z) at (1.25,0) {};
\node[draw,circle, minimum width=0.8cm, scale=0.7](y) at (2.5,0) {};
\draw[{Latex[length=2mm]}-{Latex[length=2mm]}, line width=0.22mm, color=black, dashed, bend left=50] (x) to (y);
\draw[-{Latex[length=2mm]}, line width=0.22mm, color=black] (x) to (z);
\draw[-{Latex[length=2mm]}, line width=0.22mm, color=black] (z) to (y);
\node[inner sep=0pt,align=center, scale=0.8] (dots4) at (0,0) {$X$};
\node[inner sep=0pt,align=center, scale=0.8] (dots4) at (1.25,0) {$Z$};
\node[inner sep=0pt,align=center, scale=0.8] (dots4) at (2.5,0) {$Y$};
\end{tikzpicture}
};

\node[scale=0.7] (gi) at (0,0)
{
  \begin{tikzpicture}
\node[draw,circle, minimum width=0.8cm, scale=0.7](x) at (0,0) {};
\node[draw,circle, minimum width=0.8cm, scale=0.7](z) at (1.25,0) {};
\node[draw,circle, minimum width=0.8cm, scale=0.7](y) at (2.5,0) {};
\node[draw,circle, minimum width=0.8cm, scale=0.7, fill=black!12](u) at (1.25,0.75) {};
\draw[-{Latex[length=2mm]}, line width=0.22mm, color=black] (x) to (z);
\draw[-{Latex[length=2mm]}, line width=0.22mm, color=black] (z) to (y);
\draw[-{Latex[length=2mm]}, line width=0.22mm, color=black,dashed] (u) to (y);
\draw[-{Latex[length=2mm]}, line width=0.22mm, color=black,dashed] (u) to (x);
\node[inner sep=0pt,align=center, scale=0.8] (dots4) at (1.25,0.75) {$U$};
\node[inner sep=0pt,align=center, scale=0.8] (dots4) at (0,0) {$X$};
\node[inner sep=0pt,align=center, scale=0.8] (dots4) at (1.25,0) {$Z$};
\node[inner sep=0pt,align=center, scale=0.8] (dots4) at (2.5,0) {$Y$};
\end{tikzpicture}
};

\node[inner sep=0pt,align=center, scale=0.65] (dots4) at (0,-0.75) {a)};
\node[inner sep=0pt,align=center, scale=0.65] (dots4) at (3,-0.75) {b)};

\end{tikzpicture}
  }
  \caption{Causal graphs \cite[Def. 2.2.1]{pearl2000causality}; circles represent variables in an \scm (\ref{eq:scm_def}); solid arrows represent causal relations, and dashed edges represent effects caused by latent variables; latent variables can either be represented with shaded circles or with bidirected dashed edges, i.e., the graphs in a) and b) represent the same causal structure.}
  \label{fig:causal_graphs_intro}
\end{figure}
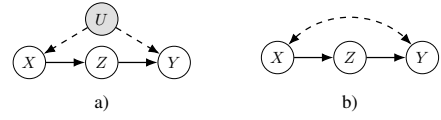

We say that $P(\mathbf{V})$ is Markov relative to $\mathcal{G}$ if it admits the following factorization \cite[Def. 1.2.2, Thm. 1.2.7]{pearl2000causality}
\begin{align}
P(\V) = \prod_{i=1}^{|\V|}P(\V_{i} \mid \mathrm{pa}(\V_{i})_{\mathcal{G}}). \label{eq:parental_markov}
\end{align}
Similarly, we say that an \scm is Markov if it induces a distribution over the observables $\mathbf{O}$ that satisfies (\ref{eq:parental_markov}) \cite[Thm. 1.4.1]{pearl2000causality}. If the \scm is not Markov and $\mathcal{G}$ is acyclic, we say that it is \textit{semi}-Markov \cite[Ch. 3]{pearl2000causality}.

\subsection{Interventions and Causal Effect Identifiability}
The operator $\mathrm{do}(\mathbf{X}=\mathbf{x})$ represents an \textit{atomic intervention} that fixes a set of endogenous variable(s) $\mathbf{X}$ to constant value(s) $\mathbf{x}$ irrespective of the functions $\mathbf{F}$ \cite[Def. 3.2.1]{pearl2000causality}. Similarly, $\mathrm{do}(\mathbf{X}=\pi(\mathbf{O}))$ represents a \textit{conditional intervention}, whereby the function(s) $\{f_{i}\}_{i\in \mathbf{X}}$ are replaced with a deterministic function $\pi$ of the observables. We call such a function an \textit{intervention strategy}.

The standard way to estimate causal effects of interventions \cite[Def. 3.2.1]{pearl2000causality} is through controlled experiments \cite{fisher:1935}. In practice, however, experimentation can be costly and is often not feasible in operational systems. This leads to the fundamental question of whether causal effects can be estimated only from observations. Such estimation can be performed using Pearl's \textit{do-calculus}, which is an axiomatic system for replacing expressions that contain the do-operator with conditional probabilities \cite[Thm. 3.4.1]{pearl2000causality}.

In case an \scm includes latent variables \cite[Def. 2.3.2]{pearl2000causality}, the question of \textit{identifiability} arises:
\begin{definition}[Causal effect identifiability \protect{\cite[Def. 3.2.4]{pearl2000causality}}]\label{def:ID}
The causal effect \cite[Def. 3.2.1]{pearl2000causality} of $\mathrm{do}(\mathbf{X}=\pi(\mathbf{O}))$ on $Y$ is identifiable from $\graph$ if $P(Y \mid \mathrm{do}(\mathbf{X}=\pi(\mathbf{O})), \mathbf{O})$ is \emph{uniquely} computable from $P(\mathbf{O}) > 0$ in every \scm conforming to $\graph$.
\end{definition}
Do-calculus is \textit{complete} in that it allows to derive all identifiable causal effects \cite[Cor. 3.4.2]{pearl2000causality}\cite[Thm. 23]{shpitser2008complete}. Consequently, one can prove identifiability by providing a do-calculus derivation that reduces the causal effect to an expression involving only $P(\mathbf{O})$.
\subsection{Automatic Intervention Control}
The problem of finding a sequence of conditional interventions $\mathrm{do}(\mathbf{X}_1=\pi(\mathbf{O})), \hdots, \mathrm{do}(\mathbf{X}_{\mathcal{T}}=\pi(\mathbf{O}))$ that maximizes a target variable $J$ can be formulated as a \textit{feedback control problem} \cite{bert05}, also known as a \textit{dynamic treatment regime problem} \cite{murphy_dtr}. We say that such a problem is identifiable if the effect on $J$ caused by every intervention strategy $\pi$ is identifiable:
\begin{definition} [Control problem identifiability \protect{\cite[Ch. 4.4]{pearl2000causality}}]\label{def:control_id}
$\text{ }$ A control problem with target $J$ and time horizon $\mathcal{T}$ is identifiable from $\graph$ if $P(J \mid \mathrm{do}(\mathbf{X}_1=\pi(\mathbf{O})), \hdots, \mathrm{do}(\mathbf{X}_{\mathcal{T}}=\pi(\mathbf{O})))$ is identifiable for each intervention strategy $\pi$ (Def. \ref{def:ID}).
\end{definition}

Given a control problem and a causal graph, we can derive \emph{possibly optimal minimal intervention sets} (\pomis{}s):
\begin{definition}[\pomis, adapted from \protect{\cite[Def. 3]{lee2019structural}}]\label{def:pomis}
$\quad\quad\quad$ Given a control problem with target $J$ and a causal graph $\mathcal{G}$, $\tilde{\mathbf{X}} \subseteq \mathbf{X}$ is a \pomis if, for each \scm conforming to $\mathcal{G}$, there is no $\mathbf{X}^{\prime} \subset \tilde{\mathbf{X}}$ such that $\mathbb{E}_{\pi}[J \mid \mathrm{do}(\tilde{\mathbf{X}} = \tilde{\mathbf{x}})] = \mathbb{E}_{\pi}[J \mid \mathrm{do}(\mathbf{X}^{\prime} = \mathbf{x}^{\prime})]$ and there exists an \scm such that
\begin{align}
\mathbb{E}_{\pi^{\star}}[J \mid \mathrm{do}(\tilde{\mathbf{X}}=\tilde{\mathbf{x}})] \geq \mathbb{E}_{\pi^{\star}}[J \mid \mathrm{do}(\mathbf{X}^{\prime}=\mathbf{x}^{\prime})],\label{eq:pomis_condition}
\end{align}
for all $\pi$, $\mathbf{X}^{\prime}$, and $\mathbf{x}^{\prime}$, where $\pi^{\star}$ is an optimal intervention strategy satisfying $\mathbb{E}_{\pi^{\star}}[J] \geq \mathbb{E}_{\pi}[J] \text{ }\forall \pi$.
\end{definition}
Let $\mathbf{P}_{\mathcal{G}}^{\star}$ denote the set of \pomis{}s for a causal graph $\mathcal{G}$. $\mathbf{P}_{\mathcal{G}}^{\star}$ for two example graphs are shown in Fig. \ref{fig:pomis}. As can be seen in Fig. \ref{fig:pomis}.a, when $\mathcal{G}$ is Markovian, and all variables except the target are manipulative, the only \pomis is the set of parents of the target \cite[Prop. 2]{lee2019structural}. When there are unobserved confounders \cite[Def. 6.2.1]{pearl2000causality}, however, $\mathbf{P}_{\mathcal{G}}^{\star}$ generally includes many more sets, as shown in Fig. \ref{fig:pomis}.b. An algorithm for computing $\mathbf{P}_{\mathcal{G}}^{\star}$ can be found in \cite[Alg. 1]{lee2019structural}.
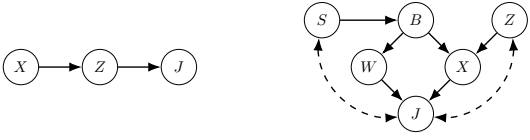
\begin{figure}
  \centering
  \scalebox{1.20}{
    \begin{tikzpicture}[fill=white, >=stealth,
    node distance=3cm,
    database/.style={
      cylinder,
      cylinder uses custom fill,
      shape border rotate=90,
      aspect=0.25,
      draw}]

    \tikzset{
node distance = 9em and 4em,
sloped,
   box/.style = {%
    shape=rectangle,
    rounded corners,
    draw=blue!40,
    fill=blue!15,
    align=center,
    font=\fontsize{12}{12}\selectfont},
 arrow/.style = {%
    line width=0.1mm,
    -{Triangle[length=5mm,width=2mm]},
    shorten >=1mm, shorten <=1mm,
    font=\fontsize{8}{8}\selectfont},
}

\node[scale=0.7] (gi) at (0,0)
{
\begin{tikzpicture}
\node[draw,circle, minimum width=0.8cm, scale=0.7](x) at (0,0) {$X$};
\node[draw,circle, minimum width=0.8cm, scale=0.7](z) at (1.25,0) {$Z$};
\node[draw,circle, minimum width=0.8cm, scale=0.7](y) at (2.5,0) {$J$};
\draw[-{Latex[length=2mm]}, line width=0.22mm, color=black] (x) to (z);
\draw[-{Latex[length=2mm]}, line width=0.22mm, color=black] (z) to (y);
\end{tikzpicture}
};

\node[scale=0.7] (gi) at (3.5,0)
{
\begin{tikzpicture}[node distance =1.5cm]
\node[draw,circle, minimum width=0.8cm, scale=0.7](s) at (0,0) {$S$};
\node[draw,circle, minimum width=0.8cm, scale=0.7, below right of = s](w) {$W$};
\node[draw,circle, minimum width=0.8cm, scale=0.7, below right of = w](y) {$J$};
\node[draw,circle, minimum width=0.8cm, scale=0.7, above right of = y](x) {$X$};
\node[draw,circle, minimum width=0.8cm, scale=0.7, above right of = x](z) {$Z$};
\node[draw,circle, minimum width=0.8cm, scale=0.7, above right of = w](b) {$B$};

\draw[-{Latex[length=2mm]}, line width=0.22mm, color=black] (s) to (b);
\draw[-{Latex[length=2mm]}, line width=0.22mm, color=black] (b) to (w);
\draw[-{Latex[length=2mm]}, line width=0.22mm, color=black] (b) to (x);
\draw[-{Latex[length=2mm]}, line width=0.22mm, color=black] (w) to (y);
\draw[-{Latex[length=2mm]}, line width=0.22mm, color=black] (x) to (y);
\draw[-{Latex[length=2mm]}, line width=0.22mm, color=black] (z) to (x);
\draw[{Latex[length=2mm]}-{Latex[length=2mm]}, line width=0.22mm, color=black, dashed, bend right=55] (s) to (y);
\draw[{Latex[length=2mm]}-{Latex[length=2mm]}, line width=0.22mm, color=black, dashed, bend left=55] (z) to (y);
\end{tikzpicture}
};

\node[inner sep=0pt,align=center, scale=0.65] (dots4) at (3.2,-0.95) {b) $\mathbf{P}_{\mathcal{G}}^{\star}=\{\emptyset, \{X\}, \{W\}, \{Z\}, \{B,W\}, \{X,W\}, \{Z,W\}\}$};
\node[inner sep=0pt,align=center, scale=0.65] (dots4) at (-0.5,-0.95) {a) $\mathbf{P}_{\mathcal{G}}^{\star}=\{\{Z\}\}$};
\end{tikzpicture}
  }
  \caption{Two causal graphs and the corresponding sets of \pomis{}s (Def. \ref{def:pomis}); $J$ is the target variable, and all other variables are manipulative.}
  \label{fig:pomis}
\end{figure}
\section{Causal Model of the cage-2 Scenario}
We model the \textsc{cage-2} scenario by constructing an \scm (\ref{eq:scm_def}) and formulate the benchmark problem as the problem of finding an optimal intervention strategy for the defender.

\subsection{Target System (Fig. \ref{fig:use_case})}\label{sec:infrastructucture}
We represent the physical topology of the target system as a directed graph $\mathcal{G}_{\mathrm{S}}\triangleq \langle \mathcal{V}, \mathcal{E}\rangle$, where nodes represent servers and workstations; edges represent network connectivity. Each node $i\in \mathcal{V}$ is (permanently) located in a zone $z_i \in \mathcal{Z}$ and has three state variables: an intrusion state $I_{i,t}$, a service state $S_{i,t}$, and a decoy state $\mathbf{D}_{i,t}$.

$I_{i,t}$ takes five values: $\mathsf{U}$ if the node is unknown to the attacker, $\mathsf{K}$ if it is known, $\mathsf{S}$ if it has been scanned, $\mathsf{C}$ if it is compromised, and $\mathsf{R}$ if the attacker has root access (see Fig. \ref{fig:attacker_actions}). Similarly, $S_{i,t}$ takes two values: $1$ if the service provided by node $i$ is accessible for clients, $0$ otherwise. Lastly, the decoy state $\mathbf{D}_{i,t}$ is a vector $(d_{i,t,1}, \hdots, d_{i,t,|\mathcal{D}|})$, where $d_{i,t,j}=1$ if decoy $j$ is active on node $i$, $0$ otherwise. The set of decoys in \textsc{cage-2} is available in Appendix \ref{appendix:infrastructure_configuration} and is denoted by $\mathcal{D}$. The initial state of node $i$ is $(I_{i,1}=\mathsf{U},S_{i,1}=1,\mathbf{D}_{i,1}=\mathbf{0})$.

A \textit{workflow graph} $\mathcal{G}_{\mathrm{W}}$ captures service dependencies among nodes (see Appendix \ref{appendix:infrastructure_configuration}). A directed edge $i\rightarrow j$ in $\mathcal{G}_{\mathrm{W}}$ means that the service provided by node $i$ is used by node $j$.
\subsection{Attacker}\label{sec:attacker}
During each time step, the attacker performs an action $\mathbf{A}_t$, which targets a single node or all nodes in a zone (in case of a scan action). The action is determined by an attacker strategy $\pi_{\mathrm{A}}$. It consists of four components $\mathbf{A}_t\triangleq (\alpha_t, V_t, P_t, T_t)$: $\alpha_t$ is the action type, $V_t$ is the vulnerability, $P_t \in \{(\mathtt{U})\text{ser}, (\mathtt{R})\text{oot}\}$ is the privileges obtained by exploiting the vulnerability, and $T_t$ is the target, which can be either a single node $i \in \mathcal{V}$ or a zone $z \in \mathcal{Z}$.

There are five attack actions: $(\mathtt{S})\text{can}$, which scans the vulnerabilities of a node; $(\mathtt{E})\text{xploit}$, which attempts to exploit a vulnerability of a node; $(\mathtt{P})\text{rivilege escalation}$, which escalates privileges of a compromised node; $(\mathtt{I})\text{mpact}$, which stops the service on a compromised node; and $(\mathtt{D})\text{iscover}$, which discovers the nodes in a zone. These actions have the following causal effects on the intrusion state $I_{i,t}$ and the service state $S_{i,t}$ \cite[Def. 3.2.1]{pearl2000causality}.
\begin{align}
&\mathbf{A}_{t-1}=f_{\mathrm{A}}(\pi_{\mathrm{A}},\{I_{i,t-1}\}_{i \in \mathcal{V}})\label{eq:attacker_function}\\
  &I_{i,t} = f_{\mathrm{I}}(I_{i,t-1}, \mathbf{A}_{t-1}, \mathbf{D}_{i,t}, E_{t}) \triangleq \label{eq:intrusion_state_fun}\\
  &\begin{dcases}
    \mathsf{K} \text{ if }T_{t-1} = z_i,\alpha_{t-1}=\mathtt{D}\\
    \mathsf{K}  \text{ if }T_{t-1} \in \mathrm{pa}(i)_{\mathcal{G}_{\mathrm{W}}},\alpha_{t-1}=\mathtt{P}\\
    \mathsf{S}  \text{ if }T_{t-1}=i,\alpha_{t-1}=\mathtt{S}\\
    \mathsf{C}  \text{ if }T_{t-1}=i,\alpha_{t-1}=\mathtt{E},P_{t-1}=\mathtt{U},\mathbf{D}_{i,V_t,t}=0,E_{t}=1\\
    \mathsf{R}  \text{ if }T_{t-1}=i,\alpha_{t-1}=\mathtt{E},P_{t-1}=\mathtt{R},\mathbf{D}_{i,V_t,t}=0,E_{t}=1\\
    \mathsf{R}  \text{ if }T_{t-1}=i,\alpha_{t-1}=\mathtt{P}\\
    I_{i,t-1}  \text{ otherwise}
  \end{dcases}\nonumber\\
&S_{i, t}= f_{\mathrm{S}}(\mathbf{A}_{t-1}, S_{i,t-1}) \triangleq
                          \begin{dcases}
                            0 & \text{if } T_{t-1}=i,\alpha_{t-1}=\mathtt{I}\\
                            S_{i,t-1} & \text{otherwise},
                          \end{dcases}\label{eq:service_state_fun}
\end{align}
where $E_{t}$ is a binary random variable and $P(E_{t}=1)$ is the probability that an exploit at time $t$ is successful (see Fig. \ref{fig:attacker_actions}).
\begin{figure}
  \centering
  \scalebox{0.85}{
    \begin{tikzpicture}[fill=white, >=stealth,
    node distance=3cm,
    database/.style={
      cylinder,
      cylinder uses custom fill,
      shape border rotate=90,
      aspect=0.25,
      draw}]

\node[scale=0.8] (kth_cr) at (0,2.15)
{
  \begin{tikzpicture}

\node[scale=1] (level1) at (-1.7,-5.6)
{
  \begin{tikzpicture}
\node[draw,circle, minimum width=15mm, scale=0.6](s0) at (0,0) {};
\node[draw,circle, minimum width=15mm, scale=0.6](s1) at (2.5,0) {};
\node[draw,circle, minimum width=15mm, scale=0.6](s2) at (5,0) {};
\node[draw,circle, minimum width=15mm, scale=0.6](s3) at (7.5,0) {};
\node[draw,circle, minimum width=15mm, scale=0.6](s4) at (10,0) {};

\node[inner sep=0pt,align=center, scale=0.85] (time) at (0.07,0)
{
$\mathsf{U}$
};

\node[inner sep=0pt,align=center, scale=0.85] (time) at (2.57,0)
{
$\mathsf{K}$
};
\node[inner sep=0pt,align=center, scale=0.85] (time) at (5.07,0)
{
$\mathsf{S}$
};
\node[inner sep=0pt,align=center, scale=0.85] (time) at (7.57,0)
{
$\mathsf{C}$
};
\node[inner sep=0pt,align=center, scale=0.85] (time) at (10.07,0)
{
$\mathsf{R}$
};

\draw[thick,-{Latex[length=2mm]}] (s0) to (s1);
\draw[thick,-{Latex[length=2mm]}] (s1) to (s2);
\draw[thick,-{Latex[length=2mm]}] (s2) to (s3);
\draw[thick,-{Latex[length=2mm]}, bend left=50] (s2) to (s4);
\draw[thick,-{Latex[length=2mm]}] (s3) to (s4);
%\draw[thick,-{Latex[length=2mm]}, bend left=25] (s3) to (s2);
%\draw[thick,-{Latex[length=2mm]}, bend left=50] (s4) to (s2);

\node[inner sep=0pt,align=center, scale=1] (time) at (1.23,0.23)
{
  $(\mathtt{D})$iscover
};

\node[inner sep=0pt,align=center, scale=1] (time) at (3.75,0.23)
{
  $(\mathtt{S})$can
};
\node[inner sep=0pt,align=center, scale=1] (time) at (7.5,1.02)
{
 $(\mathtt{E})$xploit
};

\node[inner sep=0pt,align=center, scale=0.9] (time) at (8.65,0.42)
{
  $(\mathtt{P})$rivilege\\
  escalation
};

\node[inner sep=0pt,align=center, scale=1] (time) at (6.3,0.2)
{
 $(\mathtt{E})$xploit
};
%\node[inner sep=0pt,align=center, scale=1] (time) at (6.84,-0.65)
%{
%\textit{remove}
%};
%\node[inner sep=0pt,align=center, scale=1] (time) at (7.55,-1.08)
%{
%\textit{restore}
%};
    \end{tikzpicture}
  };
    \end{tikzpicture}
  };

\end{tikzpicture}
  }
  \caption{Transition diagram of the intrusion state $I_{i,t}$ (\ref{eq:intrusion_state_fun}); self-transitions are not shown; disks represent states; arrows represent state transitions; labels indicate conditions for state transition; the initial state is $I_{i,1}=\mathsf{U}$.}
  \label{fig:attacker_actions}
\end{figure}
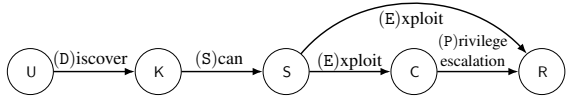

The first two cases in (\ref{eq:intrusion_state_fun}) capture the transition $\mathtt{U} \rightarrow \mathtt{K}$, which occurs when the attacker discovers the zone of node $i$. The third case defines the transition $\mathtt{K} \rightarrow \mathtt{S}$, which happens when the attacker scans the node. The fourth case captures the transition $\mathtt{S} \rightarrow \mathtt{C}$, which occurs when the attacker compromises the node. The fifth and sixth cases define the transitions $\mathtt{S} \rightarrow \mathtt{R}$ and $\mathtt{C} \rightarrow \mathtt{R}$, which occur when the attacker obtains root privileges on the node. The final case captures the recurrent transition $I_{i,t} = I_{i,t-1}$. Lastly, (\ref{eq:service_state_fun}) states that the ($\mathtt{I}$)mpact action disrupts the service.
\begin{remark}
\normalfont
The dependence between (\ref{eq:intrusion_state_fun}) and $\mathcal{G}_{\mathrm{W}}$ is unintuitive but is warranted based on the source code of \textsc{cage-2} \cite{cage_challenge_2_announcement}.
\end{remark}
\subsection{Observations and Clients}\label{sec:observations}
The defender knows the decoy state $\mathbf{D}_{i,t}$ and the service state $S_{i,t}$, but can not observe the intrusion state $I_{i,t}$ nor the attacker action $\mathbf{A}_t$. Instead of $I_{i,t}$ and $\mathbf{A}_t$, the defender observes $Z_{i,t}$, which represents network activity at node $i$.

Like the intrusion state $I_{i,t}$ (\ref{eq:intrusion_state_fun}), the activity $Z_{i,t}$ takes five values: ($\mathsf{U}$)nknown, ($\mathsf{K}$)nown, ($\mathsf{S}$)canned, ($\mathsf{C}$)ompromised, and ($\mathsf{R}$)oot. The value of $Z_{i,t}$ is influenced both by attacker actions and by clients requesting service, which we express as
\begin{align}
Z_{i,t} = f_{\mathrm{Z},i}(C_{t}, \mathbf{A}_{t-1}, W_{i,t}),\label{eq:obs_function}
\end{align}
where $W_{i,t} \in \mathbb{N}$ is a noise variable and $C_t$ represents the number of clients requesting service at time $t$, which is determined as
\begin{align}
C_{t} = f_{\mathrm{C}}(C_{t-1}, \mathscr{A}_{t}, \mathscr{D}_{t}) \triangleq \max\left[0, C_{t-1} + \mathscr{A}_{t} - \mathscr{D}_{t}\right],\label{eq:client_function}
\end{align}
where $\mathscr{A}_{t}$ and $\mathscr{D}_t$ are the number of clients that arrive and depart in the time interval $[t-1,t]$, respectively.

\subsection{Defender Objective}\label{sec:defender_objective}
The defender balances two objectives: maintain services to its clients and minimize the cost of intrusion. In \textsc{cage-2}, this bi-objective corresponds to maximizing
\begin{align}
J &= f_{\mathrm{J}}(\{R_t\mid 1 \leq t \leq \mathcal{T}\}) \triangleq \sum_{t=1}^{\mathcal{T}}\gamma^{t-1} R_t\label{eq:defender_objective}\\
R_t &= f_{\mathrm{R}}(\{I_{i,t}, S_{i,t}\}_{i\in\mathcal{V}}) \triangleq \overbrace{-q_t + \sum_{i \in \mathcal{V}}\psi_{z_i}(S_{i,t}-1)}^{\text{intervention \& downtime cost}} \overbrace{-\beta_{z_i,I_{i,t}},}^{\text{intrusion cost}}\nonumber
\end{align}
where $R_t$ is the reward at time $t$; $q_t \geq 0$ is the intervention cost; $\psi_{z_i} \geq 0$ is the cost of service disruption in zone $z_i$; $\beta_{I_{i,t},z_i} \geq 0$ is the cost of intrusion in zone $z_i$; $\gamma \in [0,1]$ is a discount factor; and $\mathcal{T}$ is the time horizon. The configuration of $q_t$, $\psi_{z_i}$, and $\beta_{I_{i,t},z_i}$ for the target infrastructure in \textsc{cage-2} (Fig. \ref{fig:use_case}) can be found in Appendix \ref{appendix:hyperparameters}.

\subsection{Defender Interventions}\label{sec:defender_interventions}
During each time step, the defender performs an intervention that targets a single node. The defender can make four types of interventions: analyze the node for a possible intrusion, start a decoy service, remove malware, and restore the node to a secure state, which temporarily disrupts its service. We model these interventions as follows.
\begin{subequations}\label{eq:defender_interventions}
\begin{align}
&\mathrm{do}(Z_{i,t}=I_{i,t})       &&\text{analyze;} \label{analyze_intervention}\\
&\mathrm{do}(\mathbf{D}_{i,j,t}=1)  &&\text{decoy;}\\
&\mathrm{do}(I_{i,t}=\mathsf{S})    \text{ if }I_{i,t-1}=\mathsf{C} &&\text{remove;}\\
&\mathrm{do}(\mathbf{D}_{i,t}=\mathbf{0}, I_{i,t}=\mathsf{S}) \text{ if } I_{i,t-1}\in \{\mathsf{C},\mathsf{R}\} &&\text{restore;}\label{eq:restore_intervention}\\
&\mathrm{do}(\emptyset)             &&\text{none.}
\end{align}
\end{subequations}
Note that $\mathbf{D}_{i,t}$ remains constant if no interventions occur, i.e., $\mathbf{D}_{i,t} = f_{\mathrm{D}}(\mathbf{D}_{i,t-1})\triangleq \mathbf{D}_{i,t-1}$.

When selecting interventions, the defender considers the \textit{history} $\mathbf{H}_t$, which we define as
\begin{align}
\mathbf{H}_{t} &\triangleq (\mathbf{V}_1,\mathrm{do}(\widehat{\mathbf{X}}_1), \mathbf{O}_2, \mathrm{do}(\widehat{\mathbf{X}}_2), \hdots, \mathrm{do}(\widehat{\mathbf{X}}_{t-1}), \mathbf{O}_{t})\label{eq:history_def}\\
\mathbf{O}_{t} &\triangleq \{\mathbf{D}_{i,t}, S_{i,t}, Z_{i,t}, C_t \mid i \in \mathcal{V}\}, \label{eq:feedback}
\end{align}
where $\mathrm{do}(\widehat{\mathbf{X}}_t)$ is a shorthand for $\mathrm{do}(\widehat{\mathbf{X}}_t=\widehat{\mathbf{x}}_t)$ and $\mathbf{V}_1$ is the set of endogeneous variables at time $t$ (defined below). The intervention at time $t$ can thus be expressed as $\mathrm{do}(\widehat{\mathbf{X}}_t=\pi_{\mathrm{D}}(\mathbf{h}_t)$, where $\pi_{\mathrm{D}}$ is a \textit{defender strategy}.
\begin{remark}
\normalfont The fact that the defender remembers the history $\mathbf{H}_{t}$ (\ref{eq:history_def}) means that it has \textit{perfect recall} \cite[Def. 7]{kuhn1953}.
\end{remark}
\subsection{A Structural Causal Model of \textsc{cage-2}}\label{sec:scm}
\begin{model}[t]
  \caption{\footnotesize Causal model of the \textsc{cage-2} scenario\hfill (\textcolor{blue!50!black}{M1})\xlabel[positive]{pos}}\label{scm_cage}
\footnotesize
\textsc{\scm}: $\mathscr{M} \triangleq \langle \U,\V,\mat{F}, \myP{\U} \rangle.$ $\quad$ \textsc{causal graph}: $\mathcal{G}$ (Fig. \ref{fig:causal_diagram}).

\vspace{1mm}

\noindent \textsc{target system}:

\vspace{1mm}

{\centering
  $ \displaystyle
    \begin{aligned}
&\mathcal{V},\mathcal{T}  && \text{set of nodes and time horizon; see Appendix \ref{appendix:infrastructure_configuration}}\\
\mathcal{G}_{\mathrm{S}}&=\langle \mathcal{V}, \mathcal{E}\rangle  && \text{physical topology graph (Fig. \ref{fig:use_case})}\\
  \mathcal{G}_{\mathrm{W}}&=\langle \mathcal{V}_{\mathrm{W}} \subseteq \mathcal{V}, \mathcal{E}_{\mathrm{W}}\rangle &&\text{workflow graph; see Appendix \ref{appendix:infrastructure_configuration}}.
    \end{aligned}
  $
\par}

\vspace{1mm}

\textsc{random variables}:

\vspace{1mm}

{\centering
  $ \displaystyle
    \begin{aligned}
\U &\triangleq \{E_t, \pi_{\mathrm{A}}, \mathscr{A}_t, \mathscr{D}_t, W_{i,t} \mid i \in \mathcal{V}, 2\leq t \leq \mathcal{T}\} \\%\quad\quad\quad\quad\quad \text{exogeneous} \\
\V &\triangleq \{I_{i,t}, Z_{i,t}, S_{i,t},\mathbf{D}_{i,t}, \mathbf{A}_{t>1}, C_{t}, R_{t}, J \mid i\in \mathcal{V}, 1\leq t \leq \mathcal{T}\}\\ %\text{ endogeneous}\\
\mathbf{X} &\triangleq \{\mathbf{D}_{i,t}, Z_{i,t}, I_{i,t} \mid i \in \mathcal{V}, 1\leq t \leq \mathcal{T}\} \\%\quad\quad\quad\quad\quad\quad\quad\text{ }\text{manipulative}\nonumber \\
\mathbf{N} &\triangleq (\V \cup \U) \setminus \mathbf{X}\\
\mathbf{O} &\triangleq \{\mathbf{D}_{i,t}, S_{i,t}, Z_{i,t}, C_t, \mid i \in \mathcal{V}, 1\leq t \leq \mathcal{T}\} \\%\quad\quad\quad\quad\quad\text{ }\text{ observable}\nonumber \\
\mathbf{L} &\triangleq (\V\cup \U) \setminus \mathbf{O}\\
%%= \{E_t, \pi_{\mathrm{A}}, \mathscr{A}_t, \mathscr{D}_t, W_{i,t},I_{i,t},\mathbf{A}_{t>1},R_t, \mid  i \in \mathcal{V}, 1\leq t \leq \mathcal{T}\}\\
\mathbf{Y} &\triangleq \{R_t, J \mid 1\leq t \leq \mathcal{T}\}. \\%\quad\quad\quad\quad\quad\quad\quad\quad\quad\quad\quad\quad\quad \text{ }\text{targets}\nonumber
%\mathbf{L} &\triangleq \V\cup \U \setminus \mathbf{O} \quad\quad \mathbf{N} \triangleq \V \cup \U \setminus \mathbf{X}.
    \end{aligned}
  $
\par}

\vspace{1mm}

\textsc{initial condition}: $I_{i,1}=Z_{i,1}=\mathsf{U}, S_{i,1}=1, \mathbf{D}_{i,1}=\mathbf{0}, C_1=R_1=0$.

\vspace{1mm}

\noindent \textsc{causal functions}: $\mathbf{F} \triangleq \{f_{\mathrm{I}}, f_{\mathrm{S}}, (f_{Z,i})_{i \in \mathcal{V}}, f_{\mathrm{C}}, f_{\mathrm{R}}, f_{\mathrm{J}}, f_{\mathrm{A}},f_{\mathrm{D}}\}.$

\vspace{1mm}

\noindent \textsc{observational distributions}:

\vspace{1mm}

{\centering
  $ \displaystyle
    \begin{aligned}
P(\U) &= P(\pi_{\mathrm{A}})\prod_{t=2}^{\mathcal{T}}P(E_t)P(\mathscr{A}_t)P(\mathscr{D}_t)P((W_{i,t})_{i \in \mathcal{V}}) \\
P(\V_t) &= \prod_{i=1}^{|\V_t|}P(\V_{i,t} \mid \mathrm{pa}(\V_{i,t})_{\graph}). %\quad\quad\quad\quad\text{causal Markov condition.}\nonumber
    \end{aligned}
  $
  \par}

\vspace{1mm}

\textsc{interventions}: $\mathrm{do}(\widehat{\mathbf{X}}_t=\pi_{\mathrm{D}}(\mathbf{H}_t))$ (\ref{eq:defender_interventions}).
\end{model}
The variables and the causal functions (\ref{eq:attacker_function})--(\ref{eq:defender_objective}) described above determine the \scm (\ref{eq:scm_def}) defined in (\hyperref[pos]{M1}). Notable properties of (\hyperref[pos]{M1}) are a) the causal graph is acyclic (see Fig. \ref{fig:causal_diagram}); b) the model is \textit{stationary}; c) the model is \textit{semi-Markov} \cite[Thm. 1.4.1]{pearl2000causality}; d) the exogeneous variables are jointly independent; and e) $P(\V_t)$ is Markov relative to $\mathcal{G}$ (\ref{eq:parental_markov}) \cite[Thm. 1.2.7]{pearl2000causality}.
%however, $P(\mathbf{O}_t)$  is not. As a consequence, (\hyperref[pos]{M1}) is \textit{semi}-Markov \cite[Thm. 1.4.1]{pearl2000causality}

The size of the causal graph $\mathcal{G}$ associated with (\hyperref[pos]{M1}) grows linearly with the time horizon $\mathcal{T}$ and with the number of nodes in the target system $|\mathcal{V}|$. A summary of $\mathcal{G}$ is shown in Fig. \ref{fig:causal_diagram}.

\begin{figure}
  \centering
  \scalebox{1}{
    \begin{tikzpicture}[fill=white, >=stealth,
    node distance=3cm,
    database/.style={
      cylinder,
      cylinder uses custom fill,
      shape border rotate=90,
      aspect=0.25,
      draw}]

    \tikzset{
node distance = 9em and 4em,
sloped,
   box/.style = {%
    shape=rectangle,
    rounded corners,
    draw=blue!40,
    fill=blue!15,
    align=center,
    font=\fontsize{12}{12}\selectfont},
 arrow/.style = {%
    line width=0.1mm,
    -{Triangle[length=5mm,width=2mm]},
    shorten >=1mm, shorten <=1mm,
    font=\fontsize{8}{8}\selectfont},
}

\node[scale=0.7] (gi) at (0,0)
{
\begin{tikzpicture}
\node[draw,circle, minimum width=0.8cm, scale=1, fill=black!12](dt) at (0,0) {};
\node[draw,circle, minimum width=0.8cm, scale=1, fill=black!12](at) at (1.5,1.5) {};
\node[draw,circle, minimum width=0.8cm, scale=1](ct) at (0,1.5) {};
\node[draw,circle, minimum width=0.8cm, scale=1](ct1) at (-1.5,1.5) {};
\node[draw,circle, minimum width=0.8cm, scale=1](ot) at (0,3) {};
\node[draw,circle, minimum width=0.8cm, scale=1, fill=black!12](wt) at (1.5,3) {};
\node[draw,circle, minimum width=0.8cm, scale=1](dtt) at (0,4.5) {};
\node[draw,circle, minimum width=0.8cm, scale=1](dt1) at (-1.5,4.5) {};
\node[draw,circle, minimum width=0.8cm, scale=1](st) at (1.5,4.5) {};
\node[draw,circle, minimum width=0.8cm, scale=1](st1) at (1.5,6) {};
\node[draw,circle, minimum width=0.8cm, scale=1, fill=black!12](rt) at (3,4.5) {};
\node[draw,circle, minimum width=0.8cm, scale=1](sj) at (4.5,4.5) {};
\node[draw,circle, minimum width=0.8cm, scale=1,fill=black!12](ij) at (3,6) {};
\node[draw,circle, minimum width=0.8cm, scale=1,fill=black!12](it) at (0,6) {};
\node[draw,circle, minimum width=0.8cm, scale=1,fill=black!12](et) at (0,7.5) {};
\node[draw,circle, minimum width=0.8cm, scale=1,fill=black!12](atk) at (-1.5,6) {};
\node[draw,circle, minimum width=0.8cm, scale=1,fill=black!12](pia) at (-3,6) {};
\node[draw,circle, minimum width=0.8cm, scale=1,fill=black!12](ijj) at (3,7.5) {};
\node[draw,circle, minimum width=0.8cm, scale=1,fill=black!12](it1) at (-3,7.5) {};
\node[draw,circle, minimum width=0.8cm, scale=1,fill=black!12](it11) at (-1.5,7.5) {};

\node[draw,circle, minimum width=0.8cm, scale=1](zjj) at (-3,4) {};
\node[draw,circle, minimum width=0.8cm, scale=1, fill=black!12](ijjj) at (-3,2.5) {};
\node[draw,circle, minimum width=0.8cm, scale=1](sjjj) at (-3,1) {};

\draw[-{Latex[length=2mm]}, line width=0.22mm, color=black, dashed] (atk) to (zjj);
\draw[-{Latex[length=2mm]}, line width=0.22mm, color=black, dashed] (atk) to (ijjj);
\draw[-{Latex[length=2mm]}, line width=0.22mm, color=black, dashed] (atk) to (sjjj);

\draw[-{Latex[length=2mm]}, line width=0.22mm, color=black] (dt) to (ct);
\draw[-{Latex[length=2mm]}, line width=0.22mm, color=black] (at) to (ct);
\draw[-{Latex[length=2mm]}, line width=0.22mm, color=black] (ct1) to (ct);
\draw[-{Latex[length=2mm]}, line width=0.22mm, color=black] (ct) to (ot);
\draw[-{Latex[length=2mm]}, line width=0.22mm, color=black, dashed] (wt) to (ot);
\draw[-{Latex[length=2mm]}, line width=0.22mm, color=black, dashed] (atk) to (ot);
\draw[-{Latex[length=2mm]}, line width=0.22mm, color=black, dashed] (atk) to (it);
\draw[-{Latex[length=2mm]}, line width=0.22mm, color=black] (dt1) to (dtt);
\draw[-{Latex[length=2mm]}, line width=0.22mm, color=black] (dtt) to (it);
\draw[-{Latex[length=2mm]}, line width=0.22mm, color=black, dashed] (it) to (rt);
\draw[-{Latex[length=2mm]}, line width=0.22mm, color=black] (st) to (rt);
\draw[-{Latex[length=2mm]}, line width=0.22mm, color=black, dashed] (atk) to (st);
\draw[-{Latex[length=2mm]}, line width=0.22mm, color=black, dashed] (pia) to (atk);
\draw[-{Latex[length=2mm]}, line width=0.22mm, color=black, dashed] (et) to (it);
\draw[-{Latex[length=2mm]}, line width=0.22mm, color=black, dashed] (et) to (ij);
\draw[-{Latex[length=2mm]}, line width=0.22mm, color=black, dashed] (ij) to (rt);
\draw[-{Latex[length=2mm]}, line width=0.22mm, color=black] (sj) to (rt);
\draw[-{Latex[length=2mm]}, line width=0.22mm, color=black] (st1) to (st);
\draw[-{Latex[length=2mm]}, line width=0.22mm, color=black, dashed] (it) to (ijj);
\draw[-{Latex[length=2mm]}, line width=0.22mm, color=black, dashed] (it1) to (atk);
\draw[-{Latex[length=2mm]}, line width=0.22mm, color=black, dashed] (it11) to (it);

\node[inner sep=0pt,align=center, scale=1] (dots4) at (0,0) {$\mathscr{D}_t$};
\node[inner sep=0pt,align=center, scale=1] (dots4) at (1.5,1.5) {$\mathscr{A}_t$};
\node[inner sep=0pt,align=center, scale=1] (dots4) at (0,1.5) {$C_t$};
\node[inner sep=0pt,align=center, scale=1] (dots4) at (-1.5,1.5) {$C_{t-1}$};
\node[inner sep=0pt,align=center, scale=1] (dots4) at (0,3) {$Z_{i,t}$};
\node[inner sep=0pt,align=center, scale=1] (dots4) at (1.5,3) {$W_{i,t}$};
\node[inner sep=0pt,align=center, scale=1] (dots4) at (0,4.5) {$\mathbf{D}_{i,t}$};
\node[inner sep=0pt,align=center, scale=0.75] (dots4) at (-1.5,4.5) {$\mathbf{D}_{i,t-1}$};
\node[inner sep=0pt,align=center, scale=1] (dots4) at (-1.5,6) {$\mathbf{A}_{t-1}$};
\node[inner sep=0pt,align=center, scale=1] (dots4) at (-3,6) {$\pi_{\mathrm{A}}$};
\node[inner sep=0pt,align=center, scale=1] (dots4) at (0,6) {$I_{i,t}$};
\node[inner sep=0pt,align=center, scale=1] (dots4) at (0,7.5) {$E_{t}$};
\node[inner sep=0pt,align=center, scale=1] (dots4) at (1.5,4.5) {$S_{i,t}$};
\node[inner sep=0pt,align=center, scale=0.8] (dots4) at (1.5,6) {$S_{i,t-1}$};
\node[inner sep=0pt,align=center, scale=1] (dots4) at (3,4.5) {$R_{t}$};
\node[inner sep=0pt,align=center, scale=1] (dots4) at (4.5,4.5) {$S_{j,t}$};
\node[inner sep=0pt,align=center, scale=1] (dots4) at (3,6) {$I_{j,t}$};
\node[inner sep=0pt,align=center, scale=1] (dots4) at (3,7.5) {$I_{j,t}$};
\node[inner sep=0pt,align=center, scale=0.9] (dots4) at (-1.5,7.5) {$I_{j,t-1}$};
\node[inner sep=0pt,align=center, scale=0.9] (dots4) at (-3,7.5) {$I_{j,t-1}$};
\node[inner sep=0pt,align=center, scale=1] (dots4) at (-3,2.5) {$I_{j,t}$};
\node[inner sep=0pt,align=center, scale=1] (dots4) at (-3,4) {$Z_{j,t}$};
\node[inner sep=0pt,align=center, scale=1] (dots4) at (-3,1) {$S_{j,t}$};

\draw (2.35, 3.9) rectangle (5.4, 6.65);
\node[inner sep=0pt,align=center, scale=1] (dots4) at (4.4,6) {$j \in \mathcal{V} \setminus \{i\}$};

\draw (2.35, 7) rectangle (5.4, 8.1);
\node[inner sep=0pt,align=center, scale=1] (dots4) at (4.45,7.5) {$j \in \mathrm{ch}(i)_{\mathcal{G}_{\mathrm{w}}}$};

\draw (-3.9, 6.9) rectangle (-2.5, 8.6);
\node[inner sep=0pt,align=center, scale=1] (dots4) at (-1.44,8.3) {$j \in \mathrm{pa}(i)_{\mathcal{G}_{\mathrm{w}}}$};
\draw (-2.35, 6.9) rectangle (-0.55, 8.6);
\node[inner sep=0pt,align=center, scale=1] (dots4) at (-3.25,8.3) {$j \in \mathcal{V}$};

\draw (-3.9, 4.6) rectangle (-2.2, 0);
\node[inner sep=0pt,align=center, scale=1] (dots4) at (-3.05,0.25) {$j \in \mathcal{V} \setminus \{i\}$};

\node[draw,circle, minimum width=0.8cm, scale=1,fill=black!12](legend0) at (3,1) {};
\node[draw,circle, minimum width=0.8cm, scale=1](legend1) at (3,0) {};

\node[inner sep=0pt,align=center, scale=1] (dots4) at (4.1,1) {Latent};
\node[inner sep=0pt,align=center, scale=1] (dots4) at (4.28,0) {Observed};
\end{tikzpicture}
};

\end{tikzpicture}
  }
\caption{Causal (summary) graph of (\hyperref[pos]{M1}) for node $i$ \cite[Ch. 10]{peters_book}; plate notation is used to represent sets of variables \cite{plate_notation}.}
  \label{fig:causal_diagram}
\end{figure}
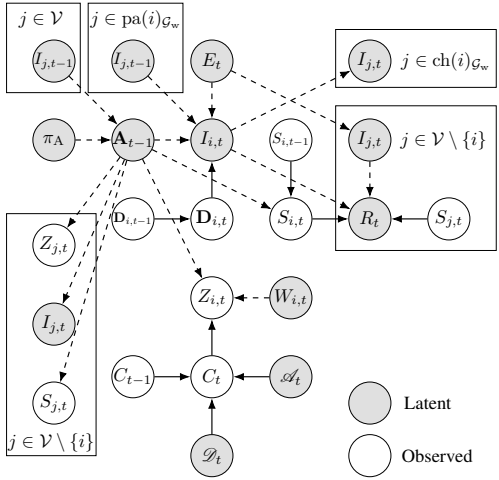
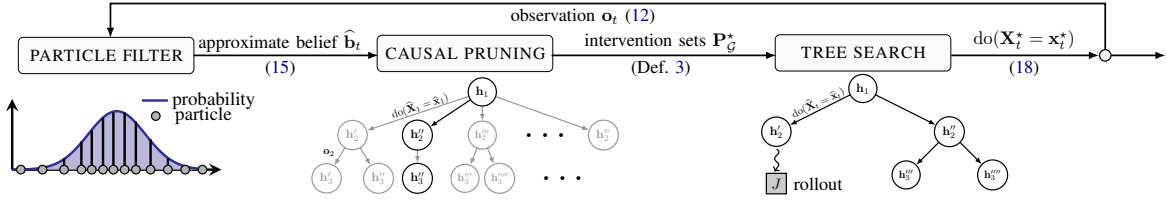
\begin{figure*}
  \centering
  \scalebox{1.15}{
    \begin{tikzpicture}

\node[scale=1] (kth_cr) at (-0.7,0)
{
\begin{tikzpicture}
\node[scale=1] (kth_cr) at (0.3,0)
{
\begin{tikzpicture}

  \def\B{11};
  \def\Bs{3.0};
  \def\xmax{\B+3.2*\Bs};
  \def\ymin{{-0.1*gauss(\B,\B,\Bs)}};
  \def\h{0.08*gauss(\B,\B,\Bs)};
  \def\N{50}

  \begin{axis}[every axis plot post/.append style={
      mark=none,domain=0:20,
      samples=\N,smooth},
               xmin=0, xmax=20,
               ymin=0, ymax={1.1*gauss(\B,\B,\Bs)},
               axis lines=middle,
               axis line style=thick,
               enlargelimits=upper, % extend the axes a bit to the right and top
               ticks=none,
%               ylabel=$\mathbb{P}(\Gamma)$,
               every axis x label/.style={at={(current axis.right of origin)},anchor=north},
               width=4cm,
               height=2.4cm,
               clip=false
              ]

    \addplot[Blue,thick,name path=B] {gauss(x,\B,\Bs)};
    \path[name path=xaxis](0,0) -- (20,0);
    \addplot[Blue!25] fill between[of=xaxis and B];
  \end{axis}
\node[draw,circle, minimum width=0.1cm, scale=0.3, fill=black!30](p1) at (0.1,0) {};
\node[draw,circle, minimum width=0.1cm, scale=0.3, fill=black!30](p2) at (0.35,0) {};
\node[draw,circle, minimum width=0.1cm, scale=0.3, fill=black!30](p3) at (0.6,0) {};
\node[draw,circle, minimum width=0.1cm, scale=0.3, fill=black!30](p4) at (0.8,0) {};
\node[draw,circle, minimum width=0.1cm, scale=0.3, fill=black!30](p5) at (0.92,0) {};
\node[draw,circle, minimum width=0.1cm, scale=0.3, fill=black!30](p6) at (1.05,0) {};
\node[draw,circle, minimum width=0.1cm, scale=0.3, fill=black!30](p7) at (1.17,0) {};
\node[draw,circle, minimum width=0.1cm, scale=0.3, fill=black!30](p8) at (1.3,0) {};
\node[draw,circle, minimum width=0.1cm, scale=0.3, fill=black!30](p9) at (1.43,0) {};
\node[draw,circle, minimum width=0.1cm, scale=0.3, fill=black!30](p10) at (1.6,0) {};
\node[draw,circle, minimum width=0.1cm, scale=0.3, fill=black!30](p11) at (1.8,0) {};
\node[draw,circle, minimum width=0.1cm, scale=0.3, fill=black!30](p12) at (2,0) {};
\node[draw,circle, minimum width=0.1cm, scale=0.3, fill=black!30](p13) at (2.2,0) {};

\draw[-, thick] (p3) to (0.6, 0.13);
\draw[-, thick] (p4) to (0.8, 0.325);
\draw[-, thick] (p5) to (0.92, 0.455);
\draw[-, thick] (p6) to (1.05, 0.6);
\draw[-, thick] (p7) to (1.17, 0.662);
\draw[-, thick] (p8) to (1.3, 0.662);
\draw[-, thick] (p9) to (1.43, 0.555);
\draw[-, thick] (p10) to (1.6, 0.35);
\draw[-, thick] (p11) to (1.8, 0.13);

\draw[-, thick, color=Blue] (1.5, 0.8) to (1.8, 0.8);
\node[draw,circle, minimum width=0.1cm, scale=0.3, fill=black!30](p12) at (1.66,0.62) {};
\node[inner sep=0pt,align=center, scale=0.62, color=black] (hacker) at (2.25,0.62) {
   particle
 };
\node[inner sep=0pt,align=center, scale=0.62, color=black] (hacker) at (2.37,0.8) {
   probability
};

\end{tikzpicture}
};
\node[scale=0.7] (box) at (-0.05,0.9)
{
\begin{tikzpicture}
  \draw[rounded corners=0.5ex, fill=black!2] (0,0) rectangle node (m1){} (2.9,0.6);
  \node[inner sep=0pt,align=center, scale=1, color=black] (hacker) at (1.52,0.3) {
    \textsc{particle filter}
  };
\end{tikzpicture}
};
\end{tikzpicture}
};

\node[scale=1] (kth_cr) at (3.1,-0.1)
{
\begin{tikzpicture}
\node[scale=1] (kth_cr) at (0,0)
{
\begin{tikzpicture}
\node[draw,circle, minimum width=0.17cm, scale=0.4](root) at (0,0) {$\mathbf{h}_1$};
\node[draw,circle, minimum width=0.15cm, scale=0.4, opacity=0.4](h1) at (-1.5,-0.5) {$\mathbf{h}^{\prime}_2$};
\node[draw,circle, minimum width=0.15cm, scale=0.4](h2) at (-0.75,-0.5) {$\mathbf{h}^{\prime\prime}_2$};
\node[draw,circle, minimum width=0.15cm, scale=0.35, opacity=0.4](h3) at (0,-0.5) {$\mathbf{h}^{\prime\prime\prime}_2$};
\node[inner sep=0pt,align=center, scale=1.2, color=black] (hacker) at (0.8,-0.5) {
  $\hdots$
};
\node[draw,circle, minimum width=0.15cm, scale=0.4, opacity=0.4](h4) at (1.4,-0.5) {$\mathbf{h}^{n}_2$};

\node[draw,circle, minimum width=0.15cm, scale=0.4, opacity=0.4](h5) at (-1.8,-1) {$\mathbf{h}^{\prime}_3$};
\node[draw,circle, minimum width=0.15cm, scale=0.4, opacity=0.4](h6) at (-1.2,-1) {$\mathbf{h}^{\prime\prime}_3$};
\node[draw,circle, minimum width=0.15cm, scale=0.4](h7) at (-0.75,-1) {$\mathbf{h}^{\prime\prime}_3$};
\node[draw,circle, minimum width=0.15cm, scale=0.35, opacity=0.4](h8) at (-0.2,-1) {$\mathbf{h}^{\prime\prime\prime}_3$};
\node[draw,circle, minimum width=0.15cm, scale=0.35, opacity=0.4](h9) at (0.2,-1) {$\mathbf{h}^{\prime\prime\prime\prime}_3$};
\node[inner sep=0pt,align=center, scale=1.2, color=black] (hacker) at (1,-1) {
  $\hdots$
};
\draw[-{Latex[length=0.8mm]}, opacity=0.4] (root) to (h1);
\draw[-{Latex[length=0.8mm]}] (root) to (h2);
\draw[-{Latex[length=0.8mm]}, opacity=0.4] (root) to (h3);
\draw[-{Latex[length=0.8mm]}, opacity=0.4] (root) to (h4);
\draw[-{Latex[length=0.8mm]}, opacity=0.4] (h1) to (h5);
\draw[-{Latex[length=0.8mm]}, opacity=0.4] (h1) to (h6);
\draw[-{Latex[length=0.8mm]}, opacity=0.4] (h2) to (h7);
\draw[-{Latex[length=0.8mm]}, opacity=0.4] (h3) to (h8);
\draw[-{Latex[length=0.8mm]}, opacity=0.4] (h3) to (h9);

\node[inner sep=0pt,align=center, scale=0.35, color=black, rotate=18] (hacker) at (-0.7,-0.15) {
  $\mathrm{do}(\widehat{\mathbf{X}}_1=\widehat{\mathbf{x}}_1)$
};
\node[inner sep=0pt,align=center, scale=0.35, color=black, rotate=0] (hacker) at (-1.75,-0.7) {
  $\mathbf{o}_{2}$
};
\end{tikzpicture}
};
\node[scale=0.7] (box) at (0,0.95)
{
\begin{tikzpicture}
  \draw[rounded corners=0.5ex, fill=black!2] (0,0) rectangle node (m1){} (2.9,0.6);
  \node[inner sep=0pt,align=center, scale=1, color=black] (hacker) at (1.52,0.3) {
    \textsc{causal pruning}
  };
\end{tikzpicture}
};
\end{tikzpicture}
};

\node[scale=1] (kth_cr) at (7.95,-0.1)
{
\begin{tikzpicture}
\node[scale=1] (kth_cr) at (0,0.11)
{
\begin{tikzpicture}
\node[draw,circle, minimum width=0.17cm, scale=0.4](root) at (0,0) {$\mathbf{h}_1$};
\node[draw,circle, minimum width=0.15cm, scale=0.4](h1) at (-1,-0.5) {$\mathbf{h}^{\prime}_2$};
\node[draw,circle, minimum width=0.15cm, scale=0.4, opacity=1](h2) at (1,-0.5) {$\mathbf{h}^{\prime\prime}_2$};
\node[draw,circle, minimum width=0.15cm, scale=0.35](h5) at (0.5,-1) {$\mathbf{h}^{\prime\prime\prime}_3$};
\node[draw,circle, minimum width=0.15cm, scale=0.35](h6) at (1.5,-1) {$\mathbf{h}^{\prime\prime\prime\prime}_3$};
\draw[-{Latex[length=0.8mm]}, opacity=1] (root) to (h1);
\draw[-{Latex[length=0.8mm]}, opacity=1] (root) to (h2);
\draw[-{Latex[length=0.8mm]}, opacity=1] (h2) to (h5);
\draw[-{Latex[length=0.8mm]}, opacity=1] (h2) to (h6);
\node[inner sep=0pt,align=center, scale=0.35, color=black, rotate=25] (hacker) at (-0.5,-0.15) {
  $\mathrm{do}(\widehat{\mathbf{X}}_t=\widehat{\mathbf{x}}_t)$
};

  \node[rotate=-90, scale=0.25](test) at (-1,-0.83) {
    \begin{tikzpicture}
\draw[->, x=0.15cm,y=0.1cm, ultra thick, black]
        (3,0) sin (4,1) cos (5,0) sin (6,-1) cos (7,0)
        sin (8,1) to (10,-1);
    \end{tikzpicture}
  };
\node at (-1,-1.1) [draw, fill=black!20, scale=0.5] (v100) {$J$};

\node[inner sep=0pt,align=center, scale=0.6, color=black, rotate=0] (hacker) at (-0.48,-1.1) {
  rollout
};
\end{tikzpicture}
};
\node[scale=0.7] (box) at (-0.25,1.05)
{
\begin{tikzpicture}
  \draw[rounded corners=0.5ex, fill=black!2] (0,0) rectangle node (m1){} (2.9,0.6);
  \node[inner sep=0pt,align=center, scale=1, color=black] (hacker) at (1.52,0.3) {
    \textsc{tree search}
  };
\end{tikzpicture}
};
\end{tikzpicture}
};

\draw[-{Latex[length=1.2mm]}, line width=0.2mm] (-0.04, 0.6) to (2.1, 0.6);
\draw[-{Latex[length=1.2mm]}, line width=0.2mm] (4.1, 0.6) to (6.7, 0.6);
\node[draw,circle, fill=gray2, scale=0.4] (j1) at (10.5,0.6) {};
\draw[-{Latex[length=1.2mm]}, line width=0.2mm] (8.72, 0.6) to (j1);

  \node[inner sep=0pt,align=center, scale=0.6, color=black] (hacker) at (9.6,0.78) {
    $\mathrm{do}(\mathbf{X}^{\star}_t=\mathbf{x}^{\star}_t)$
  };
  \node[inner sep=0pt,align=center, scale=0.6, color=black] (hacker) at (9.6,0.45) {
    (\ref{eq:planning_operator})
  };

  \node[inner sep=0pt,align=center, scale=0.6, color=black] (hacker) at (1,0.77) {
    approximate belief $\widehat{\mathbf{b}}_t$
  };
  \node[inner sep=0pt,align=center, scale=0.6, color=black] (hacker) at (1,0.45) {
    (\ref{eq:particle_filter})
  };
  \node[inner sep=0pt,align=center, scale=0.6, color=black] (hacker) at (5.4,0.75) {
    intervention sets $\mathbf{P}_{\mathcal{G}}^{\star}$
  };
    \node[inner sep=0pt,align=center, scale=0.6, color=black] (hacker) at (5.4,0.45) {
    (Def. \ref{def:pomis})
  };
  \draw[-{Latex[length=1.2mm]}, line width=0.2mm] (j1) to (11.2, 0.6);
  \draw[-{Latex[length=1.2mm]}, line width=0.2mm] (j1) to (10.5, 1.2) to (-1, 1.2) to (-1, 0.8);
  \node[inner sep=0pt,align=center, scale=0.6, color=black] (hacker) at (4.5,1.05) {
    observation $\mathbf{o}_t$ (\ref{eq:feedback})
  };

\end{tikzpicture}
  }
\caption{Causal Partially Observed Monte-Carlo Planning (\textsc{c-pomcp}, Alg. \ref{alg:c_pomcp}); the figure illustrates one time step during which (\textit{i}) a particle filter is used to compute an approximate belief state $\widehat{\mathbf{b}}_t$ (\ref{eq:belief_def}); (\textit{ii}) a causal graph \cite[Def. 2.2.1]{pearl2000causality} (see Fig. \ref{fig:causal_diagram}) is used to prune the search tree of possible histories $\mathbf{h}_k$ (\ref{eq:history_def}) by only considering histories with interventions in \pomis{}s (see Def. \ref{def:pomis}); and (\textit{iii}) tree search is used to find an optimal intervention $\mathrm{do}(\mathbf{X}^{\star}_t=\mathbf{x}^{\star}_t)$ (\ref{eq:planning_operator}).}
  \label{fig:approach}
\end{figure*}
\subsection{The Defender Problem in \textsc{cage-2}}
Given (\hyperref[pos]{M1}) and the defender objective (\ref{eq:defender_objective}), the problem for the defender can be stated as follows.
\begin{problem}[Optimal defender strategy in \textsc{cage-2} \protect{(\hyperref[pos]{M1})}]\label{prob:sec_response}
\begin{subequations}\label{eq:intruson_response_problem}
\begin{align}
    \maximize_{\pi_{\mathrm{D}}} \quad &\mathbb{E}_{\pi_{\mathrm{D}}}\left[J \mid \mathbf{V}_1\right]\\
  \text{\normalfont subject to} \quad & \mathrm{do}(\widehat{\mathbf{X}}_t=\pi_{\mathrm{D}}(\mathbf{h}_t)) && \forall t \label{eq:policy_space} \\
                                       &\pi_{\mathrm{A}} \sim P(\pi_{\mathrm{A}}) && \forall t\label{eq:exo_0} \\
                                       &\mathscr{A}_{t} \sim P(\mathscr{A}_{t}), \mathscr{D}_{t} \sim P(\mathscr{D}_{t}) && \forall t\label{eq:exo_1} \\
                                       &E_{t} \sim P(E_{t}),W_{i,t} \sim P(W_{i,t})  && \forall t,i\label{eq:exo_2} \\
                                       &\mathbf{A}_t = f_{\mathrm{A}}(\pi_{\mathrm{A}},\{I_{i,t}\}_{i \in \mathcal{V}}) && \forall t,i \label{eq:endo_0}\\
                                       &\mathbf{D}_{i,t} = f_{\mathrm{D}}(\mathbf{D}_{i,t-1})  && \forall t,i \label{eq:endo_8}\\
                          &I_{i,t} = f_{\mathrm{I}}(I_{i,t-1}, \mathbf{A}_{t-1}, \mathbf{D}_{i,t}, E_{t}) && \forall t, i\label{eq:endo_1} \\
                          &Z_{i,t} = f_{\mathrm{Z},i}(C_{t}, \mathbf{A}_{t-1}, W_{i,t}) && \forall t,i\label{eq:endo_3} \\
                          &C_{t} = f_{\mathrm{C}}(C_{t-1}, \mathscr{A}_{t}, \mathscr{D}_{t}) && \forall t\label{eq:endo_4} \\
                                       &S_{i,t} = f_{\mathrm{S}}(\mathbf{A}_{t-1},S_{i,t-1}) && \forall t,i\label{eq:endo_6}\\
                                       &R_{t} = f_{\mathrm{R}}(\{I_{i,t}, S_{i,t}\}_{i \in \mathcal{V}}) && \forall t\label{eq:endo_5}\\
                                       &J = f_{\mathrm{J}}(\{R_{t}\}_{t=1,\hdots,\mathcal{T}}),\label{eq:endo_7}
\end{align}
\end{subequations}
\end{problem}
where (\ref{eq:policy_space}) defines the interventions; (\ref{eq:exo_1})--(\ref{eq:exo_2}) capture the distribution of $\U$; and (\ref{eq:endo_0})--(\ref{eq:endo_7}) define $\mathbf{F}$.

We say that a \textit{defender strategy} $\pi^{\star}_{\mathrm{D}}$ is \textit{optimal} if it solves Prob. \ref{prob:sec_response}. This problem is well-defined in the following sense.
\begin{theorem}\label{thm:well_defined_prob}
Assuming $C_t,\mathcal{A},\mathcal{V},q_t,\beta_{I_{i,t},z},\psi_{z}$, are finite, and $\mathcal{T}$ is finite or $\gamma < 1$, then there exists an optimal deterministic defender strategy $\pi^{\star}_{\mathrm{D}}$. If $\mathcal{T}=\infty$, then there exists a $\pi^{\star}_{\mathrm{D}}$ that is stationary.
\end{theorem}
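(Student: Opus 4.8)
The plan is to reduce the interventional control problem in Prob.~\ref{prob:sec_response} to a fully observed Markov decision process over belief states, and then invoke the classical existence theory for finite-horizon and discounted infinite-horizon \mdps. The finiteness hypotheses serve two distinct purposes. First, finiteness of $\mathcal{V}$, $C_t$, $q_t$, $\psi_{z}$ and $\beta_{I_{i,t},z}$ guarantees that each reward $R_t$ in \eqref{eq:defender_objective} is uniformly bounded, so that $J$ is well defined as a finite sum when $\mathcal{T}<\infty$ and as an absolutely convergent series when $\gamma<1$. Second, finiteness of the underlying variable domains makes the latent state space finite, so that the space of belief states is a compact simplex.

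First I would construct the Markov state $\mathbf{\Sigma}_t$ named in the statement. The observable feedback is $\mathbf{O}_t$ \eqref{eq:feedback} and the history is $\mathbf{H}_t$ \eqref{eq:history_def}, while the intrusion states $\{I_{i,t}\}$, the attacker action $\mathbf{A}_t$, and the exogenous attacker strategy $\pi_{\mathrm{A}}$ are latent. I would take $\mathbf{\Sigma}_t$ to be the belief $\widehat{\mathbf{b}}_t$, i.e., the posterior over these latent variables conditioned on $\mathbf{h}_t$. Using the Markov factorization (property (e) of \hyperref[pos]{M1}, cf.\ \eqref{eq:parental_markov}) together with the joint independence of the exogenous variables, this posterior admits a recursive Bayesian update driven by the chosen intervention $\mathrm{do}(\widehat{\mathbf{X}}_t=\widehat{\mathbf{x}}_t)$ and the next observation $\mathbf{o}_{t+1}$. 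The key lemma to verify is that $\widehat{\mathbf{b}}_t$ is a \emph{sufficient statistic}: the conditional law of $(\mathbf{\Sigma}_{t+1},R_t)$ given $\mathbf{h}_t$ and the intervention depends on $\mathbf{h}_t$ only through $\widehat{\mathbf{b}}_t$. This is precisely the equivalence between \hyperref[pos]{M1} and a \pomdp \cite{ASTROM1965174}, and it reduces Prob.~\ref{prob:sec_response} to an \mdp whose state is $\mathbf{\Sigma}_t$, whose action set is the finite family of admissible interventions \eqref{eq:defender_interventions}, and whose one-step reward is the expectation of $R_t$ under $\widehat{\mathbf{b}}_t$.

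With this reduction in hand, existence follows from standard dynamic-programming arguments. For $\mathcal{T}<\infty$, backward induction over the finite horizon yields, at each step, a maximization of a bounded value function over the finite intervention set; the maximizer exists, is deterministic, and depends on $\mathbf{h}_t$ only through $\mathbf{\Sigma}_t$, which produces a deterministic optimal $\pi^{\star}_{\mathrm{D}}$. For $\mathcal{T}=\infty$ with $\gamma<1$, I would show that the Bellman optimality operator is a $\gamma$-contraction in the supremum norm on the space of bounded functions over the compact belief simplex; boundedness of $R_t$ makes this operator well defined, and the Banach fixed-point theorem then gives a unique optimal value function $V^{\star}$. The stationary deterministic greedy policy with respect to $V^{\star}$ is optimal, which establishes the final claim.

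The main obstacle is the sufficiency step, namely proving that $\widehat{\mathbf{b}}_t$ is Markovian under arbitrary history-dependent and interventional strategies. Unlike a textbook \pomdp, the dynamics here are specified through do-interventions that overwrite the structural functions of $Z_{i,t}$, $I_{i,t}$ and $\mathbf{D}_{i,t}$, so I must check that each intervention in \eqref{eq:defender_interventions} induces a well-defined transition kernel on beliefs and that the causal cut introduced by the do-operator does not create dependence on the latent past beyond what $\widehat{\mathbf{b}}_t$ already summarizes. A secondary technical point is the measurable selection of the greedy policy in the infinite-horizon case: since the belief space is a continuum, I would invoke continuity of $V^{\star}$ together with finiteness of the action set to guarantee a measurable (indeed piecewise-constant) optimal selector.
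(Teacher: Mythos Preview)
Your plan is sound and would yield the theorem, but it diverges from the paper's argument in one structural respect. The paper does \emph{not} take $\mathbf{\Sigma}_t$ to be the belief; it defines $\mathbf{\Sigma}_t \triangleq (\mathbf{I}_t, \mathbf{D}_t, C_t, \mathbf{A}_{t-1}, \pi_{\mathrm{A}}, \mathbf{S}_{t-1})$ as a concrete tuple of latent variables and then checks, directly from the causal functions (\ref{eq:attacker_function})--(\ref{eq:service_state_fun}), that (i) $P(\mathbf{\Sigma}_{t+1}\mid \mathbf{\Sigma}_1,\dots,\mathbf{\Sigma}_t)=P(\mathbf{\Sigma}_{t+1}\mid \mathbf{\Sigma}_t)$ and (ii) $(\mathbf{O}_t,R_t)\indep \{\mathbf{V}_s\}_{s\leq t}\mid \mathbf{\Sigma}_t$. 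Together with finiteness of $\mathrm{dom}(\mathbf{\Sigma}_t)$, stationarity, and an Ionescu--Tulcea argument for the induced measure, this exhibits Prob.~\ref{prob:sec_response} as a finite stationary \pomdp with bounded rewards, after which the existence of a deterministic (and, for $\mathcal{T}=\infty$, stationary) optimal strategy is quoted from \cite[Thm.~7.6.1]{krishnamurthy_2016}. Your route instead jumps straight to the belief-\mdp and reproves the fixed-point and selection machinery by hand; this is more self-contained but incurs exactly the technical overhead you flag (continuity of $V^\star$ on the simplex, measurable selection), all of which the paper outsources to the cited \pomdp result. Note also that your identification of $\mathbf{\Sigma}_t$ with the belief clashes with the paper's later usage, where $\mathbf{b}_t(\bm{\sigma}_t)=P(\mathbf{\Sigma}_t=\bm{\sigma}_t\mid \mathbf{h}_t)$ explicitly treats $\mathbf{\Sigma}_t$ as the hidden state over which the belief is formed.
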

\begin{proof}
For notational convenience, let $\mathbf{S}_t\triangleq \{S_{i,t}\}_{i \in \mathcal{V}}$, $\mathbf{I}_t \triangleq \{I_{i,t}\}_{i \in \mathcal{V}}$, $\mathbf{D}_t \triangleq \{\mathbf{D}_{i,t}\}_{i \in \mathcal{V}}$, $\mathbf{W}_t\triangleq \{W_{i,t}\}_{i \in \mathcal{V}}$, and $\mathbf{Z}_t\triangleq \{Z_{i,t}\}_{i \in \mathcal{V}}$. We break down the proof into the following steps.

\vspace{1.3mm}

\noindent 1) $\mathbf{\Sigma}_t \triangleq (\mathbf{I}_t, \mathbf{D}_t, C_t, \mathbf{A}_{t-1}, \pi_{\mathrm{A}}, \mathbf{S}_{t-1})$ is Markovian.

\textsc{proof}:

\vspace{1.3mm}

{\centering
  $ \displaystyle
    \begin{aligned}
&P(\mathbf{\Sigma}_{t+1} \mid \mathbf{\Sigma}_1, \hdots, \mathbf{\Sigma}_t) = P(\mathbf{A}_{t}\mid \pi_{\mathrm{A}}, \mathbf{I}_t)P(\mathscr{A}_{t+1})P(\mathscr{D}_{t+1})\times \nonumber\\
&P(C_{t+1}\mid \mathscr{A}_{t+1},\mathscr{D}_{t+1}, C_{t})P(E_{t+1})P(\mathbf{S}_{t+1}\mid \mathbf{S}_{t},\mathbf{A}_{t})\times \nonumber\\
&P(\mathbf{I}_{t+1}\mid \mathbf{I}_t, \mathbf{A}_{t}, \mathbf{D}_{t+1}, E_{t+1})=P(\mathbf{\Sigma}_{t+1} \mid \mathbf{\Sigma}_t).\nonumber
    \end{aligned}
  $
\par}

\vspace{1.3mm}

\noindent 2) $(\mathbf{O}_{t},R_t \indep \{\mathbf{V}_t\}_{t=1,\hdots,t} \mid \mathbf{\Sigma}_t)$.

\textsc{proof}:

\vspace{1.3mm}

{\centering
  $ \displaystyle
    \begin{aligned}
&P(\mathbf{O}_{t},R_t \mid \{\mathbf{V}_t\}_{t=1,\hdots,t}) = P(\mathbf{Z}_t \mid C_t, \mathbf{A}_{t-1}, \mathbf{W}_t) \times \nonumber\\
  &P(\mathbf{W}_{t})P(\mathbf{S}_t \mid \mathbf{S}_{t-1},\mathbf{A}_{t-1})P(R_t \mid \mathbf{I}_t, \mathbf{S}_t)=P(\mathbf{O}_{t},R_t \mid \mathbf{\Sigma}_{t}).\nonumber%\\
%  &=P(\mathbf{O}_{t},R_t \mid \mathbf{\Sigma}_{t}).\nonumber
    \end{aligned}
  $
  \par}

\vspace{1.3mm}

\noindent 3) $\mathrm{dom}(\mathbf{O}_t)$, $\mathrm{dom}(\mathbf{\Sigma}_{t})$, and $R_t$ are finite.

\textsc{proof}: Follows from the theorem assumptions.

\vspace{0.6mm}

\noindent 4) Each defender strategy $\pi_{\mathrm{D}}$ induces a well-defined probability measure over the random sequence $(\mathbf{\Sigma}_t,\mathbf{O}_t)_{t \geq 1}$.

\textsc{proof}: 3) implies that the sample spaces of $(\mathbf{\Sigma}_t,\mathbf{O}_t)$ and $(\mathbf{\Sigma}_t,\mathbf{O}_t)_{t \geq 1}$ are measurable and countable, respectively. Further, the fact that $\mathcal{G}$ is acyclic implies that the interventional distributions induced by $(\mathrm{do}(\widehat{\mathbf{X}}_t=\widehat{\mathbf{x}}_t))_{t \geq 1}$ are well-defined \cite[Ch. 3]{pearl2000causality}. Consequently, the statement follows from the Ionescu-Tulcea extension theorem \cite{Tulcea49}.

\vspace{1.3mm}

\noindent 5) $P(\mathbf{\Sigma}_{t+1} | \mathbf{\Sigma}_t), P(\mathbf{O}_t |\mathbf{\Sigma}_t)$, and $P(R_t | \mathbf{\Sigma}_t)$ are stationary.

\textsc{proof}: Follows by stationarity of (\hyperref[pos]{M1}).

\vspace{0.6mm}

\noindent Statements 1--5 imply that Prob. \ref{prob:sec_response} defines a finite, stationary, and partially observed Markov decision process (\pomdp) with bounded rewards. The theorem thus follows from standard results in Markov decision theory, see \cite[Thm. 7.6.1]{krishnamurthy_2016}.
\end{proof}
Theorem \ref{thm:well_defined_prob} states that an optimal defender strategy $\pi^{\star}_{\mathrm{D}}$ exists. Finding such a strategy requires estimating the causal effect $P(J \mid \mathrm{do}(\widehat{\mathbf{X}}_1=\pi_{\mathrm{D}}(\mathbf{H}_1)), \hdots, \mathrm{do}(\widehat{\mathbf{X}}_{\mathcal{T}}=\pi_{\mathrm{D}}(\mathbf{H}_{\mathcal{T}})))$ for different strategies $\pi_{\mathrm{D}}$ \cite[Def. 3.2.1]{pearl2000causality}. A key question is thus whether the effect is identifiable, i.e., whether it can be estimated from the observables. The following theorem states that the answer is negative.
\begin{theorem}\label{thm:iv_id}
Problem \ref{prob:sec_response} is not identifiable (Def. \ref{def:control_id}).
\end{theorem}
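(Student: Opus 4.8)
The plan is to prove non-identifiability by the standard two-model technique: exhibit a single intervention strategy $\pi$ together with two SCMs $\mathscr{M}_1,\mathscr{M}_2$ that both conform to $\mathcal{G}$ (Fig. \ref{fig:causal_diagram}), induce the \emph{same} observational law $P(\mathbf{O})>0$, yet \emph{disagree} on $P(J \mid \mathrm{do}(\widehat{\mathbf{X}}_1=\pi(\mathbf{O})), \hdots, \mathrm{do}(\widehat{\mathbf{X}}_{\mathcal{T}}=\pi(\mathbf{O})))$. By Def. \ref{def:control_id} and Def. \ref{def:ID} a single such strategy suffices, since identifiability of the control problem requires the effect to be identifiable for \emph{every} $\pi$. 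I would take $\pi$ to be the null strategy $\mathrm{do}(\emptyset)$ at every step, so that the interventional distribution coincides with the observational law of $J$ and, crucially, so that the intrusion states stay latent; by contrast the analyze intervention $\mathrm{do}(Z_{i,t}=I_{i,t})$ would expose $I_{i,t}$ and could restore identifiability, so it must be avoided.

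Next I would isolate the confounded substructure responsible for the obstruction. In $\mathcal{G}$ the latent attacker variables $\pi_{\mathrm{A}}, \mathbf{A}_{t-1}$ and the latent exploit noise $E_t$ are common causes of the \emph{observed} activity $Z_{i,t}$ (through $f_{\mathrm{Z},i}$, \ref{eq:obs_function}) and of the \emph{latent} intrusion state $I_{i,t}$ (through $f_{\mathrm{I}}$, \ref{eq:intrusion_state_fun}); equivalently there is a bidirected edge $Z_{i,t}\leftrightarrow I_{i,t}$. The target decomposes as $J=f_{\mathrm{J}}(\{R_t\})$ with $R_t=f_{\mathrm{R}}(\{I_{i,t},S_{i,t}\})$ (\ref{eq:defender_objective}), so $J$ is an unobserved quantity whose dependence on the intrusion states is screened from $\mathbf{O}$ only by the noisy, confounded channel $Z_{i,t}$. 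The intuition is that benign client traffic (mediated by $C_t$ and $W_{i,t}$) and genuine intrusion activity can generate identical observation laws, so the distribution of $I_{i,t}$, and hence of $R_t$, cannot be recovered from $P(\mathbf{O})$ alone.

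To make this precise I would construct a minimal instance with a single node $i$ and horizon $\mathcal{T}=2$, embedded into the full model by making every other node and every other time step deterministic and attacker-free, so that $J=\gamma R_2=\gamma f_{\mathrm{R}}(I_{i,2},S_{i,2})$ with $S_{i,2}=1$ in both models. In $\mathscr{M}_1$ the attacker successfully exploits node $i$ ($\mathbf{A}_1$ an exploit, $E_2=1$, giving $I_{i,2}=\mathsf{C}$), while in $\mathscr{M}_2$ the attacker is inactive ($I_{i,2}=\mathsf{U}$). I would then pick the exogenous noise laws $P(W_{i,2})$ in the two models so that the marginal of $Z_{i,2}=f_{\mathrm{Z},i}(C_2,\mathbf{A}_1,W_{i,2})$ agrees across both regimes, which is feasible precisely because $W_{i,2}$ is a free exogenous input and $Z_{i,2}$ depends on the intrusion-relevant latent $\mathbf{A}_1$ only through this noisy channel. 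Since $S_{i,2}, \mathbf{D}_{i,2}, C_2$ then coincide by construction, the two models share $P(\mathbf{O})>0$, whereas $\beta_{\mathsf{C},z_i}\neq\beta_{\mathsf{U},z_i}$ forces $R_2$, and therefore $P(J)$, to differ. This produces the required pair and establishes non-identifiability.

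The main obstacle I anticipate is the noise-matching step: one must verify that the free distribution $P(W_{i,2})$ (together with the client process generating $C_2$) carries enough degrees of freedom to equalise the $Z_{i,2}$-marginals across the two attacker regimes while keeping $P(\mathbf{O})$ strictly positive and keeping both models genuinely conforming to $\mathcal{G}$ (respecting the parent sets, acyclicity, and the joint independence of the exogenous variables). A secondary point to check is the embedding, namely that forcing the remaining nodes and time steps to be trivial is itself consistent with $\mathcal{G}$ and with the null strategy (no intervention cost, services up, $I=\mathsf{U}$), so that the single-node gap in $P(J\mid\mathrm{do})$ survives intact in the full \textsc{cage-2} model.
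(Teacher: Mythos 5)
Your proposal is correct and establishes the theorem, but it takes a genuinely different route from the paper. The paper's proof also uses the two-model technique with $\mathcal{T}=2$, $|\mathcal{V}|=1$, but it localizes the discrepancy in the \emph{reward mechanism}: it fixes $f_{\mathrm{I}}\equiv\mathsf{R}$ and compares $f_{\mathrm{R}}=\mathbbm{1}_{I=\mathsf{R}}$ against $f_{\mathrm{R}}=S$, so that $P(\mathbf{O})$ is \emph{trivially} identical (since $R_t$ and $J$ are latent, any change to $f_{\mathrm{R}}$ is invisible to the observables) while $P(J\mid\mathrm{do}(I=\mathsf{S}))$ differs. That buys a two-line verification with no observational-matching step at all. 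Your construction instead fixes $f_{\mathrm{R}}$ and varies the attacker/intrusion mechanism, equalizing $P(\mathbf{O})$ by tuning the noise channel $W_{i,2}$ in $Z_{i,2}=f_{\mathrm{Z},i}(C_2,\mathbf{A}_1,W_{i,2})$; this costs you the noise-matching lemma you flag as the main obstacle (which does go through --- e.g.\ take $f_{\mathrm{Z},i}$ to depend only on $W_{i,2}$, which is a legitimate SCM conforming to $\mathcal{G}$ since causal functions may depend on any \emph{subset} of the graph parents), but it isolates the more operationally meaningful obstruction: the latent intrusion state is confounded with the observed activity, so a compromised and an uncompromised system can be observationally indistinguishable even with the reward semantics held fixed. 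Your framing via Def.~\ref{def:control_id} (one strategy $\pi$ suffices; choose the null strategy and avoid the analyze intervention) is also sound and is implicit rather than explicit in the paper. One small repair: with the paper's actual configuration $\beta_{I,z}=0$ for $I\neq\mathsf{R}$, so your contrast $\mathsf{C}$ vs.\ $\mathsf{U}$ yields equal rewards; either drive the compromised model to $I_{i,2}=\mathsf{R}$ or simply choose a conforming $f_{\mathrm{R}}$ with $\beta_{\mathsf{C},z}\neq\beta_{\mathsf{U},z}$, which is permitted because identifiability quantifies over all SCMs conforming to $\mathcal{G}$, not only the literal \textsc{cage-2} instance.
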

\begin{proof}
  To prove non-identifiability, it is sufficient to present two sets of causal functions $\mathbf{F}^{\prime},\mathbf{F}^{\prime\prime}$ that induce identical distributions over the observables but have different causal effects (Def. \ref{def:control_id}) \cite{pearl2000causality} \cite[Lem. 1]{shpitser_conditional_id}. For simplicity, consider $\mathcal{T}=2$ and $|\mathcal{V}|=1$. In this case $J=R$ (\ref{eq:defender_objective}). Let
\begin{align*}
  \mathbf{F}^{\prime} \triangleq \bigl\{&f_{\mathrm{R}}(S,I) \triangleq \mathbbm{1}_{I=\mathsf{R}},f_{\mathrm{I}}(\mathbf{A},E,\mathbf{D})=\mathsf{R},\\
  &f_{\mathrm{S}}(S,\mathbf{A})=1, f_{\mathrm{Z}},f_{\mathrm{C}}, f_{\mathrm{J}},f_{\mathrm{A}},f_{\mathrm{D}}\bigr\},
\end{align*}
where $\{f_{\mathrm{Z}},f_{\mathrm{C}}, f_{\mathrm{J}},f_{\mathrm{A}},f_{\mathrm{D}}\}$ are defined arbitrarily. Now let $\mathbf{F}^{\prime\prime}$ be equivalent to $\mathbf{F}^{\prime}$ except for $f_{\mathrm{R}}(S, I)\triangleq S$. Clearly $P(\mathbf{O})$ is the same with both $\mathbf{F}^{\prime}$ and $\mathbf{F}^{\prime\prime}$. However, $P(J=0\mid \mathrm{do}(I=\mathsf{S}))=1$ with $\mathbf{F}^{\prime}$ but $P(J=S \mid \mathrm{do}(I=\mathsf{S}))=1$ with $\mathbf{F}^{\prime\prime}$.
\end{proof}
Theorem \ref{thm:iv_id} states that causal effects of defender interventions (\ref{eq:defender_interventions}) can \textit{not} be identified from observations. This statement is obvious in hindsight but has important ramifications. It implies that to evaluate a defender strategy $\pi_{\mathrm{D}}$, the defender must either know (\hyperref[pos]{M1}) or perform controlled experiments to measure the effects of the interventions prescribed by $\pi_{\mathrm{D}}$.
%For example, the target system may have \textit{zero-day vulnerabilities} \cite{zeroday}, in which case $f_{\mathrm{I}}$ (\ref{eq:intrusion_state_fun}) is unknown.

While it is likely that the defender is aware of certain components of (\hyperref[pos]{M1}), it is unrealistic that it knows the entire model. A more reasonable assumption is that the defender knows the causal graph $\mathcal{G}$ (Fig. \ref{fig:causal_diagram}), which does not capture all nuances of the causal mechanisms but provides structural information. Leveraging this structure, we next present a method for finding an optimal strategy $\pi^{\star}_{\mathrm{D}}$ which only requires access to the causal graph and a simulator of (\hyperref[pos]{M1}).
\begin{remark}
\normalfont Access to a simulator is assumed by virtually all existing methods for autonomous cyber defense \cite{vyas2023automated,wolk2022cage,alan_turing_1, foley_cage_1, foley2023inroads,sussex_1, TANG2024103871,kiely2023autonomous,Richer2023,cage_cognitive,doi:10.1177/1071181322661504,tabular_Q_andy,wiebe2023learning,10476122,yan2024depending,cheng2024rice}.
\end{remark}
%In the case of \textsc{cage-2}, the simulator is provided as part of the evaluation scenario, but more generally it can be learned from data.
%The simulator is required to perform controlled experiments and measure the effects of interventions (Thm. \ref{thm:iv_id}).
%is a weaker assumption compared to requiring a complete model and is
%The simulator can be learned from data, or, as is the the case of \textsc{cage-2}, the simulator may be part of the evaluation environment.
%for benchmarking purposes. In other cases the simulator can be learned from data.
\section{Causal Partially Observable \\Monte-Carlo Planning (c-pomcp)}\label{sec:c_pomcp}
In this section, we present \textsc{c-pomcp}, an online method for obtaining an optimal defender strategy $\pi^{\star}_{\mathrm{D}}$ for Prob. \ref{prob:sec_response}. The method involves three consecutive actions that are performed at each time step $t$ (see Fig. \ref{fig:approach}).
%It takes as input the causal graph $\mathcal{G}$ (Fig. \ref{fig:causal_diagram}) and a simulator $\mathscr{S}$ of (\hyperref[pos]{M1}).

The first action uses the observation $\mathbf{o}_t$ and a \textit{particle filter} to compute the defender's \textit{belief} $\widehat{\mathbf{b}}_t$ in the form of a probability distribution over the latent variables $\mathbf{L}$ in (\hyperref[pos]{M1}). The second action constructs a \textit{search tree} of possible future histories $\mathbf{h}_k$ (\ref{eq:history_def}), which is initialized with a root node that represents the current history $\mathbf{h}_t$. Each edge extends this history by either an observation or an intervention: if $\mathbf{h}_{k+1}$ is a child node of $\mathbf{h}_k$, then either $\mathbf{h}_{k+1} = (\mathbf{h}_{k}, \mathbf{o}_{k+1})$ or $\mathbf{h}_{k+1} = (\mathbf{h}_{k}, \mathrm{do}(\widehat{\mathbf{X}}_{k}=\widehat{\mathbf{x}}_{k}))$. \textsc{c-pomcp} then \textit{prunes} the tree by \textit{excluding} histories that contain interventions that do not belong to a \pomis (Def. \ref{def:pomis}). The third action uses the belief $\widehat{\mathbf{b}}_t$ and the pruned tree to perform \textit{Monte-Carlo tree search}, which involves estimating the reward $J$ (\ref{eq:defender_objective}) through simulations. Once the search has been completed, the intervention from the root node that leads to the highest value of $J$ (\ref{eq:defender_objective}) is returned. The pseudocode of \textsc{c-pomcp} is listed in Alg. \ref{alg:c_pomcp}, and the main components of the method are described below.
%The method involves three actions that are performed at each time step: particle filtering, causal pruning, and tree search (see Fig. \ref{fig:approach}).
%\vspace{2mm}
\subsection{Particle Filtering to Estimate Latent Variables}
The particle filter is a method for state estimation in partially observed dynamical systems \cite{thrun2005probabilistic}. Since (\hyperref[pos]{M1}) can be formulated as such a system (Thm. \ref{thm:well_defined_prob}), we use the particle filter to estimate the values of the latent variables $\mathbf{L}$ in (\hyperref[pos]{M1}), e.g., the intrusion state $I_{i,t}$ (\ref{eq:intrusion_state_fun}).

We define the defender's \textit{belief state} as
\begin{align}
&\mathbf{b}_t(\bm{\sigma}_t)\triangleq P(\mathbf{\Sigma}_t=\bm{\sigma}_t \mid \mathbf{h}_{t}) \label{eq:belief_def}\\
&\numeq{\text{Bayes}} \eta P(\mathbf{o}_t \mid \bm{\sigma}_t, \mathbf{h}_{t-1})P(\bm{\sigma}_t \mid \mathrm{do}(\widehat{\mathbf{X}}_t=\widehat{\mathbf{x}}_t), \mathbf{h}_{t-1}) \nonumber\\
&\numeq{\text{Markov}} \eta P(\mathbf{o}_t \mid \bm{\sigma}_t)P(\bm{\sigma}_t \mid \mathrm{do}(\widehat{\mathbf{X}}_t=\widehat{\mathbf{x}}_t), \mathbf{h}_{t-1}) \nonumber\\
&\numeq{\text{Markov}} \eta P(\mathbf{o}_t \mid \bm{\sigma}_t)\sum_{\bm{\sigma}_{t-1}}P(\bm{\sigma}_t \mid \bm{\sigma}_{t-1}, \mathrm{do}(\widehat{\mathbf{X}}_t=\widehat{\mathbf{x}}_t))\mathbf{b}_{t-1}(\bm{\sigma}_{t-1}), \nonumber
\end{align}
where $\mathbf{h}_{t}$ is the history (\ref{eq:history_def}), $\bm{\sigma}_t$ is a realization of $\mathbf{\Sigma}_t$ (see Thm. \ref{thm:well_defined_prob}), and $\eta$ is a normalizing constant. The sum in (\ref{eq:belief_def}) is over all possible realizations of $\mathbf{\Sigma}_{t-1}$.

The computational complexity of (\ref{eq:belief_def}) is $\mathcal{O}(|\mathrm{dom}(\mathbf{\Sigma})|^2)$, which grows quadratically with the size of the state space and exponentially with the number of state variables. For this reason, the particle filter approximates (\ref{eq:belief_def}) by representing $\mathbf{b}_t$ by a set of $M$ sample states (particles) $\mathcal{P}_t=\{\widehat{\bm{\sigma}}_t^{(1)},\hdots,\widehat{\bm{\sigma}}_t^{(M)}\}$ \cite{particle_filter_survey}. These particles are sampled recursively as
\begin{subequations}\label{eq:particle_filter}
\begin{align}
\overline{\mathcal{P}}_t &\triangleq \bigcup_{i=1}^M  \left\{\widehat{\bm{\sigma}}_t^{(i)} \sim P\left(\cdot \mid \widehat{\bm{\sigma}}_{t-1}^{(i)}, \mathrm{do}(\widehat{\mathbf{X}}_{t-1}=\widehat{\mathbf{x}}_{t-1})\right)\right\} \label{eq:prediction}\\
\mathcal{P}_t &\triangleq \bigcup_{i=1}^M \left\{\widehat{\bm{\sigma}}_t^{(i)} \stackrel{\propto P(\mathbf{o}_t | \widehat{\bm{\sigma}}_t^{(i)})}{\sim}\overline{\mathcal{P}}_t\right\},\label{eq:resampling}
\end{align}
\end{subequations}
where $\widehat{\bm{\sigma}}_{t-1}^{(i)} \in \mathcal{P}_{t-1}$ and $x \stackrel{\propto \varphi}{\sim}$ means that $x$ is sampled with probability proportional to $\varphi$.

(\ref{eq:prediction})--(\ref{eq:resampling}) focus the particle set to regions of the state space with a high probability of generating the latest observation $\mathbf{o}_t$ \cite{thrun2005probabilistic}. This ensures that the belief state induced by the particles converges to (\ref{eq:belief_def}) when $M \rightarrow \infty$, as stated below.
\begin{theorem}\label{thm:particles}
Let $\widehat{\mathbf{b}}(\bm{\sigma}_t) = \frac{1}{M}\sum_{i=1}^M\mathbbm{1}_{\bm{\sigma}_t=\widehat{\bm{\sigma}}_t^{(i)}}$, then
\begin{align*}
\lim_{M \rightarrow \infty}\widehat{\mathbf{b}}_t \rightarrow \mathbf{b}_t \text{ almost surely} && \forall t.
\end{align*}
\end{theorem}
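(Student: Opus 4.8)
The plan is to prove the claim by induction on the time index $t$, exploiting the fact that $\mathrm{dom}(\mathbf{\Sigma}_t)$ is finite (Thm. \ref{thm:well_defined_prob}), so that convergence of $\widehat{\mathbf{b}}_t$ to $\mathbf{b}_t$ reduces to the pointwise convergence $\widehat{\mathbf{b}}_t(\bm{\sigma}) \to \mathbf{b}_t(\bm{\sigma})$ for each of the finitely many states $\bm{\sigma}$. The base case $t=1$ is immediate: the initial condition of (\hyperref[pos]{M1}) fixes $\mathbf{\Sigma}_1$ deterministically, so $\mathbf{b}_1$ is a point mass and every particle coincides with it. For the inductive step I would assume $\widehat{\mathbf{b}}_{t-1} \to \mathbf{b}_{t-1}$ almost surely and analyze the prediction step (\ref{eq:prediction}) and the resampling step (\ref{eq:resampling}) in turn, since the belief recursion (\ref{eq:belief_def}) is exactly the composition of propagation through the transition kernel and Bayesian reweighting by the observation likelihood.

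For the prediction step, I would condition on the $\sigma$-algebra generated by the particle set $\mathcal{P}_{t-1}$. Given this, the propagated particles $\widehat{\bm{\sigma}}_t^{(i)}$ are independent (one drawn from $P(\cdot \mid \widehat{\bm{\sigma}}_{t-1}^{(i)}, \mathrm{do}(\widehat{\mathbf{X}}_{t-1}=\widehat{\mathbf{x}}_{t-1}))$ per parent), and for a fixed target value $\bm{\sigma}$ the indicators $\mathbbm{1}_{\widehat{\bm{\sigma}}_t^{(i)}=\bm{\sigma}}$ are conditionally independent and bounded in $[0,1]$. Their conditional mean averages to $\sum_{\bm{\sigma}_{t-1}} \widehat{\mathbf{b}}_{t-1}(\bm{\sigma}_{t-1}) P(\bm{\sigma}\mid \bm{\sigma}_{t-1}, \mathrm{do}(\widehat{\mathbf{X}}_{t-1}=\widehat{\mathbf{x}}_{t-1}))$, which by the induction hypothesis and finiteness of the state space converges to the true predicted mass $\overline{\mathbf{b}}_t(\bm{\sigma}) = \sum_{\bm{\sigma}_{t-1}} \mathbf{b}_{t-1}(\bm{\sigma}_{t-1})P(\bm{\sigma}\mid \bm{\sigma}_{t-1}, \mathrm{do}(\widehat{\mathbf{X}}_{t-1}=\widehat{\mathbf{x}}_{t-1}))$. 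A Hoeffding bound on the deviation of the empirical frequency from this conditional mean, combined with a union bound over the finitely many states and the Borel--Cantelli lemma, then yields that the predicted empirical measure converges to $\overline{\mathbf{b}}_t$ almost surely.

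The update step I would treat analogously: conditioning on the predicted set $\overline{\mathcal{P}}_t$, the resampled particles in (\ref{eq:resampling}) are i.i.d. draws from the weighted empirical law with mass proportional to $P(\mathbf{o}_t \mid \widehat{\bm{\sigma}}_t^{(i)})$, so the same Hoeffding--Borel--Cantelli argument shows that $\widehat{\mathbf{b}}_t$ converges almost surely to that weighted empirical law. It remains to identify the limit with $\mathbf{b}_t$; this follows from the previous paragraph together with the continuous mapping theorem, since the normalized reweighting $\bm{\nu} \mapsto (\bm{\sigma}\mapsto P(\mathbf{o}_t\mid\bm{\sigma})\nu(\bm{\sigma}) / \sum_{\bm{\sigma}'}P(\mathbf{o}_t\mid\bm{\sigma}')\nu(\bm{\sigma}'))$ is continuous in $\bm{\nu}$ wherever the denominator is positive, and it maps $\overline{\mathbf{b}}_t$ to the Bayesian posterior $\mathbf{b}_t$ of (\ref{eq:belief_def}).

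I expect the main obstacle to be the fact that the particles are not globally i.i.d.: resampling couples them within each generation and across time, so the ordinary strong law of large numbers does not apply to the filter as a whole. The resolution, and the crux of the argument, is the repeated conditioning on the previous generation, which restores conditional independence one step at a time and lets a conditional concentration inequality propagate the almost-sure convergence through the recursion. A secondary technical point that must be checked is that the normalizing denominator $\sum_{\bm{\sigma}'} P(\mathbf{o}_t \mid \widehat{\bm{\sigma}}_t^{(i)})/M$ stays bounded away from zero in the limit, so that the continuous mapping theorem applies; this holds because the realized observation $\mathbf{o}_t$ has positive probability under the true predicted belief $\overline{\mathbf{b}}_t$, which guarantees a strictly positive limiting normalizer.
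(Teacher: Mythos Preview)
Your proposal is correct and shares the inductive skeleton with the paper's proof (induction on $t$, trivial base case at $t=1$), but the inductive step is organized differently. The paper collapses prediction and resampling into a single importance-sampling identity: it rewrites $\mathbf{b}_t(\bm{\sigma}) = \mathbb{E}_{\overline{\bm{\sigma}}\sim g}\bigl[\eta\,P(\mathbf{o}_t\mid\overline{\bm{\sigma}})\,\mathbbm{1}_{\bm{\sigma}=\overline{\bm{\sigma}}}\bigr]$, where $g$ is the one-step predicted belief built from $\widehat{\mathbf{b}}_{t-1}$, observes that under the induction hypothesis the resampled particles are (in the limit) i.i.d.\ from the normalized weighted form of $g$, and invokes the strong law of large numbers once. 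You instead treat the prediction and resampling steps as two separate stochastic approximations, each dispatched by a conditional Hoeffding bound plus Borel--Cantelli, and then glue the limits together via the continuous mapping theorem applied to the Bayesian reweighting map. The paper's change-of-measure trick is shorter but leans on the induction hypothesis holding exactly, which glosses over the fact that the resampled particles are i.i.d.\ from the \emph{empirical} weighted predicted law rather than from $g$ itself; your two-stage argument makes this conditioning structure and the positivity of the normalizer explicit, and would extend more readily if one wanted a quantitative rate rather than bare almost-sure convergence.
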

This is a standard result in particle filtering. The proof is given in Appendix \ref{appendix:proof_proposition_1}.

%We first sample a set of \textit{particles} $\widehat{\bm{\sigma}}_t^{(1)}, \hdots, \widehat{\bm{\sigma}}_t^{(M)}$ from the simulator $\mathscr{S}$ with probability proportional to $P(\mathbf{o}_t | \widehat{\bm{\sigma}}_t^{(i)})$, where $\bm{\sigma}$ is a realization of $\Sigma$ (Thm. \ref{thm:well_defined_prob}) and $\mathbf{o}_t$ is the latest observation. We then use the particles to estimate the latent variables with the \textit{belief state}
%\begin{align}
%\widehat{\mathbf{b}}(\bm{\sigma}_t) \triangleq \frac{1}{M}\sum_{i=1}^M\delta_{\bm{\sigma}_t,\widehat{\bm{\sigma}}_t^{(i)}},  \label{eq:belief_def}
%\end{align}
%where $\lim_{M \rightarrow \infty}\widehat{\mathbf{b}}_t(\bm{\sigma}_t) \rightarrow P(\mathbf{\Sigma}_t=\bm{\sigma}_t | \mathbf{h}_{t})$ (see Appendix \ref{appendix:proof_proposition_1}).

\subsection{Causal Pruning of the Search Tree}\label{sec:pruning}
We use the causal graph $\mathcal{G}$ (Fig. \ref{fig:causal_diagram}) to prune the search tree by excluding histories $\mathbf{h}_k$ (\ref{eq:history_def}) that contain interventions that do not belong to a \pomis $\mathbf{P}_{\mathcal{G}}^{\star}$ (Def. \ref{def:pomis}). For example, when $t=\mathcal{T}-1$, then $Z_{i,t}$ (\ref{eq:obs_function}) and $J$ (\ref{eq:defender_objective}) are \textit{d-separated} in $\mathcal{G}$ \cite[Def. 1.2.3]{pearl2000causality}. This means that the intervention $\mathrm{do}(Z_{i,t}=I_{i,t})$ (\ref{analyze_intervention}) has no causal effect on $J$ and thus $Z_{i,t}\not\in \mathbf{P}_{\mathcal{G}}^{\star}$.

By restricting the possible interventions at time $t$ to the set of \pomis{}s $\mathbf{P}_{\mathcal{G}}^{\star}$, the number of interventions in the search tree is reduced by a factor of
\begin{align}
\prod_{t=1}^{\mathcal{T}}\frac{\sum_{\tilde{\mathbf{X}} \in \mathbf{P}_{\mathcal{G}}^{\star}}|\mathrm{dom}(\tilde{\mathbf{X}})|}{\sum_{\widehat{\mathbf{X}} \in 2^{\mathbf{X}_t}}|\mathrm{dom}(\widehat{\mathbf{X}})|},\label{pruning_effect}
\end{align}
where $\mathbf{X}_t$ is the set of manipulative variables at time $t$ (\hyperref[pos]{M1}).

Hence, even if only a small subset of interventions does not belong to an \pomis, a significant reduction in the search tree size can be expected (see Fig. \ref{fig:tree_prune}). Unfortunately, computing $\mathbf{P}_{\mathcal{G}}^{\star}$ is generally intractable, as stated below.
\begin{proposition}\label{prop:pomis_complexity}
Computing $\mathbf{P}_{\mathcal{G}}^{\star}$ (Def. \ref{def:pomis}) is \textsc{pspace}-hard.
\end{proposition}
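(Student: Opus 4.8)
The plan is to establish \textsc{pspace}-hardness by a polynomial-time reduction from a known \textsc{pspace}-complete problem. The natural source is the policy-existence problem for finite-horizon partially observed Markov decision processes: given a \pomdp{} with finite state, action, and observation spaces, a horizon $\mathcal{T}$, and a threshold $\theta$, decide whether some policy attains expected cumulative reward at least $\theta$. This problem is \textsc{pspace}-complete (Papadimitriou and Tsitsiklis), and it is an attractive target here because Theorem \ref{thm:well_defined_prob} already exhibits a tight correspondence between the control problem on a causal graph and a finite \pomdp{}. I would exploit that correspondence in reverse, embedding an arbitrary \pomdp{} instance into a causal graph and then reading the threshold query off of $\mathbf{P}_{\mathcal{G}}^{\star}$.

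The first step is the embedding. Given a \pomdp{} instance, I would construct in polynomial time an \scm{} (\ref{eq:scm_def}) together with its causal graph $\graph$ whose endogenous variables are the state, observation, and reward variables at each of the $\mathcal{T}$ steps, whose exogenous variables supply the stochastic transitions and emissions, and whose manipulative variables $\mathbf{X}$ are the per-step action variables. A conditional intervention $\mathrm{do}(\widehat{\mathbf{X}}_t=\pi(\mathbf{O}))$ then corresponds exactly to a \pomdp{} policy, and the target $J$ (\ref{eq:defender_objective}) reproduces the (discounted) return, so that $\mathbb{E}_{\pi^{\star}}[J]$ is precisely the optimal \pomdp{} value. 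This embedding is the standard translation of a finite \pomdp{} into an \scm{}, so the only care needed is to verify that the graph and mechanisms are writable in polynomial time.

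The second step is a gadget that makes membership of a distinguished set $\tilde{\mathbf{X}}$ in $\mathbf{P}_{\mathcal{G}}^{\star}$ (Def. \ref{def:pomis}) decide the threshold query. The idea is to adjoin an auxiliary manipulative variable together with a latent confounder so that intervening on the full action set $\tilde{\mathbf{X}}$ strictly dominates every proper subset $\mathbf{X}^{\prime} \subset \tilde{\mathbf{X}}$ in some conforming \scm{} exactly when the embedded \pomdp{} admits a policy of value at least $\theta$, while the minimality clause is forced by construction. Here the strategy $\pi$ in Def. \ref{def:pomis} is a function of the observables, so its action at a given step may depend on previously observed outcomes; this sequential dependence supplies the polynomially many quantifier alternations that separate \textsc{pspace} from the polynomial hierarchy, mirroring the alternation in the quantified Boolean formula that renders finite-horizon \pomdp{} planning \textsc{pspace}-hard in the first place. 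Given such a gadget, the \pomdp{} instance is a ``yes'' instance iff $\tilde{\mathbf{X}} \in \mathbf{P}_{\mathcal{G}}^{\star}$, so any algorithm computing $\mathbf{P}_{\mathcal{G}}^{\star}$ also decides the query.

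The main obstacle is precisely this gadget. Definition \ref{def:pomis} quantifies universally over \emph{all} \scm{}s conforming to $\graph$ (through the minimality clause) while simultaneously asserting the \emph{existence} of one favorable \scm{} (through the optimality clause), both relative to the optimal strategy $\pi^{\star}$. I would therefore have to shape the confounding structure so that neither clause can be discharged trivially, and so that the only route by which $\tilde{\mathbf{X}}$ can beat all of its proper subsets under $\pi^{\star}$ is by realizing a high-value policy of the embedded \pomdp{}. Proving that this equivalence is \emph{exact}---that no spurious conforming \scm{} turns $\tilde{\mathbf{X}}$ into a \pomis{} when the query is negative, and that a genuine witness policy can be recovered when it is positive---is where essentially all of the technical care resides, and is the step I expect to dominate the argument.
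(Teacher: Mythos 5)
Your overall strategy matches the paper's: both arguments pivot on the correspondence between the intervention-control problem and a finite \pomdp{} (Thm.~\ref{thm:well_defined_prob}) and on the \textsc{pspace}-hardness of \pomdp{} planning due to Papadimitriou and Tsitsiklis. The difference is in how the link to $\mathbf{P}_{\mathcal{G}}^{\star}$ is made. The paper argues in the other direction and much more cheaply: it observes that certifying membership in $\mathbf{P}_{\mathcal{G}}^{\star}$ requires checking condition (\ref{eq:pomis_condition}), which quantifies over the optimal strategy $\pi^{\star}$ and the values $\mathbb{E}_{\pi^{\star}}[J\mid \mathrm{do}(\cdot)]$; hence an algorithm that computes $\mathbf{P}_{\mathcal{G}}^{\star}$ already yields an optimal solution to Prob.~\ref{prob:sec_response}, i.e., solves a \pomdp{}. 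No embedding gadget is constructed there at all. You instead set up a many-one reduction from the \pomdp{} policy-existence (threshold) problem, which, if completed, would be the more rigorous of the two arguments.

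The genuine gap is that you never construct the object your reduction depends on. The embedding of a \pomdp{} into an \scm{} is indeed routine, but the decision being reduced to is ``is $\tilde{\mathbf{X}} \in \mathbf{P}_{\mathcal{G}}^{\star}$?'', and Def.~\ref{def:pomis} is a statement about \emph{minimality across all conforming \scm{}s} plus \emph{existence of one favorable \scm{}} --- it contains no threshold $\theta$ anywhere. Your entire proof therefore rests on a confounding gadget that converts ``the embedded \pomdp{} has value $\geq \theta$'' into ``$\tilde{\mathbf{X}}$ is not dominated by any proper subset,'' and you explicitly defer both its construction and the proof that the equivalence is exact in both directions. Since that gadget is the only non-routine content of the reduction, the proposal as written does not establish the proposition; it identifies the right source problem and the right high-level plan, then stops where the paper's claim actually needs to be earned. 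To close the gap you would either have to build and verify the gadget, or fall back on the paper's shorter route: argue that any procedure outputting $\mathbf{P}_{\mathcal{G}}^{\star}$ must evaluate (\ref{eq:pomis_condition}), and that evaluating (\ref{eq:pomis_condition}) for the graph induced by Prob.~\ref{prob:sec_response} subsumes computing an optimal \pomdp{} policy.
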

\begin{proof}
%=(\Pi, \mathcal{G})
We prove the \textsc{pspace}-hardness by reduction to the problem of solving a \pomdp, which is \textsc{pspace}-hard \cite[Thm. 6]{pspace_complexity}. Let $x$ be an instance of the problem of computing $\mathbf{P}_{\mathcal{G}}^{\star}$ (Def. \ref{def:pomis}). Finding a solution to $x$ involves checking (\ref{eq:pomis_condition}) for each $\tilde{\mathbf{X}}_t \in \mathbf{P}_{\mathcal{G}}^{\star}$. This means that a solution to $x$ allows constructing an optimal solution to Prob. \ref{prob:sec_response}. By Thm \ref{thm:well_defined_prob}, such a solution also provides a solution to a \pomdp.
\end{proof}
Given the impracticality of computing $\mathbf{P}_{\mathcal{G}}^{\star}$ (Prop. \ref{prop:pomis_complexity}), we approximate $\mathbf{P}_{\mathcal{G}}^{\star}$ as follows. First, we reduce the causal graph to a \textit{subgraph} $\mathcal{G}[\mathbf{U}_{t-1} \cup \mathbf{U}_t \cup \mathbf{V}_{t-1} \cup \mathbf{V}_t]$ \cite[Def. 7.1.2]{pearl2000causality}. We then remove all variables in the subgraph whose values are uniquely determined by $\widehat{\mathbf{b}}$ (\ref{eq:particle_filter}). Subsequently, we add a node to the subgraph that represents the target $J$ (\ref{eq:defender_objective}), whose causal parents \cite[Def. 1.2.1]{pearl2000causality} are determined using a \textit{base strategy} $\widehat{\pi}$, which can be chosen freely. It can, for example, be based on heuristics or be designed by a domain expert. Finally, we compute a \pomis for the reduced graph using \cite[Alg. 1]{lee2019structural}.
\begin{remark}
\normalfont Since \cite[Alg. 1]{lee2019structural} is sound and complete \cite[Thm. 9]{lee2019structural}, the approximation described above is exact when the base strategy $\widehat{\pi}$ is optimal.
\end{remark}
%It can also be based on heuristics or be designed by a domain expert. Irrespective of how $\widehat{\pi}$ is chosen, the subgraph truncates the causal graph $\mathcal{G}$ to a single time step $t$, which means that we can efficiently compute the set of \pomis{}s using the algorithm presented in \cite[Alg. 1]{lee2019structural}.

When applying the above procedure to the \textsc{cage-2} scenario, we identify the following types of defender interventions that are never included in a \pomis: (\textit{i}) interventions that start decoys that are already running; (\textit{ii}) defensive interventions on nodes that are not compromised according to $\widehat{\mathbf{b}}$ (\ref{eq:particle_filter}); and (\textit{iii}) forensic and deceptive interventions on nodes that are compromised according to $\widehat{\mathbf{b}}$ (\ref{eq:particle_filter}).

\begin{remark}
\normalfont The pruning of the search tree based on the \pomis{}s (Def. \ref{def:pomis}) occurs \textit{during} the construction of the tree. The complete search tree is generally too large to construct.
\end{remark}

\begin{figure}
  \centering
  \scalebox{0.88}{
    \begin{tikzpicture}[
    % define a style for the dots
    dot/.style={
        draw=black,
        fill=blue!90,
        circle,
        minimum size=3pt,
        inner sep=0pt,
        solid,
    },
    ]

\node[scale=1] (kth_cr) at (0,0)
{
  \begin{tikzpicture}[declare function={sigma(\x)=1/(1+exp(-\x));
sigmap(\x)=sigma(\x)*(1-sigma(\x));}]
\begin{axis}[
        xmin=1,
        xmax=100,
        ymin=0,
        ymax=105,
        width =1.175\columnwidth,
        height = 0.4\columnwidth,
        axis lines=center,
        xmajorgrids=true,
        ymajorgrids=true,
        major grid style = {lightgray},
        minor grid style = {lightgray!25},
        scaled y ticks=false,
        yticklabel style={
        /pgf/number format/fixed,
        /pgf/number format/precision=5
        },
        xlabel style={below right},
        ylabel style={above left},
        axis line style={-{Latex[length=2mm]}},
        smooth,
        legend style={at={(0.92,-0.23)}},
        legend columns=5,
        legend style={
          draw=none,
                    % the /tikz/ prefix is necessary here...
                    % otherwise, it might end-up with `/pgfplots/column 2`
                    % which is not what we want. compare pgfmanual.pdf
            /tikz/column 2/.style={
                column sep=0pt,
              }
              }
              ]
%pa=0.1, pu=0, p_c_1=0.00001
              \addplot[Blue,mark=diamond,smooth, name path=l1, thick, domain=1:100]   (x,{(1-(0.99)^x)*100});

              % pa=0.05, pu=0, p_c_1=0.00001
              \addplot[OliveGreen,mark=pentagon,mark repeat=1, smooth, name path=l1, thick, domain=1:100]   (x,{(1-(0.975)^x)*100});
              \addplot[Red,mark=triangle,mark repeat=1, smooth, name path=l1, thick, domain=1:100]   (x,{(1-(0.95)^x)*100});

              \addplot[Purple,mark=o,mark repeat=1, smooth, name path=l1, thick, domain=1:100]   (x,{(1-(0.925)^x)*100});
              \addplot[Black,mark=square,mark repeat=1, smooth, name path=l1, thick, domain=1:100]   (x,{(1-(0.9)^x)*100});

%pa=0.01, pu=0, p_c_1=0.00001

%pa=0.005, pu=0, p_c_1=0.00001
%              \addplot[Purple,mark=o, mark repeat=1, smooth, name path=l1, thick, domain=1:100]   (x,{1-(1-0.005001)^(x)});
\legend{$0.99$, $0.975$, $0.95$, $0.925$, $0.9$}
\end{axis}
\end{tikzpicture}
};

\node[inner sep=0pt,align=center, scale=1, rotate=0, opacity=1] (obs) at (-1.7,1.57)
{
\% reduction of the search tree size
};
\node[inner sep=0pt,align=center, scale=1, rotate=0, opacity=1] (obs) at (4.89,-0.51)
{
 $\mathcal{T}$
};

  \end{tikzpicture}
  }
  \caption{Reduction of the size of the search tree by pruning the intervention space $|2^{\mathbf{X}_t}|$ to the set of \pomis{}s $\mathbf{P}_{\mathcal{G}}^{\star}$ (Def. \ref{def:pomis}); the x-axis indicates the tree depth $\mathcal{T}$; curves relate to the factor in (\ref{pruning_effect}).}
  \label{fig:tree_prune}
\end{figure}
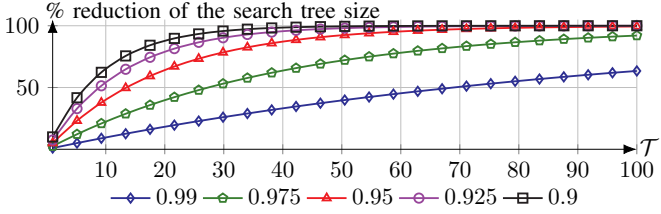

\subsection{Monte-Carlo Tree Search}\label{pomcp_planning}
Given the particle filter (\ref{eq:belief_def}) and the \textsc{pomis} (Def. \ref{def:pomis}), \textsc{c-pomcp} searches for optimal interventions using the tree search algorithm described in \cite[Alg. 1]{pomcp}. This algorithm constructs a search tree iteratively by repeating five steps (see Fig. \ref{fig:mcts}): (\textit{i}) it selects a path from the root to a leaf node using the \textit{tree policy} described below; (\textit{ii}) it expands the tree by adding children to the leaf, each of which corresponds to an intervention (\ref{eq:defender_interventions}) on a \pomis (Def. \ref{def:pomis}); (\textit{iii}) it executes a rollout simulation from the leaf; (\textit{iv}) it adds a child to the leaf that corresponds to the first observation (\ref{eq:feedback}) in the simulation; and (\textit{v}) it records the value of $J$ (\ref{eq:defender_objective}) and backpropagates the value up the tree.

\vspace{2mm}

\noindent\textbf{Tree policy.} A node at depth $k$ of the tree is associated with a history $\mathbf{h}_k$ (\ref{eq:history_def}) and stores two variables: the average cumulative reward $\widehat{J}(\mathbf{h}_k)$ (\ref{eq:defender_objective}) of simulations on the subtree emanating from the node, and the visit count $N(\mathbf{h}_k) \geq 1$, which is incremented whenever the node is visited during the search. Using these variables, we implement the tree policy by selecting nodes that maximize the upper confidence bound
\begin{align}
\widehat{J}(\mathbf{h}_k) + c\sqrt{\frac{\ln N(\mathbf{h}_{k-1})}{N(\mathbf{h}_k)}},\label{eq:ucb} %
\end{align}
where $c > 0$ controls the exploration-exploitation trade-off.

\vspace{2mm}

\noindent\textbf{Rollout.} The initial state of a rollout simulation is sampled from the belief state $\widehat{\mathbf{b}}_k$ (\ref{eq:particle_filter}), and the intervention at each time step is selected using the \textit{base strategy} $\widehat{\pi}$ (\S \ref{sec:pruning}). The simulation executes for a depth of $\delta_{\mathrm{R}}$, after which the reward for the remainder of the simulation is estimated using a \textit{base value function} $J_{\widehat{\pi}}$. Like the base strategy, this function can be chosen freely. It can, for example, be obtained through offline reinforcement learning. After the simulation has completed, the discounted sum of the rewards $R_1,R_t,\hdots R_{\delta_{\mathrm{R}}}$ (\ref{eq:defender_objective}) and $J_{\widehat{\pi}}$ is used to update $\widehat{J}(\mathbf{h}_k)$.
%of the node from which the rollout started.

%Theorem 5.A implies that it is sub-optimal to do a defensive intervention on a node that is not compromised. Theorem 5.B states that it is sub-optimal to do a forensic or  intervention on a compromised node. Theorem 5.C states that it is unnecessary to start a decoy that is already running. Lastly, Thm. 5.D states that it is sub-optimal for the defender to intervene on a node i that is never targeted by the attacker. b t allow to These four rules together with the belief state b prune the search tree by only considering interventions that may belong to a POMIS (Def. 3).

% , which can be chosen freely. It can for example be obtained through offline reinforcement learning or dynamic programming. It can also be based on heuristics or be designed by a domain expert.

\vspace{2mm}

\noindent\textbf{Convergence.}
The process of running simulations and extending the search tree continues for a \textit{search time} $s_{\mathrm{T}}$, after which the intervention that leads to the maximal value of $J$ (\ref{eq:defender_objective}) is returned, i.e.,
\begin{align*}
\mathrm{do}(\tilde{\mathbf{X}}_t=\tilde{\mathbf{x}}_t) \in \argmax_{\mathrm{do}(\widehat{\mathbf{X}}_t=\widehat{\mathbf{x}}_t)}\widehat{J}((\mathbf{h}_t, \mathrm{do}(\widehat{\mathbf{X}}_t=\widehat{\mathbf{x}}_t))).
\end{align*}
We can express this search procedure as
\begin{align}
\mathrm{do}(\tilde{\mathbf{X}}_t=\tilde{\mathbf{x}}_t) \leftarrow \mathscr{T}(\mathbf{h}_t, \widehat{\mathbf{b}}_t, \widehat{\pi}, s_{\mathrm{T}}, \mathscr{S}, \mathcal{G}, \mathbf{P}_{\mathcal{G}}^{\star}), \label{eq:planning_operator}
\end{align}
where $\mathscr{T}$ is a tree search operator.
\begin{theorem}\label{thm:pomcp_convergence}
Under the assumptions made in Thm. \ref{thm:well_defined_prob} and further assuming that the \pomis computation is exact (\S \ref{sec:pruning}), $M \rightarrow \infty$, $s_{\mathrm{T}} \rightarrow \infty$, $\mathcal{T} < \infty$, and $c$ is chosen such that
\begin{align}
P\left(\widehat{J}(\mathbf{h}_k) \leq \mathbb{E}[\widehat{J}(\mathbf{h}_k)] \pm c\sqrt{\frac{\ln N(\mathbf{h}_{k-1})}{N(\mathbf{h}_k)}}\right) \leq k^{-4} \label{eq:thm_3_cond}
\end{align}
for all $k \geq 1$. Then the intervention prescribed by \textsc{c-pomcp} (\ref{eq:planning_operator}) for any $\mathbf{h}_t$ converges in probability to an optimal intervention $\mathrm{do}(\mathbf{X}^{\star}_t=\mathbf{x}^{\star}_t)$.
\end{theorem}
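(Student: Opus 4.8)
The plan is to reduce the statement to the known convergence guarantee of UCT-style Monte-Carlo tree search by verifying that the three approximations introduced by \textsc{c-pomcp} --- particle filtering, causal pruning, and tree search --- each vanish in the stated limits while preserving the optimal value. First I would invoke Thm. \ref{thm:well_defined_prob} to treat Prob. \ref{prob:sec_response} as a finite, stationary \pomdp with bounded rewards over the Markov state $\mathbf{\Sigma}_t$, which guarantees a well-defined optimal action-value function $Q^{\star}(\mathbf{h}_t, \mathrm{do}(\widehat{\mathbf{X}}_t=\widehat{\mathbf{x}}_t))$ and an optimal intervention $\mathrm{do}(\mathbf{X}^{\star}_t=\mathbf{x}^{\star}_t) \in \argmax_{\mathrm{do}} Q^{\star}$ at every history. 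Because $\mathcal{T} < \infty$, this value function is defined by a finite backward recursion, which serves as the inductive scaffold for the tree-search argument.

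Second, I would show that the two structural approximations do not remove the optimum. For the particle filter, Thm. \ref{thm:particles} gives $\widehat{\mathbf{b}}_t \to \mathbf{b}_t$ almost surely as $M \rightarrow \infty$; hence the initial states sampled for rollouts converge in distribution to the true belief, so the Monte-Carlo returns accumulated inside the tree are asymptotically unbiased estimates of the true conditional value $\mathbb{E}_{\pi}[J \mid \mathbf{h}_t]$. For causal pruning, I would use the assumption that the \pomis computation is exact together with Def. \ref{def:pomis}: by construction $\mathbf{P}_{\mathcal{G}}^{\star}$ retains, for every \scm conforming to $\mathcal{G}$, at least one intervention set attaining the optimal expected target, so the pruned action set at each node still contains an optimal intervention. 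Consequently, restricting the branching to interventions in $\mathbf{P}_{\mathcal{G}}^{\star}$ leaves $\max_{\mathrm{do}} Q^{\star}$ unchanged and keeps the optimal arm reachable in the pruned tree.

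Third --- the technical core --- I would apply the UCT convergence analysis of Kocsis and Szepesv\'ari, exactly as adapted to POMCP in \cite{pomcp}. The argument is an induction on tree depth from $t=\mathcal{T}$ down to the root. At depth $\mathcal{T}$ the node values are plain Monte-Carlo averages of bounded returns, which converge by the law of large numbers. For the inductive step, each internal node behaves as a non-stationary multi-armed bandit whose per-arm payoff is the (slowly converging) value estimate of its child subtree; the UCB rule (\ref{eq:ucb}) together with the concentration hypothesis (\ref{eq:thm_3_cond}) --- the polynomial $k^{-4}$ tail is precisely the drift condition that analysis requires --- yields that the expected number of pulls of every sub-optimal arm grows only logarithmically, so the selection frequency of the optimal child tends to one and the backed-up estimate $\widehat{J}(\mathbf{h}_k)$ converges in probability to the optimal value of the subtree. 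Propagating this up the finite tree gives $\widehat{J}((\mathbf{h}_t, \mathrm{do}(\widehat{\mathbf{X}}_t=\widehat{\mathbf{x}}_t))) \to Q^{\star}(\mathbf{h}_t, \mathrm{do}(\widehat{\mathbf{X}}_t=\widehat{\mathbf{x}}_t))$ in probability as $s_{\mathrm{T}} \rightarrow \infty$; since the $\argmax$ of a finite collection of converging estimates eventually selects a maximizer, the prescribed intervention (\ref{eq:planning_operator}) converges in probability to $\mathrm{do}(\mathbf{X}^{\star}_t=\mathbf{x}^{\star}_t)$.

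The main obstacle I expect is controlling the interaction of the two limits. The bandit-at-each-node analysis presumes the child value estimates converge (the inductive hypothesis), yet those estimates are themselves driven by a particle approximation of the belief; I would therefore need to argue that for $M$ large the bias introduced by $\widehat{\mathbf{b}}_t$ is uniformly negligible relative to the $c\sqrt{\ln N(\mathbf{h}_{k-1}) / N(\mathbf{h}_k)}$ exploration term, so that the drift condition (\ref{eq:thm_3_cond}) is not invalidated by belief error. Handling this cleanly --- either by taking $M \rightarrow \infty$ before $s_{\mathrm{T}} \rightarrow \infty$, or by a joint argument showing the two error sources are asymptotically separable --- is the delicate step; the remainder is careful but standard bookkeeping of the UCT induction.
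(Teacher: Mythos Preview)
Your proposal is correct and follows essentially the same route as the paper: invoke Thm.~\ref{thm:particles} (together with \cite[Lem.~1--2]{pomcp}) so that in the limit $M\to\infty$ \textsc{c-pomcp} reduces to \textsc{uct} over the pruned action set, observe that exact \pomis computation keeps an optimal arm at every node, and then run the Kocsis--Szepesv\'ari induction on tree depth using the tail condition (\ref{eq:thm_3_cond}). The paper resolves the limit-interaction worry exactly as you suggest---by taking $M\to\infty$ first---and otherwise differs only in spelling out the non-stationary bandit lemmas explicitly rather than citing them.
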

The proof of Thm. \ref{thm:pomcp_convergence} relies on mapping an execution of \textsc{c-pomcp} to an execution of the \textsc{uct} algorithm \cite[Fig. 1]{uct}, which is known to converge as $s_{\mathrm{T}} \rightarrow \infty$ \cite[Thm. 7]{uct}. We provide the proof in Appendix \ref{appendix:proof_theorem_3}. Note that (\ref{eq:thm_3_cond}) can always be satisfied by choosing a large $c$.
\begin{remark}
\normalfont Theorem \ref{thm:pomcp_convergence} is not confined to \textsc{cage-2} (\hyperref[pos]{M1}). Rather, the theorem is general and applies to any control problem based on an \scm with interventions that can be formulated as a finite and stationary \pomdp (Thm. \ref{thm:well_defined_prob}).
\end{remark}
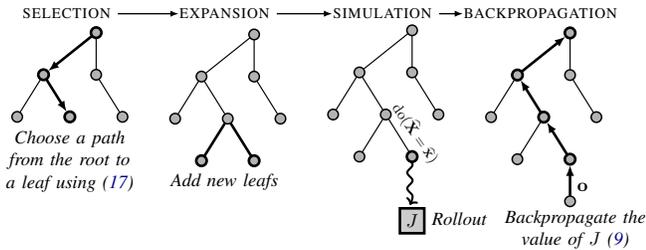
\begin{figure}
  \centering
  \scalebox{1.4}{
    \begin{tikzpicture}

\node[scale=1] (kth_cr) at (0,0.17)
{
\begin{tikzpicture}
\node[draw,circle, minimum width=0.1cm, scale=0.3, fill=black!30, line width=0.25mm](p1) at (0,0) {};
\node[draw,circle, minimum width=0.1cm, scale=0.3, fill=black!30, line width=0.25mm](p2) at (-0.5,-0.4) {};
\node[draw,circle, minimum width=0.1cm, scale=0.3, fill=black!30](p3) at (0,-0.4) {};
%\node[draw,circle, minimum width=0.1cm, scale=0.3, fill=black!30](p4) at (0.5,-0.5) {};
\node[draw,circle, minimum width=0.1cm, scale=0.3, fill=black!30](p5) at (-0.75,-0.8) {};
\node[draw,circle, minimum width=0.1cm, scale=0.3, fill=black!30, line width=0.25mm](p6) at (-0.25,-0.8) {};
\node[draw,circle, minimum width=0.1cm, scale=0.3, fill=black!30](p7) at (0.25,-0.8) {};
%\node[draw,circle, minimum width=0.1cm, scale=0.3, fill=black!30](p8) at (0.75,-1) {};

\draw[-{Latex[length=1.05mm]},line width=0.25mm] (p1) to (p2);
\draw[-,line width=0.11mm] (p1) to (p3);
%\draw[-,line width=0.11mm] (p1) to (p4);
\draw[-,line width=0.11mm] (p2) to (p5);
\draw[-{Latex[length=1.05mm]},line width=0.25mm] (p2) to (p6);
\draw[-,line width=0.11mm] (p3) to (p7);
%\draw[-,line width=0.11mm] (p4) to (p8);
\end{tikzpicture}
};

\node[scale=1] (kth_cr) at (1.5,-0.05)
{
\begin{tikzpicture}
\node[draw,circle, minimum width=0.1cm, scale=0.3, fill=black!30](p1) at (0,0) {};
\node[draw,circle, minimum width=0.1cm, scale=0.3, fill=black!30](p2) at (-0.5,-0.4) {};
\node[draw,circle, minimum width=0.1cm, scale=0.3, fill=black!30](p3) at (0,-0.4) {};
%\node[draw,circle, minimum width=0.1cm, scale=0.3, fill=black!30](p4) at (0.5,-0.5) {};
\node[draw,circle, minimum width=0.1cm, scale=0.3, fill=black!30](p5) at (-0.75,-0.8) {};
\node[draw,circle, minimum width=0.1cm, scale=0.3, fill=black!30](p6) at (-0.25,-0.8) {};
\node[draw,circle, minimum width=0.1cm, scale=0.3, fill=black!30](p7) at (0.25,-0.8) {};
%\node[draw,circle, minimum width=0.1cm, scale=0.3, fill=black!30](p8) at (0.75,-1) {};

\node[draw,circle, minimum width=0.1cm, scale=0.3, fill=black!30, line width=0.25mm](p9) at (-0.5,-1.2) {};
\node[draw,circle, minimum width=0.1cm, scale=0.3, fill=black!30, line width=0.25mm](p10) at (0,-1.2) {};

\draw[-,line width=0.11mm] (p1) to (p2);
\draw[-,line width=0.11mm] (p1) to (p3);
%\draw[-,line width=0.11mm] (p1) to (p4);
\draw[-,line width=0.11mm] (p2) to (p5);
\draw[-,line width=0.11mm] (p2) to (p6);
\draw[-,line width=0.11mm] (p3) to (p7);
%\draw[-,line width=0.11mm] (p4) to (p8);

\draw[-,line width=0.11mm, line width=0.25mm] (p6) to (p9);
\draw[-,line width=0.11mm, line width=0.25mm] (p6) to (p10);
\end{tikzpicture}
};

\node[scale=1] (kth_cr) at (3,-0.35)
{
\begin{tikzpicture}
\node[draw,circle, minimum width=0.1cm, scale=0.3, fill=black!30](p1) at (0,0) {};
\node[draw,circle, minimum width=0.1cm, scale=0.3, fill=black!30](p2) at (-0.5,-0.4) {};
\node[draw,circle, minimum width=0.1cm, scale=0.3, fill=black!30](p3) at (0,-0.4) {};
%\node[draw,circle, minimum width=0.1cm, scale=0.3, fill=black!30](p4) at (0.5,-0.5) {};
\node[draw,circle, minimum width=0.1cm, scale=0.3, fill=black!30](p5) at (-0.75,-0.8) {};
\node[draw,circle, minimum width=0.1cm, scale=0.3, fill=black!30](p6) at (-0.25,-0.8) {};
\node[draw,circle, minimum width=0.1cm, scale=0.3, fill=black!30](p7) at (0.25,-0.8) {};
%\node[draw,circle, minimum width=0.1cm, scale=0.3, fill=black!30](p8) at (0.75,-1) {};

\node[draw,circle, minimum width=0.1cm, scale=0.3, fill=black!30](p9) at (-0.5,-1.2) {};
\node[draw,circle, minimum width=0.1cm, scale=0.3, fill=black!30, line width=0.25mm](p10) at (0,-1.2) {};

\draw[-,line width=0.11mm] (p1) to (p2);
\draw[-,line width=0.11mm] (p1) to (p3);
%\draw[-,line width=0.11mm] (p1) to (p4);
\draw[-,line width=0.11mm] (p2) to (p5);
\draw[-,line width=0.11mm] (p2) to (p6);
\draw[-,line width=0.11mm] (p3) to (p7);
%\draw[-,line width=0.11mm] (p4) to (p8);

\draw[-,line width=0.11mm] (p6) to (p9);
\draw[-,line width=0.11mm] (p6) to (p10);

  \node[rotate=-90, scale=0.25](test) at (0,-1.45) {
    \begin{tikzpicture}
\draw[->, x=0.15cm,y=0.1cm, line width=0.8mm, black]
        (3,0) sin (4,1) cos (5,0) sin (6,-1) cos (7,0)
        sin (8,1) cos (9,0) sin (10,-1) cos (11,0)
        sin (12,1) cos (14.2,0);
    \end{tikzpicture}
  };
\node at (0,-1.81) [draw, fill=black!20, scale=0.5, line width=0.25mm] (v100) {$J$};
\end{tikzpicture}
};

\node[scale=1] (kth_cr) at (4.5,-0.23)
{
\begin{tikzpicture}
\node[draw,circle, minimum width=0.1cm, scale=0.3, fill=black!30, line width=0.25mm](p1) at (0,0) {};
\node[draw,circle, minimum width=0.1cm, scale=0.3, fill=black!30, line width=0.25mm](p2) at (-0.5,-0.4) {};
\node[draw,circle, minimum width=0.1cm, scale=0.3, fill=black!30](p3) at (0,-0.4) {};
%\node[draw,circle, minimum width=0.1cm, scale=0.3, fill=black!30](p4) at (0.5,-0.5) {};
\node[draw,circle, minimum width=0.1cm, scale=0.3, fill=black!30](p5) at (-0.75,-0.8) {};
\node[draw,circle, minimum width=0.1cm, scale=0.3, fill=black!30, line width=0.25mm](p6) at (-0.25,-0.8) {};
\node[draw,circle, minimum width=0.1cm, scale=0.3, fill=black!30](p7) at (0.25,-0.8) {};
%\node[draw,circle, minimum width=0.1cm, scale=0.3, fill=black!30](p8) at (0.75,-1) {};

\node[draw,circle, minimum width=0.1cm, scale=0.3, fill=black!30](p9) at (-0.5,-1.2) {};
\node[draw,circle, minimum width=0.1cm, scale=0.3, fill=black!30, line width=0.25mm](p10) at (0,-1.2) {};
\node[draw,circle, minimum width=0.1cm, scale=0.3, fill=black!30](p11) at (0,-1.6) {};

\draw[{Latex[length=1.05mm]}-,line width=0.11mm, line width=0.25mm] (p1) to (p2);
\draw[-,line width=0.11mm] (p1) to (p3);
%\draw[-,line width=0.11mm] (p1) to (p4);
\draw[-,line width=0.11mm] (p2) to (p5);
\draw[{Latex[length=1.05mm]}-,line width=0.11mm, line width=0.25mm] (p2) to (p6);
\draw[-,line width=0.11mm] (p3) to (p7);
%\draw[-,line width=0.11mm] (p4) to (p8);

\draw[-,line width=0.11mm] (p6) to (p9);
\draw[{Latex[length=1.05mm]}-,line width=0.11mm, line width=0.25mm] (p6) to (p10);
\draw[{Latex[length=1.05mm]}-,line width=0.11mm, line width=0.25mm] (p10) to (p11);
\end{tikzpicture}
};

\node[inner sep=0pt,align=center, scale=0.5, color=black] (selection) at (0,0.75) {
\textsc{selection}
};

\node[inner sep=0pt,align=center, scale=0.5, color=black] (expansion) at (1.5,0.75) {
\textsc{expansion}
};

\node[inner sep=0pt,align=center, scale=0.5, color=black] (simulation) at (3,0.75) {
\textsc{simulation}
};

\node[inner sep=0pt,align=center, scale=0.5, color=black] (backprop) at (4.5,0.75) {
\textsc{backpropagation}
};

\draw[-{Latex[length=0.8mm]}, line width=0.13mm] (selection) to (expansion);
\draw[-{Latex[length=0.8mm]}, line width=0.13mm] (expansion) to (simulation);
\draw[-{Latex[length=0.8mm]}, line width=0.13mm] (simulation) to (backprop);
%\draw[-{Latex[length=0.8mm]}, line width=0.13mm] (backprop) to (5.4, 0.75) to (5.4, 1.05) to (-0.6, 1.05) to (-0.6, 0.75) to (selection);

\node[inner sep=0pt,align=center, scale=0.5, color=black] (backprop) at (0,-0.65) {
  \textit{Choose a path} \\\textit{from the root to}\\\textit{a leaf using (\ref{eq:ucb})}
};

\node[inner sep=0pt,align=center, scale=0.5, color=black] (backprop) at (1.5,-0.85) {
  \textit{Add new leafs}
};

\node[inner sep=0pt,align=center, scale=0.4, color=black, rotate=-55] (backprop) at (3.3,-0.4) {
  $\mathrm{do}(\widehat{\mathbf{X}}=\widehat{\mathbf{x}})$
};

\node[inner sep=0pt,align=center, scale=0.5, color=black] (backprop) at (3.73,-1.2) {
  \textit{Rollout}
};

\node[inner sep=0pt,align=center, scale=0.5, color=black] (backprop) at (4.8,-1.3) {
  \textit{Backpropagate the}\\
  \textit{value of $J$ (\ref{eq:defender_objective})}
};

\node[inner sep=0pt,align=center, scale=0.5, color=black] (backprop) at (4.9,-0.9) {
  $\mathbf{o}$
};

\end{tikzpicture}
  }
  \caption{Tree search in \textsc{c-pomcp}; a search tree is constructed iteratively where each iteration consists of the four phases above.}
  \label{fig:mcts}
\end{figure}

\begin{algorithm}
\footnotesize
  \SetNoFillComment
  \SetKwProg{myInput}{Input:}{}{}
  \SetKwProg{myOutput}{Output:}{}{}
  \SetKwProg{myalg}{Algorithm}{}{}
  \SetKwProg{myproc}{Procedure}{}{}
  \SetKw{KwTo}{inp}
  \SetKwFor{Forp}{in parallel for}{\string do}{}%
  \SetKwFor{Loop}{Loop}{}{EndLoop}
  \DontPrintSemicolon
  \SetKwBlock{DoParallel}{do in parallel}{end}
  \myInput{
    \upshape Simulator $\mathscr{S}$ of (\hyperref[pos]{M1}), causal graph $\mathcal{G}$ (Fig. \ref{fig:causal_diagram}),\\
    $\quad\quad\quad$ search time $s_{\mathrm{T}}$, horizon $\mathcal{T}$, number of particles $M$.
  }{}
  \myOutput{
    \upshape Interventions $\mathrm{do}(\tilde{\mathbf{X}}_1=\tilde{\mathbf{x}}_1),\hdots,\mathrm{do}(\tilde{\mathbf{X}}_{\mathcal{T}}=\tilde{\mathbf{x}}_{\mathcal{T}})$.
  }{}
  \caption{\textsc{c-pomcp}.}\label{alg:c_pomcp}
  \myalg{}{
    $\mathbf{h}_1=(\V_1,)$.\;
    \For{$t=1,2,\hdots,\mathcal{T}$}{
      Compute $\widehat{\mathbf{b}}_t$ using (\ref{eq:particle_filter}) with $M$ particles.\;
      Compute $\mathbf{P}_{\mathcal{G}}^{\star}$ (Def. \ref{def:pomis}).\;
      $\mathrm{do}(\tilde{\mathbf{X}}_t=\tilde{\mathbf{x}}_t) \leftarrow \mathscr{T}(\mathbf{h}_t, \widehat{\mathbf{b}}_t, \widehat{\pi}, s_{\mathrm{T}}, \mathscr{S}, \mathcal{G}, \mathbf{P}_{\mathcal{G}}^{\star})$ (\ref{eq:planning_operator}).\;
      Perform intervention $\mathrm{do}(\tilde{\mathbf{X}}_t=\tilde{\mathbf{x}}_t)$ (\ref{eq:defender_interventions}). \;
      Observe $\mathbf{o}_{t+1}$. \;
      Update history $\mathbf{h}_{t+1}=(\mathbf{h}_t, \mathrm{do}(\tilde{\mathbf{X}}_t=\tilde{\mathbf{x}}_t), \mathbf{o}_{t+1})$ (\ref{eq:history_def}).\;
    }
    }
  \normalsize
\end{algorithm}

\subsection{Comparison with Other Methods}
\textsc{c-pomcp} (Alg. \ref{alg:c_pomcp}) distinguishes itself from existing methods evaluated against the \textsc{cage-2} benchmark (\cite{vyas2023automated,wolk2022cage,alan_turing_1, foley_cage_1, foley2023inroads,sussex_1,TANG2024103871,kiely2023autonomous,Richer2023,tabular_Q_andy,wiebe2023learning,10476122,cheng2024rice,yan2024depending}) in four key aspects. First, it incorporates the causal structure of the target system. Second, it guarantees an optimal solution (Thm. \ref{thm:pomcp_convergence}). No such guarantees are available for the existing methods. Third, while the above-referenced methods ignore the latent variables, \textsc{c-pomcp} explicitly models the uncertainty of the latent variables and how this uncertainty changes in light of new observations (\ref{eq:belief_def}). Fourth, in contrast to the existing \textit{offline} methods, \textsc{c-pomcp} is an \textit{online} method that updates the defender strategy at each time step.
%, which means that it does not require a training phase.
%This allows \textsc{c-pomcp} to make informed decisions that account for the uncertainty.
%and can dynamically adapt to changes in the target system.
%This uncertainty quantification allows the defender to make more informed decisions that account for the uncertainty about the attacker. By

\section{Evaluating \textsc{c-pomcp} Against \textsc{cage-2}}\label{sec:evaluation}
We implement \textsc{c-pomcp} (Alg. \ref{alg:c_pomcp}) in Python and run it to learn defender strategies for the \textsc{cage-2} scenario \cite{cage_challenge_2_announcement}. The source code is available at \cite{csle_docs}; the system configuration is listed in Appendix \ref{appendix:infrastructure_configuration}; the hyperparameters are listed in Appendix \ref{appendix:hyperparameters}; and the computing environment is an \textsc{m2}-ultra processor.

\vspace{2mm}

\noindent\textbf{Baselines.} We compare the performance of \textsc{c-pomcp} (Alg. \ref{alg:c_pomcp}) with that of two baselines: \textsc{cardiff-ppo}, a current state-of-the-art method for \textsc{cage-2} \cite{vyas2023automated}, and \textsc{pomcp}  \cite[Alg. 1]{pomcp}, a non-causal version \textsc{c-pomcp}. Note that, while we only compare against \textsc{cardiff-ppo} from the \textsc{cage-2} leaderboard, it represents all methods on the leaderboard since it achieves better performance than the other methods \cite{cage_challenge_2_announcement}.

\vspace{2mm}

\noindent\textbf{Evaluation metrics.} We use two evaluation metrics: the cumulative reward $\mathbb{E}[J]$ (\ref{eq:defender_objective}) and the cumulative regret \cite{lattimore2020bandit}
\begin{align}
\mathfrak{R}_n \triangleq n\mathbb{E}_{\pi^{\star}_{\mathrm{D}}}[J] - \mathbb{E}_{\bm{\pi}_{n,\mathrm{D}}}\left[\sum_{l=1}^{n}J_l\right],\label{eq:regret}
\end{align}
where $n$ is the total computational time in minutes, $J_l$ is the value of (\ref{eq:defender_objective}) achieved after $l$ minutes, and $\bm{\pi}_{n,\mathrm{D}}=(\pi_{1,\mathrm{D}},\pi_{2,\mathrm{D}},\hdots, \pi_{n,\mathrm{D}})$ represents the sequence of defender strategies after $n$ minutes of computations (e.g., $n$ minutes of tree search) \cite{lattimore2020bandit}. Since computing $\mathbb{E}_{\pi^{\star}_{\mathrm{D}}}[J]$ is \textsc{pspace}-hard \cite[Thm. 6]{pspace_complexity}, we estimate $\mathbb{E}_{\pi^{\star}_{\mathrm{D}}}[J]$ using the current state-of-the-art value when computing (\ref{eq:regret}).

\begin{table*}
  \setlength{\tabcolsep}{2pt}
  \centering
\resizebox{1\textwidth}{!}{%
  \begin{tabular}{l|l|lll|lll|lll}
    \toprule
    \multirow{2}{*}{Method} & Training / search &
      \multicolumn{3}{c|}{$\mathcal{T}=30$} &
     \multicolumn{3}{c|}{$\mathcal{T}=50$} &
     \multicolumn{3}{c}{$\mathcal{T} = 100$} \\
     &  (minutes) (seconds)& {\textsc{scenario} \ref{scenario:1}} & {\textsc{scenario} \ref{scenario:2}} & {\textsc{scenario} \ref{scenario:3}} & {\textsc{scenario} \ref{scenario:1}} & {\textsc{scenario} \ref{scenario:2}} & {\textsc{scenario} \ref{scenario:3}} & {\textsc{scenario} \ref{scenario:1}} & {\textsc{scenario} \ref{scenario:2}} & {\textsc{scenario} \ref{scenario:3}}\\
      \midrule
    \textsc{cardiff}  & $2000$ / $10^{-4}$ & $-3.57 \pm 0.06$ & $-5.69 \pm 1.68$ & $-4.76 \pm 1.90$ & $-6.44 \pm 0.16$ & $-9.23 \pm 2.87$ & $-7.64 \pm 2.78$ & $-13.69 \pm 0.533$ & $-17.16 \pm 4.41$ & $-15.28 \pm 4.18$ \\
    \midrule
    \textsc{c-pomcp} & $0$ / $0.05$ & $-4.64 \pm 0.5$ & $-5.73 \pm 0.08$ & $-5.18 \pm 0.13$ & $-9.20 \pm 0.38$ & $-9.35 \pm 0.16$ & $-9.27 \pm 0.67$ & $-25.05 \pm 3.02$ & $-18.29 \pm 0.13$ & $-21.67 \pm 3.19$\\
    \textsc{c-pomcp} & $0$ / $0.1$ & $-3.89 \pm 0.25$ & $-5.62 \pm 0.14$ & $-4.75 \pm 0.34$ & $-8.46 \pm 0.27$ & $-8.92 \pm 0.23$ & $-8.69 \pm 0.47$ & $-21.28 \pm 0.72$ & $-17.38 \pm 0.20$ & $-19.33 \pm 1.03$\\
    \textsc{c-pomcp} & $0$ / $0.5$ & $-4.00 \pm 0.14$ & $-5.61 \pm 0.02$ & $-4.81 \pm 0.24$ & $-7.38 \pm 0.19$ & $-8.62 \pm 0.18$ & $-8.00 \pm 0.44$ & $-18.08 \pm 1.32$ & $-16.81 \pm 0.14$ & $-17.45 \pm 1.14$\\
    \textsc{c-pomcp} & $0$ / $1$ & $-3.64 \pm 0.13$ & $\bm{-5.52 \pm 0.16}$ & $-4.58 \pm 0.27$ & $-6.60 \pm 0.32$ & $-8.55 \pm 0.08$ & $-7.58 \pm 0.29$ & $-17.42 \pm 1.08$ & $-16.34 \pm 0.44$ & $-16.88 \pm 1.29$\\
    \textsc{c-pomcp} & $0$ / $5$ & $-3.50 \pm 0.11$ & $-5.65 \pm 0.11$ & $-4.57 \pm 0.23$ & $-6.52 \pm 0.34$ & $-8.46 \pm 0.11$ & $-7.49 \pm 0.46$ & $-13.23 \pm 0.43$ & $-16.46 \pm 0.30$ & $-14.85 \pm 0.79$\\
    \textsc{c-pomcp} & $0$ / $15$ & $\bm{-3.37 \pm 0.08}$ & $-5.66 \pm 0.11$ & $\bm{-4.52 \pm 0.19}$ & $-6.57 \pm 0.38$ & $-8.57 \pm 0.13$ & $-7.57 \pm 0.52$ & $\bm{-12.98 \pm 1.55}$ & $\bm{-15.87 \pm 0.67}$ & $\bm{-14.43 \pm 1.99}$\\
    \textsc{c-pomcp} & $0$ / $30$ & $-3.42 \pm 0.09$ & $-5.70 \pm 0.09$ & $-4.56 \pm 0.14$ & $\bm{-6.34 \pm 0.28}$ & $\bm{-8.52 \pm 0.18}$ & $\bm{-7.43 \pm 0.52}$ & $-13.32 \pm 0.18$ & $-16.05 \pm 0.96$ & $-14.68 \pm 1.02$\\
    \midrule
    \textsc{pomcp} & $0$ / $0.05$ & $-6.87 \pm 0.21$ & $-9.50 \pm 0.19$ & $-8.19 \pm 0.37$ & $-13.90 \pm 0.24$ & $-22.26 \pm 0.44$ & $-18.08 \pm 0.72$ & $-38.71 \pm 1.99$ & $-50.24 \pm 2.67$ & $-44.48 \pm 3.11$\\
    \textsc{pomcp}  & $0$ / $0.1$ & $-6.31 \pm 0.12$ & $-8.70 \pm 0.07$ & $-7.51 \pm 0.19$ & $-13.71 \pm 0.22$ & $-20.20 \pm 0.47$ & $-16.96 \pm 0.76$ & $-38.02 \pm 0.53$ & $-46.40 \pm 0.64$ & $-42.21 \pm 0.79$\\
    \textsc{pomcp}  & $0$ / $0.5$ & $-5.32 \pm 0.24$ & $-8.28 \pm 0.13$ & $-6.80 \pm 0.33$ & $-12.89 \pm 0.20$ & $-19.16 \pm 0.09$ & $-16.03 \pm 0.33$ & $-34.92 \pm 0.96$ & $-47.29 \pm 0.39$ & $-41.11 \pm 1.23$\\
    \textsc{pomcp}  & $0$ / $1$ & $-5.27 \pm 0.65$ & $-7.68 \pm 0.10$ & $-6.48 \pm 0.75$ & $-12.57 \pm 0.41$ & $-18.38 \pm 0.50$ & $-15.48 \pm 0.94$ & $-34.50 \pm 0.65$ & $-47.02 \pm 1.75$ & $-40.76 \pm 2.34$\\
    \textsc{pomcp}  & $0$ / $5$ & $-5.11 \pm 0.32$ & $-7.58 \pm 0.05$ & $-6.35 \pm 0.36$ & $-12.03 \pm 0.93$ & $-18.22 \pm 0.19$ & $-15.13 \pm 1.18$ & $-33.06 \pm 0.21$ & $-45.15 \pm 0.54$ & $-39.13 \pm 0.66$\\
    \textsc{pomcp} & $0$ / $15$ & $-5.18 \pm 0.66$ & $-7.30 \pm 0.38$ & $-6.24 \pm 1.22$ & $-11.32 \pm 0.84$ & $-17.68 \pm 0.58$ & $-14.50 \pm 1.37$ & $-30.88 \pm 1.41$ & $-45.19 \pm 0.35$ & $-38.04 \pm 1.57$\\
    \textsc{pomcp}  & $0$ / $30$ & $-4.50 \pm 0.17$ & $-6.92 \pm 0.59$ & $-5.71 \pm 0.77$ & $-9.88 \pm 1.63$ & $-17.55 \pm 0.44$ & $-13.72 \pm 2.09$ & $-29.51 \pm 2.00$ & $-44.27 \pm 1.13$ & $-36.89 \pm 2.49$\\
    \bottomrule
  \end{tabular}
}
  \caption{Comparing \textsc{c-pomcp} with baselines: \textsc{cardiff-ppo} \cite{vyas2023automated} and \textsc{pomcp} \cite{pomcp}; columns indicate the time horizon $\mathcal{T}$ (\ref{eq:defender_objective}); subcolumns indicate the evaluation scenario (\S \ref{sec:evaluation_scenarios}); numbers indicate the mean and the standard deviation of the reward $J$ (\ref{eq:defender_objective}) from evaluations with $3$ random seeds.}\label{tab:results}
\end{table*}

\begin{figure*}
  \centering
  \scalebox{0.83}{
    \input{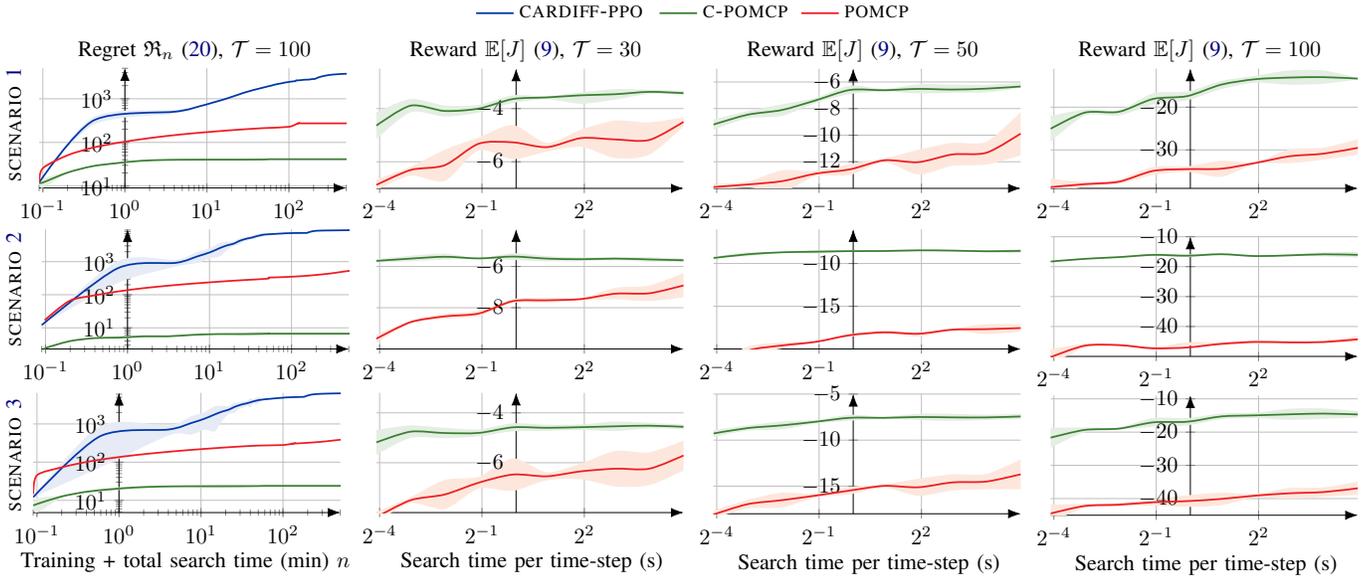}
  }
  \caption{Comparing \textsc{c-pomcp} (green curves) with baselines: \textsc{cardiff-ppo} \cite{vyas2023automated} (blue curves) and \textsc{pomcp} \cite{pomcp} (red curves); rows indicate the evaluation scenario (\S \ref{sec:evaluation_scenarios}); the curves show the mean value from evaluations with $3$ random seeds; shaded areas indicate the standard deviation; the number of data points on the x-axis is $2000$ for the left-most column and $10$ for the other columns.}
  \label{fig:b_line_results}
\end{figure*}

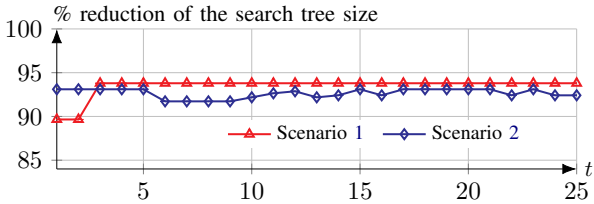
\begin{figure}
  \centering
  \scalebox{0.95}{
    \pgfplotstableread{
1 89.6551724137931 0.0
2 89.6551724137931 0.0
3 93.79310344827586 0.0
4 93.79310344827586 0.0
5 93.79310344827586 0.0
6 93.79310344827586 0.0
7 93.79310344827586 0.0
8 93.79310344827586 0.0
9 93.79310344827586 0.0
10 93.79310344827586 0.0
11 93.79310344827586 0.0
12 93.79310344827586 0.0
13 93.79310344827586 0.0
14 93.79310344827586 0.0
15 93.79310344827586 0.0
16 93.79310344827586 0.0
17 93.79310344827586 0.0
18 93.79310344827586 0.0
19 93.79310344827586 0.0
20 93.79310344827586 0.0
21 93.79310344827586 0.0
22 93.79310344827586 0.0
23 93.79310344827586 0.0
24 93.79310344827586 0.0
25 93.79310344827586 0.0
26 93.79310344827586 0.0
27 93.79310344827586 0.0
28 93.79310344827586 0.0
29 93.79310344827586 0.0
30 93.79310344827586 0.0
31 93.79310344827586 0.0
32 93.79310344827586 0.0
33 93.79310344827586 0.0
34 93.79310344827586 0.0
35 93.79310344827586 0.0
36 93.79310344827586 0.0
37 93.79310344827586 0.0
38 93.79310344827586 0.0
39 93.79310344827586 0.0
40 93.79310344827586 0.0
41 93.79310344827586 0.0
42 93.79310344827586 0.0
43 93.79310344827586 0.0
44 93.79310344827586 0.0
45 93.79310344827586 0.0
46 93.79310344827586 0.0
47 93.79310344827586 0.0
48 93.79310344827586 0.0
49 93.79310344827586 0.0
50 93.79310344827586 0.0
}\bline

\pgfplotstableread{
1 93.10344827586207 0.0
2 93.10344827586207 0.0
3 93.10344827586207 0.0
4 93.10344827586207 0.0
5 93.10344827586207 0.0
6 91.72413793103448 0.0
7 91.72413793103448 0.0
8 91.72413793103448 0.0
9 91.72413793103448 0.0
10 92.18390804597701 0.006502131321255622
11 92.64367816091954 0.006502131321255622
12 92.8735632183908 0.008601511233963102
13 92.18390804597701 0.006502131321255622
14 92.41379310344827 0.009753196981883434
15 93.10344827586207 0.009753196981883434
16 92.41379310344827 0.009753196981883434
17 93.10344827586207 0.009753196981883434
18 93.10344827586207 0.009753196981883434
19 93.10344827586207 0.009753196981883434
20 93.10344827586207 0.009753196981883434
21 93.10344827586207 0.009753196981883434
22 92.41379310344827 0.009753196981883434
23 93.10344827586207 0.009753196981883434
24 92.41379310344827 0.009753196981883434
25 92.41379310344827 0.009753196981883434
26 93.10344827586207 0.009753196981883434
27 93.10344827586207 0.009753196981883434
28 93.10344827586207 0.009753196981883434
29 93.10344827586207 0.009753196981883434
30 92.41379310344827 0.009753196981883434
31 93.79310344827586 0.0
32 93.10344827586207 0.009753196981883434
33 92.41379310344827 0.009753196981883434
34 93.79310344827586 0.0
35 92.18390804597701 0.006502131321255622
36 92.41379310344827 0.009753196981883434
37 93.10344827586207 0.009753196981883434
38 93.79310344827586 0.0
39 93.10344827586207 0.009753196981883434
40 93.10344827586207 0.009753196981883434
41 93.79310344827586 0.0
42 93.10344827586207 0.009753196981883434
43 93.10344827586207 0.009753196981883434
44 92.41379310344827 0.009753196981883434
45 93.10344827586207 0.009753196981883434
46 93.79310344827586 0.0
47 93.10344827586207 0.009753196981883434
48 93.10344827586207 0.009753196981883434
49 93.10344827586207 0.009753196981883434
50 92.41379310344827 0.009753196981883434
}\meander

\begin{tikzpicture}]
\node[scale=1] (kth_cr) at (0,2.15)
{
  \begin{tikzpicture}
    \begin{axis}[
        xmax=25,
        ymin=84,
        ymax=100,
        width = \columnwidth,
        height = 0.4\columnwidth,
        axis lines=center,
        grid = both,
        major grid style = {lightgray},
        minor grid style = {lightgray!25},
        scaled y ticks=false,
        xlabel style={below right},
        ylabel style={above left},
        axis line style={-{Latex[length=2mm]}},
        %smooth,
        legend style={at={(0.9,0.35)}},
        legend columns=2,
        legend style={
          draw=none,
            inner sep=0pt,
            font=\footnotesize,
            /tikz/column 2/.style={
                column sep=5pt,
              }
              }
              ]
\addplot[Red,mark=triangle, mark repeat=1, name path=l1, thick, domain=1:50] plot[error bars/.cd, y dir=both, y explicit] table [x index=0, y index=1, y error plus index=2, y error minus index=2] {\bline};
\addplot[Blue,mark=diamond, mark repeat=1, name path=l1, thick, domain=1:50] plot[error bars/.cd, y dir=both, y explicit] table [x index=0, y index=1, y error plus index=2, y error minus index=2] {\meander};

%\addplot[Blue,mark=diamond, smooth, thick, domain=0:100] (x,{(25+12*5)*x + 2});
%\addplot[Red,mark=x, smooth, thick, domain=0:100] (x,{(25+12*10)*x + 2});
%\addplot[Black,mark=triangle, smooth, thick, domain=0:100] (x,{(25+12*20)*x + 2});
\legend{Scenario \ref{scenario:1},Scenario \ref{scenario:2}}
\end{axis}
\node[inner sep=0pt,align=center, scale=0.9, rotate=0, opacity=1] (obs) at (7.5,0)
{
  $t$
};
\node[inner sep=0pt,align=center, scale=0.9, rotate=0, opacity=1] (obs) at (2.28,2.18)
{
  \% reduction of the search tree size
};

\end{tikzpicture}
};

  \end{tikzpicture}
  }
  \caption{Effect of the pruning of the search tree in \textsc{c-pomcp}.}
  \label{fig:pruning}
\end{figure}

\begin{table}
  \centering
  \scalebox{0.9}{
  \begin{tabular}{llll} \toprule
    {\textit{Method}} & {\textit{Training (min)}} & {\textit{Search (s)}} & {Reward \textit{$\mathbb{E}[J]$ (\ref{eq:defender_objective})}} \\ \midrule
    \textsc{cardiff-ppo}\cite{vyas2023automated} & $2000$ & $10^{-4}$ & $-429 \pm 167$\\
    \textsc{c-pomcp} & $0$ & $30$ & $\bm{-13.32 \pm 0.18}$\\
    \textsc{pomcp}  \cite{pomcp} & $0$ & $30$ & $-29.51 \pm 2.00$\\
    \bottomrule\\
  \end{tabular}}
  \caption{Evaluation results for Scenario \ref{scenario:4} with $\mathcal{T}=100$.}\label{tab:scenario_4}
\end{table}
\subsection{\textsc{cage-2} Scenarios}\label{sec:evaluation_scenarios}
The \textsc{cage-2} scenario can be instantiated with different attacker strategies $\pi_{\mathrm{A}}$ (\ref{eq:attacker_function}) as well as different topologies within each zone (see Fig. \ref{fig:use_case}) \cite{cage_challenge_2_announcement}. Based on these parameters, we define the following evaluation scenarios.
%We define four scenarios to evaluate the performance properties of \textsc{c-pomcp}. All scenarios are based on \textsc{cage-2}  (Fig. \ref{fig:use_case}) to enable a fair comparison with the baselines \cite{cage_challenge_2_announcement}.
\begin{scenario}[\textsc{b-line} attacker]\label{scenario:1}
\normalfont In this scenario, the attacker strategy $\pi_{\mathrm{A}}$ represents the \textsc{b-line} attacker from \textsc{cage-2} \cite{cage_challenge_2_announcement}, which attempts to move directly to the operational zone. The topology is shown in Fig. \ref{fig:use_case}.
\end{scenario}

\begin{scenario}[\textsc{meander} attacker]\label{scenario:2}
\normalfont In this scenario, $\pi_{\mathrm{A}}$ represents the \textsc{meander} attacker from \textsc{cage-2} \cite{cage_challenge_2_announcement}. \textsc{meander} explores the network one zone at a time, seeking to gain privileged access to all hosts in a zone before moving on to the next one, eventually arriving at the operational zone. The topology is shown in Fig. \ref{fig:use_case}.
\end{scenario}

\begin{scenario}[\textsc{random} attacker]\label{scenario:3}
\normalfont In this scenario, $\pi_{\mathrm{A}}$ is \textsc{b-line} with probability $0.5$ and \textsc{meander} with probability $0.5$. The topology is shown in Fig. \ref{fig:use_case}.
\end{scenario}

\begin{scenario}[\textsc{random} topology]\label{scenario:4}
\normalfont This scenario is the same as Scenario \ref{scenario:1} except that the topologies of the enterprise and user zones are randomized at the start of each evaluation episode.
\end{scenario}
\subsection{\textsc{cage-2} Benchmark Results}
The evaluation results are summarized in Figs. \ref{fig:b_line_results}--\ref{fig:pruning} and Tables \ref{tab:results}--\ref{tab:scenario_4}. The results show that \textsc{c-pomcp} (Alg. \ref{alg:c_pomcp}) achieves the highest reward (\ref{eq:defender_objective}) and the lowest regret (\ref{eq:regret}) across all evaluation scenarios and time horizons $\mathcal{T}$. (The results are not statistically significant in all cases, though.)

The green curves in Fig. \ref{fig:b_line_results} relate to \textsc{c-pomcp}. The blue and red curves relate to the baselines. The leftmost column in Fig. \ref{fig:b_line_results} shows the regret (\ref{eq:regret}). Notably, the regret of \textsc{c-pomcp} is two orders of magnitude lower than the regret of \textsc{cardiff-ppo} and one order of magnitude lower than the regret of \textsc{pomcp}.

The three rightmost columns in Fig. \ref{fig:b_line_results} show the cumulative reward (\ref{eq:defender_objective}) obtained by \textsc{c-pomcp} and \textsc{pomcp} in function of the search time $s_{\mathrm{T}}$. We observe that \textsc{c-pomcp} achieves a significantly higher reward than \textsc{pomcp}, although the difference diminishes with increasing $s_{\mathrm{T}}$, which is expected (Thm. \ref{thm:pomcp_convergence}). We explain the improvement of \textsc{c-pomcp} compared to \textsc{pomcp} by the pruned search tree, which is obtained by leveraging the causal structure (Def. \ref{def:pomis}). The reduction in search tree size achieved by the pruning is shown in Fig. \ref{fig:pruning}. We see that the pruning reduces the size of the search tree by around 90--95\%.

Lastly, Table \ref{tab:scenario_4} contains the results for Scenario \ref{scenario:4}. We find that \textsc{c-pomcp} and \textsc{pomcp} are agnostic to changes in the topology within each zone. By contrast, the performance of \textsc{cardiff-ppo} reduces drastically when the topology changes, indicating that its strategy is overfitted to the training environment \cite{vyas2023automated}. \textsc{cardiff-ppo} has shown similar behavior in \cite{beyond_cage}.

\subsection{Discussion of the \textsc{cage-2} Benchmark Results}
The key findings from the \textsc{cage-2} benchmark results are:
\begin{enumerate}
\item Leveraging the causal structure of the target system, \textsc{c-pomcp} achieves state-of-the-art performance (Figs. \ref{fig:b_line_results}--\ref{fig:pruning}, Table \ref{tab:results}).
\item The interventions prescribed by \textsc{c-pomcp} are guaranteed to converge to optimal interventions as $s_{\mathrm{T}}\rightarrow \infty$ (Thm. \ref{thm:pomcp_convergence}), which is consistent with the evaluation results.
\item \textsc{c-pomcp} is two orders of magnitude more efficient in computing time than the state-of-the-art method \textsc{cardiff-ppo} (Fig. \ref{fig:b_line_results}).
  %Further, it is one order of magnitude more efficient than its non-causal version \textsc{pomcp} \cite[Alg. 1]{pomcp}.
\item \textsc{c-pomcp} is an online method and can adapt to changes in the topology of the target system (Table \ref{tab:scenario_4}).
\end{enumerate}
Surprisingly, the results demonstrate that \textsc{c-pomcp} requires only $5-15$ seconds of search to achieve competitive performance on the \textsc{cage-2} benchmark. The fact that \textsc{c-pomcp} performs significantly better than its non-causal version \textsc{pomcp} \cite[Alg. 1]{pomcp} indicates that the main enabler of the efficiency is the causal structure, which we exploit for pruning the search space. This observation suggests limitations of existing methods that narrowly focus on \textit{model-free} reinforcement learning and do not consider the causal structure of the underlying system.
%important areas of improvement of methods proposed in prior work, which is focused narrowly on model-free reinforcement learning.
%The most surprising part of the results is that \textsc{c-pomcp} requires relatively little search time to achieve competitive performance on \textsc{cage-2}.

While the results demonstrate clear benefits of \textsc{c-pomcp} compared to the existing methods, \textsc{c-pomcp} has two drawbacks. First, execution of \textsc{c-pomcp} is slower than that of pre-trained methods (see Table \ref{tab:results}, typically $10^{-4}$s vs $10$s). Second, the performance of \textsc{c-pomcp} depends on the causal structure of the target system \cite[Def. 2.2.1]{pearl2000causality}. If no causal structure is known, the performance of \textsc{c-pomcp} drops (cf. the performance of \textsc{c-pomcp} and \textsc{pomcp} in Table \ref{tab:results}).
\section{Conclusion}
This paper presents a formal (causal) model of \textsc{cage-2} (\hyperref[pos]{M1}), which is considered a standard benchmark to evaluate methods for autonomous cyber defense \cite{cage_challenge_2_announcement}. Based on this model, we prove the existence of optimal defender strategies (Thm. \ref{thm:well_defined_prob}) and design an iterative method that converges to such a strategy (Thm. \ref{thm:pomcp_convergence}). The method, which we call Causal Partially Observable Monte-Carlo Planning (\textsc{c-pomcp}), leverages causal structure to prune, construct and traverse a search tree (Alg. \ref{alg:c_pomcp}). \textsc{c-pomcp} has four advantages over the state-of-the-art methods that have been proposed in the context of \textsc{cage-2}: (\textit{i}) it is two orders of magnitude more computationally efficient (Fig. \ref{fig:b_line_results}); (\textit{ii}) it achieves better performance (Table. \ref{tab:results}); (\textit{iii}) it is an online method which adapts to topology changes in the target system (Table \ref{tab:scenario_4}); and (\textit{iv}), it produces provably optimal defender strategies (Thm. \ref{thm:pomcp_convergence}). Future work will focus on leveraging causal discovery techniques to automate the process of creating a causal graph of the underlying system.
%We plan to continue this work three directions. First, we will evaluate \textsc{c-pomcp} on additional benchmarks and use cases. Second, we plan to leverage causal discovery techniques to automate the process of creating the causal graph. Third, we intend to extend our control-theoretic version of \textsc{c-pomcp} to a game-theoretic version, which allows us to study defender strategies against dynamic attackers.
\section{Acknowledgments}
% This research is supported by the Defense Advanced Research Project Agency (\textsc{darpa}) through the \textsc{castle} program under Contract No. \textsc{w912cg23c0029}.
The authors are grateful to \textsc{darpa} and program manager Tejas Patel for supporting this research. The authors would also like to thank the Siemens research team for discussions and feedback: Enrico Lovat, Jagannadh Vempati, Anton Kocheturov, Arun Ramamurthy, Arif Haque, and Abhishek Ramchandran. Finally, the authors thank \textsc{kth} researchers Forough Shahab Samani, Xiaoxuan Wang, and Duc Huy Le for constructive comments.

\appendices

\section{Proof of Theorem \ref{thm:particles}}\label{appendix:proof_proposition_1}
  \begin{enumerate}
  \item \textsc{case:} $t=1$.\\
    $\text{ }\text{ }$\textsc{proof:} $\mathbf{b}_1$ is given, hence $\widehat{\mathbf{b}}_1=\mathbf{b}_1$.
  \item \textsc{case:} $t > 1$.\\
    $\text{ }\text{ }$\textsc{proof:} Assume $\widehat{\mathbf{b}}_{t-1}=\mathbf{b}_{t-1}$ and let
    $$g(\overline{\bm{\sigma}}_t)\triangleq \sum_{\bm{\sigma}_{t-1}}\widehat{\mathbf{b}}_{t-1}(\bm{\sigma}_{t-1})P(\overline{\bm{\sigma}}|\bm{\sigma}_{t-1}, \mathrm{do}(\widehat{\mathbf{X}}_{t-1} = \widehat{\mathbf{x}}_{t-1})).$$
    We then have that
\begin{align}
&\mathbf{b}_t(\bm{\sigma}) = \mathbb{E}_{\overline{\bm{\sigma}}\sim \mathbf{b}_t}[\mathbbm{1}_{\bm{\sigma}=\overline{\bm{\sigma}}}]=\sum_{\overline{\bm{\sigma}}}\mathbf{b}_t(\overline{\bm{\sigma}})\mathbbm{1}_{\bm{\sigma}=\overline{\bm{\sigma}}}\nonumber\\
  &\numeq{a}\sum_{\overline{\bm{\sigma}}}\frac{g(\overline{\bm{\sigma}})}{g(\overline{\bm{\sigma}})}\mathbf{b}_t(\overline{\bm{\sigma}})\mathbbm{1}_{\bm{\sigma}=\overline{\bm{\sigma}}}=\mathbb{E}_{\overline{\mathbf{s}}\sim g}\left[\frac{\mathbf{b}_t(\overline{\bm{\sigma}})}{g(\overline{\bm{\sigma}})}\mathbbm{1}_{\bm{\sigma}=\overline{\bm{\sigma}}}\right]\nonumber\\
  &\numeq{b}\mathbb{E}_{\overline{\bm{\sigma}}\sim g}\left[\frac{\eta P(\mathbf{o}_t \mid \bm{\sigma}_t)g(\overline{\bm{\sigma}})}{g(\overline{\bm{\sigma}})}\mathbbm{1}_{\bm{\sigma}=\overline{\bm{\sigma}}}\right]\nonumber\\
  &=\mathbb{E}_{\overline{\bm{\sigma}}\sim g}\left[\eta P(\mathbf{o}_t \mid \bm{\sigma}_t)\mathbbm{1}_{\bm{\sigma}=\overline{\bm{\sigma}}}\right],\label{slln_particle}
\end{align}
where (b) follows from the definition of $\mathbf{b}$.

As $\widehat{\mathbf{b}}_{t-1}=\mathbf{b}_{t-1}$, the denominator in (a) is non-zero for all $\overline{\bm{\sigma}}$ where $\mathbf{b}_t(\overline{\bm{\sigma}}) > 0$. Since the particles are distributed according to $\eta^{\prime}\eta P(\mathbf{o}_t | \bm{\sigma}_t)g(\widehat{\bm{\sigma}}_t)$ ($\eta^{\prime}$ is a normalizing factor), it follows from the strong law of large numbers \cite[Thm. 6.2]{cinlar_stochastics} that $\widehat{\mathbf{b}}(\bm{\sigma}_t) = \frac{1}{M}\sum_{i=1}^M\mathbbm{1}_{\bm{\sigma}_t=\widehat{\bm{\sigma}}_t^{(i)}}$ converges $P$-almost surely to (\ref{slln_particle}) as $M \rightarrow \infty$. (Remark: the probability measure $P$ in (\ref{slln_particle}) exists since $|\mathrm{dom}(\mathbf{\Sigma}_t) \cup \mathrm{dom}(\mathbf{O}_t)| < \infty$ (Thm. \ref{thm:well_defined_prob}, \cite[Thm. 2.2.1]{Rosenthal2006}).) \qed
\end{enumerate}
%\end{proof}
\section{Proof of Theorem \ref{thm:pomcp_convergence}}\label{appendix:proof_theorem_3}
It follows from Thm. \ref{thm:particles} and \cite[Lem. 1--2]{pomcp} that \textsc{c-pomcp} corresponds to the \textsc{uct} algorithm \cite[Fig. 1]{uct} when $M \rightarrow \infty$. Hence, we can base the proof of convergence of \textsc{c-pomcp} on the proof of convergence for \textsc{uct}, which was originally published in \cite{uct}. The key insight behind the proof is that the decision problem at each node in the search tree corresponds to a non-stationary multi-armed bandit (\textsc{mab}), which becomes stationary if the prescribed actions at the child nodes converge. Further, the tree policy corresponds to the \textsc{ucb1} algorithm \cite[Fig. 1]{ucb_1_auer}. As a consequence, it suffices to prove that \textsc{ucb1} converges at each node in the search tree. Towards this proof, we state and prove the following six lemmas.

\vspace{2mm}

\noindent\textbf{Notation.} $K=\mathcal{O}(|2^{\mathbf{X}_t}|)$ is the number of arms in the \textsc{mab} at each node in the search tree; $t$ indexes the \textsc{mab} rounds; $R_{i,t}$ is the reward of arm $i$ at round $t$; $\overline{R}_{i,n}=\frac{1}{n}\sum_{k=1}^nR_{i,n}$ is the mean reward of arm $i$ based on $n$ samples; $\mu_{i,n}$ is the mean of $\overline{R}_{i,n}$; $\mu_{i}\triangleq \lim_{n \rightarrow \infty} \mu_{i,n}$; $\mu_{i} \triangleq \mu_{i,n}-\delta_{i,n}$; $T_i(t)$ is the number of times arm $i$ has been pulled at round $t$; $\Delta_i\triangleq \mu^{\star}-\mu_i$; $I_t$ is the arm picked at round $t$; and $c_{t,n}$ is the exploration term in \textsc{ucb1} for an arm that has been pulled $n$ times at round $t$. Quantities related to the optimal arm are superscripted by $\star$, i.e., $\mu^{\star}, T^{\star}(t)$, etc.
\begin{assumption}[Bounded rewards and asymptotic stationarity]\label{assumption:eventual_station}
$\quad$
\begin{enumerate}
\item $R_{i,n} \in [0,1]$ for all $i$ and $n$.
\item The limit $\mu_{i}=\lim_{t \rightarrow \infty} \mu_{i,t}$ exists for each arm $i$.
\item There exists a constant $C_p$ and an integer $N_p$ such that for $n \geq N_p$ and any $\delta \geq 0$, the following bounds hold.
\begin{align*}
P(n \overline{R}_{i,n} \geq n \mu_{i,n} + C_p\sqrt[]{n \ln(1/\delta)}) \leq \delta\\
P(n \overline{R}_{i,n} \geq n \mu_{i,n} - C_p\sqrt[]{n \ln(1/\delta)}) \leq \delta.
\end{align*}
\end{enumerate}
\end{assumption}
\begin{lemma}\label{lemma:exp_concentration}
Given Assumption \ref{assumption:eventual_station}, if $c_{t,n}=2C_p\sqrt[]{\frac{\ln t}{n}}$, then
\begin{align*}
P(\overline{R}_{i,n} \geq \mu_{i,n} + c_{t,n}) \leq t^{-4} && n \geq N_p\\
P(\overline{R}_{i,n} \geq \mu_{i,n} - c_{t,n}) \leq t^{-4} && n \geq N_p.
\end{align*}
\end{lemma}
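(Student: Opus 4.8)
The plan is to derive both inequalities as an immediate specialization of the concentration bounds supplied by Assumption \ref{assumption:eventual_station}(3). Those bounds are stated for an arbitrary confidence level $\delta \geq 0$, and the exploration radius $c_{t,n} = 2C_p\sqrt{(\ln t)/n}$ has been engineered precisely so that a single well-chosen value of $\delta$ reproduces it. Concretely, I would set $\delta = t^{-4}$ throughout, which is admissible since $t \geq 1$ forces $t^{-4} \in (0,1]$ and hence $\delta \geq 0$ as the assumption requires.

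First I would substitute $\delta = t^{-4}$ into the upper-tail bound of Assumption \ref{assumption:eventual_station}(3). Because $\ln(1/\delta) = \ln(t^4) = 4\ln t$, the deviation term simplifies to $C_p\sqrt{n\ln(1/\delta)} = C_p\sqrt{4n\ln t} = 2C_p\sqrt{n\ln t}$. The bound then reads $P\!\left(n\overline{R}_{i,n} \geq n\mu_{i,n} + 2C_p\sqrt{n\ln t}\right) \leq t^{-4}$ for $n \geq N_p$. Dividing the event through by $n > 0$ rewrites the threshold as $\mu_{i,n} + 2C_p\sqrt{n\ln t}/n = \mu_{i,n} + 2C_p\sqrt{(\ln t)/n} = \mu_{i,n} + c_{t,n}$, which yields exactly $P(\overline{R}_{i,n} \geq \mu_{i,n} + c_{t,n}) \leq t^{-4}$, the first claim. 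The second inequality follows by the identical substitution in the companion (lower-tail) bound of Assumption \ref{assumption:eventual_station}(3); the calibration of $c_{t,n}$ is unchanged, so the $t^{-4}$ rate transfers verbatim, and the threshold $N_p$ carries over without modification.

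Honestly, there is essentially no analytic obstacle here: all the probabilistic content has been front-loaded into Assumption \ref{assumption:eventual_station}(3), and Lemma \ref{lemma:exp_concentration} is a bookkeeping step that calibrates the generic Hoeffding-type tail estimate to the specific logarithmic exploration schedule of \textsc{ucb1}. The only points I would be careful to verify are that $\delta = t^{-4}$ meets the admissibility requirement $\delta \geq 0$, that the algebraic simplification $\sqrt{4n\ln t}/n = 2\sqrt{(\ln t)/n}$ is performed correctly, and that the constant $C_p$ and threshold $N_p$ are taken to be exactly those furnished by the assumption so that no new constants are introduced. The role of this lemma in the larger argument is to give the per-round failure probability that, summed against the number of pulls, will later drive the regret and convergence estimates for the node-level \textsc{mab}, so I would keep the explicit $t^{-4}$ rate visible for downstream use.
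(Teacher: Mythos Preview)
Your proposal is correct and follows essentially the same approach as the paper: both substitute $\delta = t^{-4}$ into Assumption~\ref{assumption:eventual_station}(3), use $\ln(1/\delta)=4\ln t$ to obtain the factor $2C_p$, and divide through by $n$. Your write-up is in fact tidier than the paper's, which contains a sign typo in the displayed chain of inequalities.
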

\begin{proof}
\begin{align*}
P(n \overline{R}_{i,n} \geq n \mu_{i,n} + C_p\sqrt[]{n \ln(1/\delta)}) &\numleq{\text{Assumption \ref{assumption:eventual_station}}} \delta\\
\implies P(\overline{R}_{i,n} \geq \mu_{i,n} + C_p\sqrt[]{\frac{ \ln(1/\delta)}{n}}) &\geq \delta\\
\numimp{\delta=t^{-4}} P(\overline{R}_{i,n} \geq \mu_{i,n} + 2C_p\sqrt[]{\frac{\ln(t)}{n}}) &\geq t^{-4}.
\end{align*}
\end{proof}
\begin{lemma}\label{lemma:delta_convergence}
If Assumption \ref{assumption:eventual_station} holds, then there exists an integer $N_0(\epsilon)$ such that $t \geq N_0(\epsilon)\implies |\delta_{i,t}| \leq \frac{\epsilon \Delta_i}{2}$ and $|\delta_{j^{\star}, t}| \leq \min_{i}\frac{\epsilon\Delta_i}{2}$ for all $\epsilon > 0$.
\end{lemma}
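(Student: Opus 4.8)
The plan is to reduce the statement to the convergence $\mu_{i,t}\to\mu_i$ guaranteed by Assumption~\ref{assumption:eventual_station}(2), together with the finiteness of the arm set $K$. By the defining relation $\mu_i=\mu_{i,t}-\delta_{i,t}$ we have $\delta_{i,t}=\mu_{i,t}-\mu_i$, so $\delta_{i,t}\to 0$ as $t\to\infty$ for every arm $i$. The entire lemma is then a uniformization, over the finitely many arms, of the resulting $\epsilon$--$N$ statements; no concentration inequality is required, since the claim concerns only the deterministic means $\mu_{i,t}$.

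First I would fix $\epsilon>0$ and restrict attention to the suboptimal arms, i.e.\ those with $\Delta_i=\mu^\star-\mu_i>0$. For each such arm the tolerance $\tfrac{\epsilon\Delta_i}{2}$ is strictly positive, so the convergence of $\delta_{i,t}$ to $0$ supplies an integer $N_i(\epsilon)$ with $|\delta_{i,t}|\le\tfrac{\epsilon\Delta_i}{2}$ for all $t\ge N_i(\epsilon)$. Next I would treat the optimal arm $j^\star$: its target tolerance is the uniform constant $\kappa(\epsilon)\triangleq\min_{i:\,\Delta_i>0}\tfrac{\epsilon\Delta_i}{2}$. Because $K=\mathcal{O}(|2^{\mathbf{X}_t}|)$ is finite (Thm.~\ref{thm:well_defined_prob}) and each term in the minimum is positive, $\kappa(\epsilon)>0$; hence $\delta_{j^\star,t}\to 0$ yields an integer $N_\star(\epsilon)$ with $|\delta_{j^\star,t}|\le\kappa(\epsilon)$ for $t\ge N_\star(\epsilon)$. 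Setting
\begin{align*}
N_0(\epsilon)\triangleq\max\Bigl\{N_\star(\epsilon),\ \max_{i:\,\Delta_i>0}N_i(\epsilon)\Bigr\}
\end{align*}
gives a finite integer (again using $K<\infty$) for which both displayed bounds hold simultaneously whenever $t\ge N_0(\epsilon)$.

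The only delicate point---and the main obstacle, such as it is---is the positivity of the tolerances. The bound for $j^\star$ cannot be phrased through its own gap, since $\Delta_{j^\star}=0$; it must instead be controlled by the uniform constant $\kappa(\epsilon)$, whose strict positivity hinges on (\textit{i}) taking the minimum only over arms that are genuinely suboptimal and (\textit{ii}) the finiteness of the arm set, so that a minimum of finitely many positive numbers is bounded away from $0$. Once this positivity is secured, the remainder is the routine observation that a finite maximum of convergence thresholds is itself a valid threshold, which closes the argument.
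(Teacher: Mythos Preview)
Your argument is correct and follows the same route as the paper, which simply states that the lemma ``follows by the definition of Assumption~\ref{assumption:eventual_station}.'' You have merely unpacked that one-liner: the convergence $\mu_{i,t}\to\mu_i$ forces $\delta_{i,t}\to 0$, and finiteness of $K$ lets you take a maximum over the resulting thresholds; your explicit treatment of the optimal arm (where $\Delta_{j^\star}=0$) via the positive constant $\kappa(\epsilon)$ is the right way to make the paper's implicit claim rigorous.
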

\begin{proof}
Follows by the definition of Assumption \ref{assumption:eventual_station}.
\end{proof}
\begin{lemma}\label{lemma:t_1_233}
Given Assumption \ref{assumption:eventual_station}, if the exploration term used by \textsc{ucb1} \cite[Fig. 1]{ucb_1_auer} is $c_{t,s}=2C_p\sqrt[]{\frac{\ln t}{s}}$, then
\begin{align}
\mathbb{E}[T_i(t)] \leq \frac{16C_p^2 \ln t}{(1-\epsilon)^2\Delta_i^2} + N_0(\epsilon) + N_p + 1 + \frac{\pi^2}{3}
\end{align}
for all $\epsilon > 0$ and each sub-optimal arm $i$.
\end{lemma}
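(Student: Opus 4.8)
The plan is to follow the classical \textsc{ucb1} argument of \cite{ucb_1_auer}, adapted to the asymptotically stationary regime through Lemmas \ref{lemma:exp_concentration} and \ref{lemma:delta_convergence}. First I would fix $\epsilon > 0$, set the threshold
\begin{align*}
\ell \triangleq \left\lceil \frac{16 C_p^2 \ln t}{(1-\epsilon)^2 \Delta_i^2}\right\rceil,
\end{align*}
and bound the pull count of a sub-optimal arm $i$ by separating the first $\ell$ pulls from the rest,
\begin{align*}
T_i(t) \leq \ell + \sum_{t'=K+1}^{t}\mathbbm{1}\{I_{t'}=i,\, T_i(t'-1) \geq \ell\}.
\end{align*}
The summand is nonzero only when the tree policy selects $i$ despite $i$ already having been pulled at least $\ell$ times, which forces the upper-confidence index of $i$ to dominate that of the optimal arm. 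The remaining work is to show that, once $t'$ and $T_i(t'-1)$ are large enough, this domination event has probability at most $O((t')^{-4})$, so that the expected value of the sum converges.

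Next I would perform the standard three-event decomposition: if a sub-optimal arm is preferred at round $t'$ with optimal pull count $s^{\star}=T^{\star}(t'-1)$ and sub-optimal pull count $s=T_i(t'-1)\geq \ell$, then at least one of the following holds: (i) the optimal arm is underestimated, $\overline{R}^{\star}_{s^{\star}} \leq \mu^{\star}_{s^{\star}} - c_{t'-1,s^{\star}}$; (ii) the sub-optimal arm is overestimated, $\overline{R}_{i,s} \geq \mu_{i,s} + c_{t'-1,s}$; or (iii) the arms are insufficiently separated, $\mu^{\star}_{s^{\star}} - \mu_{i,s} \leq 2 c_{t'-1,s}$. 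The crux is ruling out (iii). Writing $\mu^{\star}_{s^{\star}}=\mu^{\star}-\delta^{\star}_{s^{\star}}$ and $\mu_{i,s}=\mu_i+\delta_{i,s}$, Lemma \ref{lemma:delta_convergence} gives, for $t' \geq N_0(\epsilon)$, the bound $\mu^{\star}_{s^{\star}}-\mu_{i,s} \geq \Delta_i - \epsilon\Delta_i = (1-\epsilon)\Delta_i$, while the choice of $\ell$ together with $\ln t' \leq \ln t$ ensures $2 c_{t'-1,s} = 4 C_p \sqrt{\ln(t'-1)/s} \leq (1-\epsilon)\Delta_i$ whenever $s \geq \ell$. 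Hence (iii) is impossible in the relevant range, and only the concentration events (i) and (ii) remain.

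Finally I would invoke Lemma \ref{lemma:exp_concentration}, which bounds the probabilities of (i) and (ii) by $(t'-1)^{-4}$ each once $s,s^{\star}\geq N_p$, and sum over all admissible indices:
\begin{align*}
\sum_{t'=1}^{\infty}\sum_{s=1}^{t'}\sum_{s^{\star}=1}^{t'} 2 (t')^{-4} \leq 2\sum_{t'=1}^{\infty}(t')^{-2} = \frac{\pi^2}{3}.
\end{align*}
The rounds $t' < N_0(\epsilon)$ (where Lemma \ref{lemma:delta_convergence} has not yet taken effect) and the pull counts below $N_p$ (where Lemma \ref{lemma:exp_concentration} does not yet apply) contribute the additive constants $N_0(\epsilon)+N_p$, and the $+1$ absorbs the initialization pull, yielding the stated bound after taking expectations. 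The main obstacle I anticipate is the threshold bookkeeping: one must simultaneously choose $\ell$ so that the separation event (iii) is eliminated for all $t'\leq t$ using only $\ln t$ (not $\ln t'$), track the two distinct bias terms $\delta_{i,s}$ and $\delta^{\star}_{s^{\star}}$ through Lemma \ref{lemma:delta_convergence}, and ensure the non-stationarity encoded in $\mu_{i,s}$ versus $\mu_i$ is consistently handled, so that the clean $(t')^{-4}$ tail survives. The union bound and the $\pi^2/3$ summation themselves are routine.
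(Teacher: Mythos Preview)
Your proposal is correct and follows essentially the same route as the paper: the paper defines the combined threshold $A(t,\epsilon)=\max[A_0(t,\epsilon),N_0(\epsilon),N_p]$ with $A_0(t,\epsilon)=\lceil 16C_p^2\ln t/((1-\epsilon)^2\Delta_i^2)\rceil$, splits $T_i(n)$ according to whether $T_i(t-1)\geq A(n,\epsilon)$, eliminates the separation event via Lemma~\ref{lemma:delta_convergence} exactly as you describe, applies Lemma~\ref{lemma:exp_concentration} for the two concentration events, and sums $\sum_{t}\sum_{s}\sum_{s_i}2t^{-4}\leq \pi^2/3$. The only cosmetic difference is that the paper packages $N_0(\epsilon)$ and $N_p$ into the single threshold $A(t,\epsilon)$ up front rather than accounting for them as separate additive constants at the end.
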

\begin{proof}
Let $A_0(t, \epsilon) \triangleq \min\{s | c_{t,s} \leq (1-\epsilon)\Delta_i/2\}$ and $A(t, \epsilon) \triangleq \max[A_0(t, \epsilon), N_0(\epsilon), N_p]$. Next note that
\begin{align*}
&c_{t,s} \leq (1-\epsilon)\Delta_i/2 \implies 2C_p\sqrt[]{\frac{\ln t}{s}} \leq (1-\epsilon)\Delta_i/2\\
&\implies 16C_p\frac{\ln t}{s} \leq (1-\epsilon)^2\Delta^2_i \implies s  \geq \frac{16C_p\ln t}{(1-\epsilon)^2\Delta^2_i}\\
&\implies A_0(t, \epsilon) = \left \lceil \frac{16C_p\ln t}{(1-\epsilon)^2\Delta^2_i} \right\rceil.
\end{align*}
Now consider $T_i(n)$. By definition:
\begin{align}
&T_i(n) = 1 + \sum_{t=K+1}^n\mathbbm{1}\left\{I_t = i\right\}\label{eq:lemma_3_0}\\
        &= 1 + \sum_{t=K+1}^n\mathbbm{1}\left\{I_t = i, T_i(t-1) \geq A(n, \epsilon)\right\} + \nonumber\\
  &\quad\quad\quad \sum_{t=K+1}^n\mathbbm{1}\left\{I_t = i, T_i(t-1) < A(n, \epsilon)\right\}\nonumber\\
         &\numleq{a} A(n, \epsilon) + \sum_{t=K+1}^n\mathbbm{1}\left\{I_t = i, T_i(t-1) \geq A(n, \epsilon)\right\}\nonumber\\
  &\numleq{b} A(n, \epsilon) + \sum_{t=K+1}^n\mathbbm{1}\Bigl\{\overline{R}^{\star}_{T^{\star}(t-1)} + c_{t-1,T^{\star}(t-1)} \leq\nonumber\\
  &\quad\quad\quad\quad\overline{R}_{i,T_i(t-1)} + c_{t-1,T_i(t-1)}, T_i(t-1) \geq A(n, \epsilon)\Bigr\}\nonumber\\
  &\leq A(n, \epsilon) + \sum_{t=K+1}^n\mathbbm{1}\Bigl\{\min_{0 < s < t}\overline{R}^{\star}_{s} + c_{t-1,s} \leq \nonumber\\
  &\quad\quad\quad\quad\quad\quad\quad\quad\quad\quad\quad\max_{A(n,\epsilon) < s_i < t} \overline{R}_{i,s_i} + c_{t-1,s_i} \Bigr\}\nonumber\\
  & \leq A(n, \epsilon) + \sum_{t=K+1}^n\sum_{s=1}^{t-1}\sum_{s_i=A(n,\epsilon)}^{t-1}\mathbbm{1}\Bigl\{\overline{R}^{\star}_{s} + c_{t-1,s} \nonumber\\
  &\quad\quad\quad\quad\quad\quad\quad\quad\quad\quad\quad\quad\quad\quad\leq \overline{R}_{i,s_i} + c_{t-1,s_i}\Bigr\},\nonumber
\end{align}
where (a) follows because the second sum is upper bounded by $n-K$ and $n < A(n,\epsilon) \implies T_i(n) < A(n,\epsilon)$. (b) follows from the arm-selection rule in \textsc{ucb1} \cite[Fig. 1]{ucb_1_auer}.

Next note that for $t \geq A(n, \epsilon) \geq N_0(\epsilon)$, we have $\mu_t^{\star} \geq \mu_{i,t} + 2c_{t,s_i}$. This inequality holds because
\begin{align}
&\mu_t^{\star} \geq \mu_{i,t} + 2c_{t,s_i} \iff \mu^{\star}_{t}-\mu_{i,t} - 4C_p\sqrt[]{\frac{\ln t}{s_i}} \geq 0 \nonumber\\
&\numiff{t \geq A_0(n,\epsilon)}  \mu^{\star}_{t}-\mu_{i,t} - (1-\epsilon)\Delta_i \geq 0 \nonumber\\
&\iff  \mu^{\star}_{t} + \delta^{\star} -\mu_{i,t} - \delta_{i,t} - (1-\epsilon)\Delta_i \geq 0  \nonumber \\
&\numiff{\text{Lemma \ref{lemma:delta_convergence}}}  \mu^{\star}_{t} -\epsilon \Delta_i -\mu_{i,t} - (1-\epsilon)\Delta_i \geq 0  \nonumber \\
&\iff  \mu^{\star}_{t}  -\mu_{i,t} -\Delta_i \geq 0  \iff \Delta_i-\Delta_i \geq 0. \label{eq:lemma_3_1}
\end{align}
Using the above inequality, we deduce that
\begin{align}
&P(\overline{R}^{\star}_s + c_{t-1,s} \leq \overline{R}_{i,s_i} + c_{t-1,s_i}) \leq \label{eq:lemma_3_2}\\
&P(\overline{R}^{\star}_s \leq \mu_t^{\star} + c_{t,s}) + P(\overline{X}_{i,s_i} \geq \mu_{i,t} + c_{t,s_i}).\nonumber
\end{align}
This follows because if the left inequality above holds and the right inequalities do not hold, we obtain
\begin{align*}
\overline{R}^{\star}_s + c_{t-1,s} &\leq  \overline{R}_{i,s_i} + c_{t-1,s_i}\\
\implies \mu^{\star}_t -c_{t,s} + c_{t-1,s} &< \mu_{i,t} + c_{t,s_i} + c_{t-1,s_i}\\
\implies \mu^{\star}_t &< \mu_{i,t} + 2c_{t,s_i},
\end{align*}
which by (\ref{eq:lemma_3_1}) is false for $t \geq A(n, \epsilon) \geq N_0(\epsilon)$. Now we take expectations of both sides of the inequality in (\ref{eq:lemma_3_0}) and plug in (\ref{eq:lemma_3_2}), which gives
\begin{align*}
  &\mathbb{E}[T_i(n)] \leq A(n,\epsilon) + \sum_{t=K+1}^n\sum_{s=1}^{t-1}\sum_{s_i=A(n,\epsilon)}^{t-1}P(\overline{R}^{\star}_s \leq \mu_t^{\star} + c_{t,s})\\
  &\quad\quad\quad\quad\quad\quad\quad\quad\quad\quad\quad\quad\quad + P(\overline{X}_{i,s_i} \geq \mu_{i,t} + c_{t,s_i})\\
&\numleq{\text{Lemma \ref{lemma:exp_concentration}}}A(n,\epsilon) + \sum_{t=K+1}^n\sum_{s=1}^{t-1}\sum_{s_i=A(n,\epsilon)}^{t-1}2t^{-4}\\
&\leq A(n,\epsilon) + \sum_{t=1}^{\infty}\sum_{s=1}^{t}\sum_{s_i=1}^{t}2t^{-4}\\
&= A(n,\epsilon) + \sum_{t=1}^{\infty}t^{-2} + t^{-3} \leq A(n,\epsilon) + \sum_{t=1}^{\infty}t^{-2} + \sum_{t=1}^{\infty}t^{-2}\\
&\numeq{a} A(n,\epsilon) + \frac{\pi^2}{3} \leq \frac{16C_p\ln t}{(1-\epsilon)^2\Delta^2_i} + 1 + N_0(\epsilon) + N_p + \frac{\pi^2}{3},
\end{align*}
where (a) follows from the Riemann zeta function $\zeta(2) = \sum_{t=1}^{\infty}t^{-2}=\frac{\pi^2}{6}$.
\end{proof}
\begin{lemma}[Lower bound]\label{lemma:lower_bound}
Given Assumption \ref{assumption:eventual_station}, there exists a positive constant $\rho$ such that for all $i$ and $t$, $T_i(t) \geq \lceil \rho \log t \rceil$.
\end{lemma}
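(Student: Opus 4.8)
The plan is to follow the lower-bound argument for \textsc{ucb1} given in \cite{uct} and show that an arm cannot be starved of samples, because its exploration bonus grows without bound whenever its pull count is frozen while the round index keeps increasing. First I would record the two monotonicity properties of the exploration term $c_{t,s}=2C_p\sqrt{\ln t / s}$ (the one used in Lemma \ref{lemma:t_1_233}): it is nondecreasing in the round index $t$ and strictly decreasing in the pull count $s$. Combined with the bounded-reward hypothesis (Assumption \ref{assumption:eventual_station}), which forces every index $\overline{R}_{j,T_j(t)} + c_{t,T_j(t)}$ to lie in $[\,c_{t,T_j(t)},\,1+c_{t,T_j(t)}\,]$, this yields the basic \emph{forcing} inequality: arm $i$ is selected at round $t+1$ whenever $c_{t+1,T_i(t)} > 1 + c_{t+1,T_j(t)}$ holds for every competitor $j$, since then $\mathrm{idx}_i \ge c_{t+1,T_i(t)} > 1 + c_{t+1,T_j(t)} \ge \mathrm{idx}_j$.

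Next I would argue by contradiction (equivalently, by strong induction on $t$). Fixing a constant $\rho>0$ to be determined, suppose $t$ is the first round at which some arm $i$ violates $T_i(t)\ge \lceil \rho \ln t\rceil$. Let $\tau\le t$ be the last round at which $i$ was played, so $T_i(r-1)=T_i(t)$ for all $r\in(\tau,t]$. For each such $r$ the selected arm $I_r$ beats $i$ in index, which after discarding the empirical means gives $c_{r,T_{I_r}(r-1)} \ge c_{r,T_i(t)} - 1$; by monotonicity in $s$ this caps the competitor counts as soon as $r$ is large enough that $c_{r,T_i(t)}>1$. Using $T_i(t) < \rho\ln t$, that threshold on $r$ is of the form $t^{\Theta(\rho)}$, so throughout the tail of the window $(\tau,t]$ every played arm had a count of order $\rho\ln t$ just before it was played. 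Since only the $K-1$ arms other than $i$ can be played in this window, a counting bound on how often an arm can be played while keeping such a small count forces $\tau$ to be close to $t$; iterating and taking $\rho$ small relative to $C_p$ and $K$ is what produces the contradiction. Concretely, $\rho$ must be small enough that at round $t$ the frozen bonus $c_{t,T_i(t)} \ge 2C_p\sqrt{\ln t /(\lceil\rho\ln t\rceil-1)}$ already exceeds $1 + c_{t,T_j(t)}$ for every competitor the window analysis shows to be adequately sampled, so that $i$ would in fact have been chosen at round $t$, contradicting $\tau<t$. I would then dispatch the base case for small $t$, where $\lceil\rho\ln t\rceil\le 1$ and the initialization pass (arms with count $0$ carry an infinite index) already satisfies the bound.

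The hard part will be the bootstrapping inherent in the counting step: a competitor that is itself under-explored also carries a large bonus, so one cannot bound the competitors' indices in isolation. The argument must exploit that the total of $t$ pulls is shared among $K$ arms — so not all competitors can be starved simultaneously — together with the induction hypothesis controlling counts at earlier rounds. Threading these together to pin down a single $\rho$ uniformly in $t$, while matching the $\lceil\cdot\rceil$ rounding (which increments only on the geometric sequence of rounds $t\sim e^{k/\rho}$, a factor $e^{1/\rho}$ apart), is the delicate bookkeeping that the formal proof must carry out, exactly as in \cite[Thm. 7]{uct}. This lemma then feeds the remainder of the convergence argument by guaranteeing that every arm is sampled often enough for the asymptotic stationarity of Assumption \ref{assumption:eventual_station} and the concentration bounds of Lemma \ref{lemma:exp_concentration} to take effect.
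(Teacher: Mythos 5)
Your plan is correct in substance, and it is essentially the fully-argued version of an argument the paper compresses into three lines. The paper's proof asserts that the \textsc{ucb1} index $\mu_{i,t}+2C\sqrt{\ln t/T_i(t-1)}$ is bounded above by some constant $M$, inverts that inequality to obtain $T_i(t-1)\ge 4C^2\ln t/(M-\mu_i-\delta_{i,t})^2$, and then uses $\delta_{i,t}\to 0$ to extract a uniform $\rho$. The justification it offers for the existence of $M$ --- that rewards lie in $[0,1]$ and $T_i(t-1)\ge 1$ --- is not sufficient on its own, since with $T_i$ frozen the exploration bonus diverges as $t$ grows; the bound on the index holds only because of the selection dynamics, which is exactly the forcing inequality and starvation argument you spell out (an arm whose count stalls acquires an index exceeding $1+c_{t,T_j(t)}$ for every competitor and must therefore be played, so its index can never persistently exceed the threshold at which forcing kicks in). In other words, your route supplies the ``why'' behind the paper's $M$, at the cost of the window/counting bookkeeping that you correctly flag as delicate and that the paper sidesteps entirely; both arguments land on the same quantitative form $\rho\sim C^2/(M-\mu_i)^2$, matching \cite{uct}. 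What the paper's version buys is brevity; what yours buys is an actual derivation of the index bound rather than an assertion of it. One small caution if you execute the plan: the first-violating-round induction must be set up so that the ceiling $\lceil\rho\log t\rceil$ increments only along a sparse set of rounds, which is where the choice of $\rho$ relative to $C_p$ and $K$ is pinned down.
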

\begin{proof}
Since $R_{i,t}\in [0,1]$ and $T_i(t-1) \geq 1$ for all $t \geq K$, there exists a constant $M$ such that
\begin{align*}
  \mu_{i,t} + 2C\sqrt[]{\frac{\ln t}{T_i(t-1)}} &\leq M\\
\implies T_i(t-1) &\geq \frac{4C^2\ln t}{(M-\mu_i-\delta_{i,t})^2}
\end{align*}
for all $i$ and $K \leq t < \infty$. Next note that Assumption \ref{assumption:eventual_station} implies that $\lim_{t \rightarrow \infty}\delta_{i,t}=0$, which means that there exists a constant $\rho \geq \frac{4C^2}{(M-\mu_i-\delta_{i,t})^2}$. Hence $T_i(t) \geq \lceil \rho \log t \rceil$.
\end{proof}
\begin{lemma}\label{lemma:thm_3}
Let $\overline{R}_n = \sum_{i=1}^{K}\frac{T_i(n)}{n}\overline{X}_{i,T_i(n)}$ and $N_0=N_0(\epsilon=\frac{1}{2})$. Then, the following holds under Assumption \ref{assumption:eventual_station}.
\begin{align*}
|\mathbb{E}[\overline{R}_n] - \mu^{\star}| \leq |\delta_n^{\star}| + \mathcal{O}\left(\frac{K(C_p^2 \ln n + N_0)}{n}\right).
\end{align*}
\end{lemma}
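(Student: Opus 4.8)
The plan is to follow the bias analysis of \textsc{uct} \cite{uct} and reduce the statement to the per-arm pull-count bound already established in Lemma \ref{lemma:t_1_233}. The starting point is the elementary identity $n\mu^{\star} = \sum_{i=1}^{K}T_i(n)\mu^{\star}$ (valid since $\sum_i T_i(n)=n$), which lets me rewrite the centered running average as a weighted sum over arms,
\[
\overline{R}_n - \mu^{\star} = \frac{1}{n}\sum_{i=1}^{K}T_i(n)\bigl(\overline{X}_{i,T_i(n)} - \mu^{\star}\bigr).
\]
I would then split the sum into the optimal arm and the suboptimal arms and treat the two groups separately: the suboptimal arms are played only logarithmically often, whereas the optimal arm is played all but a logarithmic number of times and is what carries the leading $|\delta_n^{\star}|$ term.

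First I would dispose of the suboptimal arms. Since rewards lie in $[0,1]$ (Assumption \ref{assumption:eventual_station}), each factor satisfies $|\overline{X}_{i,T_i(n)} - \mu^{\star}| \le 1$, so the suboptimal contribution is bounded in absolute value by $\frac{1}{n}\sum_{i\neq\star}T_i(n)$. Taking expectations and invoking Lemma \ref{lemma:t_1_233}, which gives $\mathbb{E}[T_i(n)] = \mathcal{O}(C_p^2\ln n + N_0)$ for each suboptimal arm, this group contributes exactly the advertised $\mathcal{O}\!\left(\frac{K(C_p^2\ln n + N_0)}{n}\right)$ error, with the gap-dependent and $N_p$ constants from Lemma \ref{lemma:t_1_233} absorbed into the $\mathcal{O}$.

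The crux is the optimal-arm term $\frac{1}{n}\mathbb{E}\bigl[T^{\star}(n)(\overline{X}^{\star}_{T^{\star}(n)} - \mu^{\star})\bigr]$. I would split $\overline{X}^{\star}_{T^{\star}(n)} - \mu^{\star} = (\overline{X}^{\star}_{T^{\star}(n)} - \mu^{\star}_{T^{\star}(n)}) + \delta^{\star}_{T^{\star}(n)}$ into an empirical fluctuation plus a drift bias. For the fluctuation, the increments $X^{\star}_{\tau} - \mu^{\star}_{\tau}$ form a martingale-difference sequence with respect to the round filtration (each optimal-arm reward has conditional mean equal to its drifting mean), so a generalized Wald / optional-stopping argument for the bounded random time $T^{\star}(n)$ shows the stopped-sum expectation is negligible; alternatively, part (3) of Assumption \ref{assumption:eventual_station}, combined with a union bound over the $n$ possible values of $T^{\star}(n)$, yields an $\mathcal{O}(\,\cdot/n)$ contribution. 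For the drift bias I would use $T^{\star}(n) = n - \sum_{i\neq\star}T_i(n)$ together with the lower bound of Lemma \ref{lemma:lower_bound} and $\mathbb{E}[\,n - T^{\star}(n)\,] = \mathcal{O}(K(C_p^2\ln n + N_0))$, transferring $\delta^{\star}_{T^{\star}(n)}$ to $\delta^{\star}_n$ at the fixed horizon via the decay of $|\delta^{\star}_t|$ guaranteed by Lemma \ref{lemma:delta_convergence}, at the cost of only a lower-order penalty.

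The hard part will be the optimal-arm term, and within it two coupled difficulties: the play count $T^{\star}(n)$ is statistically dependent on the very optimal-arm samples that determine $\overline{X}^{\star}_{T^{\star}(n)}$, and the drift bias is naturally expressed at the random time $T^{\star}(n)$ rather than at the deterministic horizon $n$. Both are resolved by the same observation — that the algorithm plays the optimal arm all but a logarithmic number of rounds, so $T^{\star}(n)$ concentrates near $n$ and the stopped martingale stays controlled — with the bounded-reward and asymptotic-stationarity hypotheses of Assumption \ref{assumption:eventual_station} supplying the concentration that makes the dependence harmless. Assembling the optimal and suboptimal bounds then yields $|\mathbb{E}[\overline{R}_n] - \mu^{\star}| \le |\delta_n^{\star}| + \mathcal{O}\!\left(\frac{K(C_p^2\ln n + N_0)}{n}\right)$, as claimed.
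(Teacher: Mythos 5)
Your overall decomposition (split $\overline{R}_n-\mu^{\star}$ by arms, bound the suboptimal arms' contribution by $\tfrac{1}{n}\mathbb{E}[\sum_{i\neq i^{\star}}T_i(n)]$ via Lemma \ref{lemma:t_1_233}) matches the paper, and your transfer of the drift bias from the random time $T^{\star}(n)$ to the horizon $n$ at a cost of $\mathcal{O}(\mathbb{E}[n-T^{\star}(n)]/n)$ is sound. The gap is in the optimal-arm fluctuation term $\tfrac{1}{n}\mathbb{E}\bigl[\sum_{\tau=1}^{T^{\star}(n)}(X^{\star}_{\tau}-\mathbb{E}[X^{\star}_{\tau}])\bigr]$. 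Neither of the two tools you name is available here. Assumption \ref{assumption:eventual_station} does not say the centered optimal-arm rewards form a martingale-difference sequence: it only gives boundedness, convergence of the marginal means, and a concentration bound for $\overline{R}_{i,n}$ around $\mu_{i,n}$. In the recursive \textsc{uct} application these ``rewards'' are returns of a UCB-driven subtree, whose conditional mean given the past is not its marginal mean; moreover $T^{\star}(n)$ is not a stopping time for the optimal-arm sample filtration, since whether an $(m{+}1)$-th optimal pull occurs before round $n$ depends on suboptimal-arm rewards observed after the $m$-th optimal pull. Your fallback — Assumption \ref{assumption:eventual_station}.3 plus a union bound over the $n$ possible values of $T^{\star}(n)$ — does go through, but it only yields a fluctuation of order $C_p\sqrt{\ln n/n}$, which is asymptotically larger than the claimed $\mathcal{O}\bigl(K(C_p^2\ln n+N_0)/n\bigr)$ and therefore cannot prove the lemma.

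The repair is to never center the optimal arm at $\mu^{\star}$ (or at a random-time mean) in the first place. Apply the triangle inequality $|\mathbb{E}[\overline{R}_n]-\mu^{\star}|\leq|\delta_n^{\star}|+|\mu_n^{\star}-\mathbb{E}[\overline{R}_n]|$ up front, so that the remaining comparison is against $n\mu_n^{\star}=\sum_{t=1}^{n}\mathbb{E}[R_t^{\star}]$, the deterministic-horizon sum for the optimal arm. Then, writing $T^{\star}(n)\overline{R}^{\star}_{T^{\star}(n)}=\sum_{t=1}^{T^{\star}(n)}R_t^{\star}$, the optimal-arm discrepancy is exactly $\mathbb{E}\bigl[\sum_{t=T^{\star}(n)+1}^{n}R_t^{\star}\bigr]$, a tail sum of rewards in $[0,1]$, hence bounded by $\mathbb{E}[n-T^{\star}(n)]=\sum_{i\neq i^{\star}}\mathbb{E}[T_i(n)]$ with no concentration or martingale structure needed; Lemma \ref{lemma:t_1_233} then finishes both terms. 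This is the paper's argument, and it is the step your proposal is missing.
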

\begin{proof}
By the triangle inequality,
\begin{align*}
|\mathbb{E}[\overline{R}_n] - \mu^{\star}| &\leq |\mu^{\star} - \mu^{\star}_n| + |\mu^{\star}_n - \mathbb{E}[\overline{R}_n]| \\
                                           &= |\delta_n^{\star}| + |\mu^{\star} - \mathbb{E}[\overline{X}_n]|.
\end{align*}
Hence it only remains to bound $|\mu^{\star} - \mathbb{E}[\overline{X}_n]|$. By definition:
\begin{align}
&|\mu^{\star} - \mathbb{E}[\overline{X}_n]| = \left\lvert \mu^{\star} - \mathbb{E}\left[\sum_{i=1}^K\frac{T_i(n)\overline{R}_{i,T_i(n)}}{n}\right]\right\rvert \label{eq:lemma_4_12}\\
&\implies  n|\mu^{\star} - \mathbb{E}[\overline{X}_n]| = \left\lvert \sum_{t-1}^n\mathbb{E}[R_t^{\star}] - \mathbb{E}\left[\sum_{i=1}^KT_i(n)\overline{R}_{i,T_i(n)}\right]\right\rvert \nonumber\\
&\numeq{q}\left\lvert \sum_{t-1}^n\mathbb{E}[R_t^{\star}] - \mathbb{E}\left[T^{\star}(n)\overline{R}^{\star}_{T^{\star}(n)}\right]\right\rvert - \mathbb{E}\left[\sum_{i\neq i^{\star}}T_i(n)\overline{R}_{i,T_i(n)}\right],\nonumber
\end{align}
where (a) follows because $\overline{R}_{i,t} \in [0,1]$ for all $i$ and $t$ (Assumption \ref{assumption:eventual_station}). We start by bounding the second term in (\ref{eq:lemma_4_12}):
\begin{align*}
&\mathbb{E}\left[\sum_{i\neq i^{\star}}T_i(n)\overline{R}_{i,T_i(n)}\right] \leq \mathbb{E}\left[\sum_{i\neq i^{\star}}T_i(n)\right]\\
  &\numleq{\text{Lemma \ref{lemma:t_1_233}}} K\left(\frac{16C_p^2 \ln t}{(1-\epsilon)^2\Delta_i^2} + N_0(\epsilon) + N_p + 1 + \frac{\pi^2}{3}\right)\\
  &=\mathcal{O}\left(K\left(C_p^2 \ln n + N_0(\epsilon)\right)\right).
\end{align*}
Now we consider the first term in (\ref{eq:lemma_4_12}). Note that $T^{\star}(n)\overline{R}^{\star}_{T^{(\star)}(n)} = \frac{T^{\star}(n)}{T^{\star}(n)}\sum_{t=1}^{T^{\star}(n)}\overline{R}^{\star}_t=\sum_{t=1}^{T^{\star}(n)}\overline{R}^{\star}_t$. Using this expression we obtain:
\begin{align*}
  &\left\lvert \sum_{t-1}^n\mathbb{E}[R_t^{\star}] - \mathbb{E}\left[T^{\star}(n)\overline{R}^{\star}_{T^{\star}(n)}\right]\right\rvert = \left\lvert\mathbb{E}\left[\sum_{t-1}^nR_t^{\star}- \sum_{t-1}^{T^{\star}(n)}R_t^{\star}\right]\right\rvert\\
  &\numeq{a}\sum_{t=T^{\star}(n)+1}^n\mathbb{E}\left[R_t^{\star}\right] \leq \mathbb{E}[n - T^{\star}(n)] = \sum_{i\neq i^{\star}}\mathbb{E}[T_i(n)]\\
  &\numleq{\text{Lemma \ref{lemma:t_1_233}}} K\left(\frac{16C_p^2 \ln t}{(1-\epsilon)^2\Delta_i^2} + N_0(\epsilon) + N_p + 1 + \frac{\pi^2}{3}\right)\\
  &=\mathcal{O}\left(K\left(C_p^2 \ln n + N_0(\epsilon)\right)\right),
\end{align*}
where (a) follows from the fact that $\overline{R}_{i,t} \in [0,1]$ for all $i$ and $t$ (Assumption \ref{assumption:eventual_station}).
\end{proof}
\begin{lemma}\label{lemma:kocsis}
Let $n_0$ be such that $\sqrt[]{n_0} \geq \mathcal{O}(K(C_p^2\ln n_0 + N_0(\frac{1}{2})))$. Given Assumption \ref{assumption:eventual_station}, the following holds for any $n \geq n_0$ and $\delta>0$:
\begin{align*}
P\left(n \overline{X}_n \geq n\mathbb{E}[\overline{X}_n] + 9\sqrt[]{2\ln(2/\delta)}\right) \leq \delta\\
P\left(n \overline{X}_n \geq n\mathbb{E}[\overline{X}_n] - 9\sqrt[]{2\ln(2/\delta)}\right) \leq \delta.
\end{align*}
\end{lemma}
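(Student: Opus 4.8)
The plan is to follow the concentration argument of \cite[Thm.~5]{uct}, of which this lemma is a direct adaptation to our asymptotically-stationary setting. Writing $n\overline{X}_n = \sum_{t=1}^{n} R_{I_t, T_{I_t}(t)}$ as the total reward accumulated over the first $n$ rounds at the node, the goal is to show that this partial sum concentrates around its expectation with sub-Gaussian tails. The two obstacles that prevent a direct application of Hoeffding--Azuma are (i) the arms are non-stationary, so the per-round means $\mu_{i,t}$ drift, and (ii) the index $I_t$ of the pulled arm is chosen adaptively from the observed history, so the summands are neither independent nor identically distributed. Both are already tamed by the earlier results: Assumption~\ref{assumption:eventual_station} supplies per-arm tail bounds valid once an arm has been pulled at least $N_p$ times, and Lemma~\ref{lemma:lower_bound} guarantees $T_i(t) \geq \lceil \rho \log t\rceil$, so every arm eventually enters the stationary tail regime.

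First I would fix the realization of the allocation vector $(T_1(n),\dots,T_K(n))$ and, conditionally on it, treat each arm's contribution $T_i(n)\overline{R}_{i,T_i(n)}$ separately. For each arm, the tail inequalities of Assumption~\ref{assumption:eventual_station}(3) give a sub-Gaussian deviation of its partial sum around $T_i(n)\mu_{i,T_i(n)}$ with scale $C_p\sqrt{T_i(n)\ln(1/\delta)}$; summing over the $K$ arms and using $\sum_i T_i(n) = n$ bounds the aggregate fluctuation by $\mathcal{O}(C_p\sqrt{n\ln(1/\delta)})$. The centering term $\sum_i T_i(n)\mu_{i,T_i(n)}$ is then compared to $n\mathbb{E}[\overline{X}_n]$ via Lemma~\ref{lemma:thm_3}, which controls the bias $|\mathbb{E}[\overline{X}_n]-\mu^{\star}|$ by $|\delta_n^{\star}| + \mathcal{O}(K(C_p^2\ln n + N_0)/n)$. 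Because the conditioning event ranges over a random allocation, the final step is a peeling/union bound over the possible values of each $T_i(n)$; the lower bound of Lemma~\ref{lemma:lower_bound} keeps the number of relevant values logarithmic, so the union-bound overhead is absorbed into the $t^{-4}$-type factors inherited from Lemma~\ref{lemma:exp_concentration}.

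The role of the hypothesis $\sqrt{n_0} \geq \mathcal{O}(K(C_p^2\ln n_0 + N_0(\tfrac{1}{2})))$ is precisely to guarantee that, for $n \geq n_0$, the additive bias and burn-in corrections collected above are of lower order than $\sqrt{n}$. Once this holds, the dominant fluctuation is the sub-Gaussian term, and tracking the numerical constants through the union bound (two tails, hence the $\ln(2/\delta)$, and the accumulated factors from the per-arm bounds) yields the stated scale $9\sqrt{2\ln(2/\delta)}$.

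The main obstacle will be the interplay between the adaptivity of $I_t$ and the random pull counts $T_i(n)$: one cannot simply treat $n\overline{X}_n$ as a sum of independent bounded increments, and the naive conditioning on $(T_1(n),\dots,T_K(n))$ breaks the clean structure because these counts depend on the very rewards being bounded. The way around this, as in \cite{uct}, is to express the deviation as a stopped martingale in the rewards and apply a Hoeffding--Azuma bound to the unstopped process, invoking Lemma~\ref{lemma:lower_bound} to justify that each arm has been sampled often enough for the asymptotic tail bounds of Assumption~\ref{assumption:eventual_station} to take effect. Verifying that this substitution does not inflate the constant beyond $9$ is the most delicate bookkeeping in the argument, and it is the step I expect to require the most care.
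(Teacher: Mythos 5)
Your proposal is correct and follows essentially the same route as the paper: the paper's own proof simply defers to Kocsis \& Szepesv\'ari \cite[Thm. 5]{uct} and sketches the same ingredients you identify---a counting-process/martingale decomposition of the accumulated reward, the Hoeffding--Azuma inequality to handle the adaptively chosen arms, and the earlier lemmas (in particular Lemma \ref{lemma:lower_bound} and Assumption \ref{assumption:eventual_station}) to enter the asymptotic tail regime. Your outline is, if anything, more explicit than the paper's about where the bias terms and the hypothesis on $n_0$ enter, and it correctly flags the constant-tracking as the delicate part.
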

\begin{proof}
This lemma was originally proved by Kocsis \& Szepesvári \cite[Thm. 5]{uct}. A more accessible version of the proof can be found in \cite[Thm. 5]{power_uct}. We omit the full proof for brevity. For the sake of completeness, we briefly outline the main ideas behind the proof here. The proof involves defining a counting process that represents the number of times a sub-optimal arm is pulled and then bounding the deviation of this process. Key to this argument is the Hoeffding-Azuma inequality \cite[Lem. 8--10]{uct}. By leveraging this inequality and martingale theory, it is possible to conclude that the desired inequalities must hold.
\end{proof}
\begin{lemma}\label{lemma:convergence}
Given Assumption \ref{assumption:eventual_station}, $\lim_{t \rightarrow \infty}P(I_t \neq i^{\star})=0$.
\end{lemma}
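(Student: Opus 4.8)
The plan is to prove the statement directly by bounding the per-round probability $P(I_t = i)$ that a \emph{fixed} suboptimal arm $i$ is selected, and then summing over the finitely many suboptimal arms. Since $K = \mathcal{O}(|2^{\mathbf{X}_t}|)$ is finite, $P(I_t \neq i^{\star}) = \sum_{i \neq i^{\star}} P(I_t = i)$, so it suffices to show $P(I_t = i) \to 0$ for each arm $i$ with gap $\Delta_i > 0$.

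First I would unwind the arm-selection rule of \textsc{ucb1}: the event $\{I_t = i\}$ forces the upper confidence index of arm $i$ to dominate that of the optimal arm,
\begin{align*}
\overline{R}_{i,T_i(t-1)} + c_{t-1,T_i(t-1)} \geq \overline{R}^{\star}_{T^{\star}(t-1)} + c_{t-1,T^{\star}(t-1)}.
\end{align*}
I would then invoke Lemma \ref{lemma:lower_bound}, which guarantees $T_j(t-1) \geq \lceil \rho \ln(t-1)\rceil$ for every arm $j$, so that both the optimal and the suboptimal arm are pulled a number of times that diverges with $t$; this is what makes the empirical means concentrate and the exploration bonuses shrink. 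Conditioning on the values $T^{\star}(t-1)=s$ and $T_i(t-1)=s_i$ and applying a union bound reduces the problem to controlling, for each admissible pair $(s,s_i)$, the probability that the index inequality holds.

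Next I would reuse the separation inequality derived inside the proof of Lemma \ref{lemma:t_1_233}, namely that for $t \geq N_0(\epsilon)$ and $s_i$ beyond the threshold $A_0(t,\epsilon)=\Theta(\ln t)$ one has $\mu^{\star}_t \geq \mu_{i,t} + 2 c_{t,s_i}$ (this combines Assumption \ref{assumption:eventual_station} with Lemma \ref{lemma:delta_convergence}). Under this separation, the index inequality can hold only if either the optimal arm is underestimated, $\overline{R}^{\star}_{s} \leq \mu^{\star}_t - c_{t,s}$, or the suboptimal arm is overestimated, $\overline{R}_{i,s_i} \geq \mu_{i,t} + c_{t,s_i}$; otherwise a direct substitution contradicts the separation (up to the harmless $t$ versus $t-1$ shift in the bonus). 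Each of these is a large-deviation event of probability at most $t^{-4}$ by the two-sided tail bound of Lemma \ref{lemma:exp_concentration}, which is applicable once the sample count exceeds $N_p$, and this holds for large $t$ because the counts are $\Omega(\ln t)$. Summing $t^{-4}$ over the $\mathcal{O}(t)$ admissible pairs of sample counts yields $P(I_t = i) \leq \mathcal{O}(t^{-3})$, which vanishes; with the finite union bound this gives the claim.

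The hard part will be reconciling the two logarithmic sample-count thresholds. The generic lower bound $T_i(t-1) \geq \lceil \rho \ln t\rceil$ supplied by Lemma \ref{lemma:lower_bound} carries a constant $\rho$ that need not exceed the constant hidden in $A_0(t,\epsilon)$, so there is a residual regime $\lceil\rho\ln t\rceil \leq s_i < A_0(t,\epsilon)$ in which the clean separation $\mu^{\star}_t \geq \mu_{i,t}+2c_{t,s_i}$ is not guaranteed and the large-deviation argument does not immediately apply. I would handle this regime either by choosing $\epsilon$ close to $1$ to shrink $A_0(t,\epsilon)$ relative to the lower bound, or by showing that its contribution to $P(I_t=i)$ is itself negligible via the expected-count bound of Lemma \ref{lemma:t_1_233}. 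This delicate interplay among Lemmas \ref{lemma:delta_convergence}, \ref{lemma:t_1_233}, and \ref{lemma:lower_bound} mirrors the original argument of Kocsis \& Szepesv\'ari \cite[Thm. 6]{uct}, which can be invoked to close any remaining gap.
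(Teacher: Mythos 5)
Your route is genuinely different from the paper's: you bound $P(I_t=i)$ through the \textsc{ucb1} index-domination event and the large-deviation machinery of Lemmas \ref{lemma:exp_concentration} and \ref{lemma:t_1_233} (i.e., the regret-bound argument), whereas the paper never touches the exploration bonuses. It instead bounds $p_{i,t}=P(\overline{R}_{i,T_i(t)}\geq\overline{R}^{\star}_{T^{\star}(t)})$ by splitting around the midpoint $\mu_i+\Delta_i/2=\mu^{\star}-\Delta_i/2$, absorbs the nonstationary drift via $|\delta_{i,T_i(t)}|\leq|\delta_{i,\lceil\rho\log t\rceil}|\leq\Delta_i/4$ (Lemmas \ref{lemma:delta_convergence} and \ref{lemma:lower_bound}), and then kills the remaining deviation probability by conditioning on $T_i(t)\in[a,2a]$ and invoking part 3 of Assumption \ref{assumption:eventual_station} with $a$ chosen so that $(k+1)P(\overline{R}_{i,k}\geq\mu_{i,k}+\Delta_i/4)<\epsilon/(2K)$ for all $k\geq a$. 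The only inputs are the deterministic divergence $T_i(t)\to\infty$ and the per-count tail bound; no separation of the form $\mu^{\star}_t\geq\mu_{i,t}+2c_{t,s_i}$ is ever needed.

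The gap you flag is real, and neither of your proposed fixes closes it. The problematic regime is $\lceil\rho\ln t\rceil\leq T_i(t-1)<A_0(t,\epsilon)$ with $A_0(t,\epsilon)=\Theta\left(\ln t/((1-\epsilon)^2\Delta_i^2)\right)$. First, taking $\epsilon$ close to $1$ makes $A_0(t,\epsilon)$ \emph{larger}, not smaller, because of the $(1-\epsilon)^{-2}$ factor, so it widens the bad regime; and pushing $\epsilon$ toward $0$ still leaves $A_0$ at a $\ln t$ scale whose constant need not fall below the $\rho$ supplied by Lemma \ref{lemma:lower_bound}, which is an unrelated constant. Second, the expected-count bound of Lemma \ref{lemma:t_1_233} gives $\sum_{t\leq n}P(I_t=i)=\mathbb{E}[T_i(n)]-1=\mathcal{O}(\ln n)$, a divergent series; this yields only $\liminf_{t}P(I_t=i)=0$, not the limit the lemma asserts. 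Your fallback of invoking Kocsis--Szepesv\'ari's Theorem 6 amounts to switching to the paper's empirical-mean argument, since that is how the original theorem is actually proved. To salvage the index-based route you would need a high-probability (not merely expected or $\liminf$-type) lower bound on $T_i(t-1)$ exceeding $A_0(t,\epsilon)$, or an argument that dispenses with the separation inequality in the low-count regime --- at which point the empirical-mean comparison is the natural repair. (A minor point: the union bound is over $\mathcal{O}(t^2)$ pairs $(s,s_i)$, giving $\mathcal{O}(t^{-2})$ rather than $\mathcal{O}(t^{-3})$; this still vanishes and is not the issue.)
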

\begin{proof}
Let $p_{i,t} = P(\overline{R}_{i, T_i(t)} \geq R^{\star}_{T^{\star}(t)})$. Clearly, $P(I_t \neq i^{\star}) \leq \sum_{i \neq i^{\star}}p_{it}$. Hence it suffices to show that $p_{i,t} \leq \frac{\epsilon}{K}$ for all $i$ and any $\epsilon > 0$. Towards this proof, note that if $\overline{R}_{i,T_i(t)} < \mu_i + \frac{\Delta_i}{2}$ and $\overline{R}^{\star}_{T^{\star}(t)} > \mu^{\star}-\frac{\Delta}{2}$, then
\begin{align*}
\overline{R}_{i,T_i(t)} < \mu_i + \frac{\Delta_i}{2} = \mu^{\star} - \frac{\Delta_i}{2} < \overline{R}^{\star}_{T^{\star}(t)}.
\end{align*}
As a consequence,
\begin{align*}
p_{i,t} \leq P\left(\overline{R}_{i,T_i(t)} < \mu_i + \frac{\Delta_i}{2}\right)+ P\left(\overline{R}^{\star}_{T^{\star}(t)} > \mu^{\star}-\frac{\Delta}{2}\right).
\end{align*}
Since $T_i(t)$ grows slower than $T^{\star}(t)$, it suffices to bound the first of the two terms above. By definition:
\begin{align*}
&P\left(\overline{R}_{i,T_i(t)} < \mu_i + \frac{\Delta_i}{2}\right) \\
&= P\left(\overline{R}_{i,T_i(t)} < \mu_{i,T_i(t)}-|\delta_{i,T_i(t)}| + \frac{\Delta_i}{2}\right).
\end{align*}
Next note that $|\delta_{i,T_i(t)}|$ converges to $0$ by Assumption \ref{assumption:eventual_station}. Hence, we can assume that $|\delta_{i,T_i(t)}|$  is decreasing in $t$ without loss of generality. It then follows from Lemma \ref{lemma:lower_bound} that $|\delta_{i, T_i(t)}| \leq |\delta_{i, \lceil \rho \log t \rceil}|$. Now consider $t \geq \lceil \rho \log t\rceil \geq N_0(\frac{\Delta}{4})$, then $|\delta_{i,T_i(t)}| \leq |\delta_{i,\lceil \rho \log t \rceil}| \leq \frac{\Delta_i}{4}$ by Lemma \ref{lemma:delta_convergence}. As a consequence,
\begin{align}
  &P\left(\overline{R}_{i,T_i(t)} < \mu_{i,T_i(t)}-|\delta_{i,T_i(t)}| + \frac{\Delta_i}{2}\right) \label{eq:leamm6_23}\\
  &\leq P\left(\overline{R}_{i,T_i(t)} < \mu_{i,T_i(t)}+\frac{\Delta_i}{4}\right)\nonumber \\
  &\leq P\left(\overline{R}_{i,T_i(t)} < \mu_{i,T_i(t)}+\frac{\Delta_i}{4},T_i(t) \geq a\right) + P\left(T_i(t) \leq a\right).\nonumber
\end{align}
Since $\lim_{t \rightarrow \infty}T_i(t)=\infty$ (Lemma \ref{lemma:lower_bound}), we have that $\lim_{t \rightarrow \infty}P(T_i(t) < a) = 0$ for any $a$. Hence it suffices to bound the first term in (\ref{eq:leamm6_23}), which can be done as follows.
\begin{align*}
&P\left(\overline{R}_{i,T_i(t)} < \mu_{i,T_i(t)}+\frac{\Delta_i}{4},T_i(t) \geq a\right) \leq \\
  &P\left(\overline{R}_{i,T_i(t)} < \mu_{i,T_i(t)}+\frac{\Delta_i}{4}\mid T_i(t) \in [a,b]\right) P\left(T_i(t) \in [a,b]\right)\\
&  + P\left(T_i(t) \not\in [a,b]\right).
\end{align*}
Next note that
\begin{align*}
  &P\left(\overline{R}_{i,T_i(t)} < \mu_{i,T_i(t)}+\frac{\Delta_i}{4} \mid T_i(t) \in [a,b]\right) \\
  &\leq \sum_{k=a}^bP\left(\overline{R}_{i,k} < \mu_{i,k}+\frac{\Delta_i}{4}\right)\\
  &\leq (b-a+1)\max_{a \leq k \leq b}P\left(\overline{R}_{i,k} < \mu_{i,k}+\frac{\Delta_i}{4}\right).
\end{align*}
Now we use Assumption \ref{assumption:eventual_station}, which implies that the value of $a$ in the expression above can be chosen such that for all $t\geq a$, $(t+1)P(\overline{X}_{i,t} \geq \mu_{i,t} + \frac{\Delta_i}{4}) < \frac{\epsilon}{2K}$, which means that
\begin{align*}
  &(b-a+1)\max_{a \leq k \leq b}P\left(\overline{R}_{i,k} < \mu_{i,k}+\frac{\Delta_i}{4}\right) \\
  &=^{b = 2a} (a+1)\max_{a \leq k \leq b}P\left(\overline{R}_{i,k} < \mu_{i,k}+\frac{\Delta_i}{4}\right)\\
  &\leq (a+1)P\left(\overline{R}_{i,a} < \mu_{i,a}+\frac{\Delta_i}{4}\right) \leq \frac{\epsilon}{2K}.
\end{align*}
Putting the bounds above together, we get:
\begin{align*}
P(I_t \neq i^{\star}) \leq \sum_{i \neq i^{\star}}p_{it}\leq K \frac{\epsilon}{K} = \epsilon.
\end{align*}
\end{proof}
Now we use Lemmas \ref{lemma:exp_concentration}--\ref{lemma:convergence} to prove Thm. 3.
\subsection{Proof of Theorem \ref{thm:pomcp_convergence}}
The proof uses mathematical induction on the time horizon $\mathcal{T}$, corresponding to the search tree's depth. Without loss of generality, we assume that the target variables are normalized to the interval $[0,1]$.

For the inductive base case where $\mathcal{T}=2$, \textsc{c-pomcp} corresponds to a stationary \mab problem, which means that Assumption \ref{assumption:eventual_station}.1--2 hold. Further, Assumption \ref{assumption:eventual_station}.3 follows from Hoeffding's inequality:
\begin{align*}
P\left(\overline{R}_{i,n} \leq \mu_{i,n} \pm \frac{1}{n}\sqrt[]{\frac{\ln t}{n}}\right) &\leq e^{-2\frac{\sqrt[]{\frac{2\ln t}{n}}}{n}} = e^{-4 \ln t} = t^{-4}.
\end{align*}
As a consequence, we can invoke Lemma \ref{lemma:thm_3} and Lemma \ref{lemma:convergence}, which asserts that the theorem statement holds when $\mathcal{T}=2$.

Assume by induction that the theorem statement holds for time horizons $3,4,\hdots,\mathcal{T}-1$ and consider time horizon $\mathcal{T}$. By Thm. \ref{thm:particles}, the problem of finding an optimal intervention from the root node when the horizon is $\mathcal{T}$ corresponds to a non-stationary \textsc{mab} with correlated rewards. This \textsc{mab} satisfies Assumption \ref{assumption:eventual_station}.1 if the rewards are divided by $\mathcal{T}$. Further, it follows from the induction hypothesis that the reward distributions in each subtree converge, which means that Assumption \ref{assumption:eventual_station}.2 holds. Moreover, the induction hypothesis together with Lemma \ref{lemma:kocsis} implies that Assumption \ref{assumption:eventual_station}.3 is satisfied. As a consequence, we can apply Lemma \ref{lemma:convergence}, which ensures that the intervention prescribed at the root converges to an optimal intervention. Similarly, we can apply Lemma \ref{lemma:thm_3}, which states that
\begin{align*}
\left\lvert \mathbb{E}\left[\overline{R}_n - \mu^{\star}_n\right]\right\rvert \leq |\delta^{\star}| + \mathcal{O}\left(\frac{KC_p^2\ln n + N_0}{n}\right),
\end{align*}
where $\overline{R}_n$ is the average reward at the root.
By the induction hypothesis,
\begin{align*}
|\delta^{\star}| &= \mathcal{O}\left(\frac{K(\mathcal{T}-1)\log n + K^{\mathcal{T}-1}}{n}\right).
\end{align*}
Hence,
\begin{align*}
& \left\lvert \mathbb{E}[\overline{R}_n - \mu^{\star}_n]\right\rvert \leq \mathcal{O}\left(\frac{K(\mathcal{T}-1)\log n + K^{\mathcal{T}-1}}{n}\right) \\
& \quad\quad\quad\quad\quad\quad\quad\quad + \mathcal{O}\left(\frac{KC_p^2\ln n + N_0}{n}\right).
\end{align*}
Thus, it only remains to bound $N_0$. It follows from Lemma \ref{lemma:delta_convergence} that $N_0$ is upper bounded by the smallest value of $n$ for which the following inequality holds
\begin{align*}
\frac{K(\mathcal{T}-1) \log n + K^{\mathcal{T}-1}}{n} &\geq \frac{\Delta_i}{2}\\
  \implies \frac{2\left(K^{\mathcal{T}-1} + K(\mathcal{T}-1)\log n\right)}{\Delta_i} &\geq n \\
  \implies N_0 &= \mathcal{O}(K^{\mathcal{T}-1}).
\end{align*}
As a consequence,
\begin{align*}
&\left\lvert \mathbb{E}[\overline{R}_n - \mu^{\star}_n]\right\rvert \leq \mathcal{O}\left(\frac{K(\mathcal{T}-1)\log n + K^{\mathcal{T}-1}}{n}\right)\\
&+ \mathcal{O}\left(\frac{KC_p^2\ln n + K^{\mathcal{T}}}{n}\right) =\mathcal{O}\left(\frac{(K\mathcal{T} \log n + K^{\mathcal{T}})}{n}\right).\nonumber
\end{align*}
Hence, by induction, the theorem holds for all $\mathcal{T}$. \qed

\section{Hyperparameters}\label{appendix:hyperparameters}
The hyperparameters used for the evaluation are listed in Table \ref{tab:hyperparams} and were obtained through random search.

\begin{table}
\centering
\resizebox{1\columnwidth}{!}{%
  \begin{tabular}{ll} \toprule
    \textbf{Parameters} & {\textbf{Values}} \\
    \midrule
    Model parameters &   \\
    \hline
    $\beta_{\mathsf{R},1}$, $\beta_{\mathsf{R},2}$, $\beta_{\mathsf{R},3}$  & $0.1$, $1$, $1$\\
    $\beta_{I_{i,t}, z}$ & $0$ if $I_{i,t} \neq \mathsf{R}$\\
    $q_t$ & $1$ for restore interventions, $0$ otherwise\\
    $\psi_{z_i}$ & $10$ if $z_i=3$, $0$ otherwise\\
    Cyborg version \cite{cyborg} & commit 9421c8e\\
    \midrule
    \textsc{c-pomcp} and \textsc{pomcp} \cite[Alg. 1]{pomcp} &   \\
    \midrule
    search time, default node value & $0.05$s--$30$s, $0$\\
    $M$, $\gamma$, $c$ & $1000$, $0.99$, $0.5$\\
    rollout depth, maximum search depth & $4$, $50$\\
    intervention space & intervention space described in \cite{vyas2023automated}\\
    base strategy, base value & $\widehat{\pi}_{\mathrm{D}}(\mathrm{do}(\emptyset))=1$, $J_{\widehat{\pi}}(\cdot)=0$\\
    \midrule
    \textsc{cardiff-ppo} \cite[Alg. 1]{ppo} \cite{vyas2023automated} &   \\
    \midrule
    learning rate, \# hidden layers,  & $5148 \cdot 10^{-5}$, $1$, \\
    \# neurons per layer, \# steps between updates, & $64$, $2048$,\\
    batch size, discount factor $\gamma$ & $16$, $0.99$\\
    \textsc{gae} $\lambda$, clip range, entropy coefficient & $0.95$, $0.2$, $2\cdot 10^{-4}$\\
    value coefficient, max gradient norm & $0.102$, $0.5$\\
    feature representation & the original cyborg features \cite{cyborg} \& \\
                      & one-hot encoded scan-state \& \\
                      & decoy-state for each node\\
  \bottomrule\\
\end{tabular}
}
\caption{Hyperparameters.}\label{tab:hyperparams}
\end{table}

\section{Configuration of the Target System (Fig. \ref{fig:use_case})}\label{appendix:infrastructure_configuration}
The configuration of the target system in \textsc{cage-2} (Fig. \ref{fig:use_case}) is available in Table \ref{tab:processes}. The attacker actions are listed in Table \ref{tab:exploits} and the defender interventions are listed in Table \ref{tab:defender_interventions}. The decoy services are listed in Table \ref{tab:decoys}, and the workflow graph is shown in Fig. \ref{fig:workflow_graph}.

\begin{table}
  \centering
  \scalebox{0.76}{
  \begin{tabular}{lllll} \toprule
    {\textit{Node \textsc{id}, hostname}} & {\textit{Processes}} & {\textit{Ports}} & {\textit{Users}} & {\textit{Vulnerabilities}}\\ \midrule
    1, \textsc{client}-1 & \textsc{sshd.exe} & 22 & \textsc{sshd\_server} & \cwe-251\\
     & \textsc{femitter.exe} & 21 & \textsc{system} & \cve-2020-26299\\
    \midrule
    2, \textsc{client}-2 & \textsc{smss.exe} & 445,139 & \textsc{system} & -\\
    & \textsc{svchost.exe} & 135 & \textsc{system} & -\\
    & \textsc{svchost.exe} & 3389 & \textsc{network} & \cve-2019-0708\\
    \midrule
    3, \textsc{client}-3 & \textsc{mysql} & 3389 & \textsc{root} & \cve-2019-0708\\
     & \textsc{apache2} & 80,443 & \textsc{www-data} & \cwe-89, \textsc{http}-\textsc{(s)rfi}\\
     & \textsc{smtp} & 25 & \textsc{root} & \cve-2016-1000282\\
    \midrule
    4, \textsc{client}-4 & \textsc{sshd} & 22 & \textsc{root} & \cwe-251\\
     & \textsc{mysql} & 3390 & \textsc{root} & \cwe-89\\
     & \textsc{apache2} & 80, 443 & \textsc{www-data} & \cwe-89, \textsc{http}-\textsc{(s)rfi}\\
     & \textsc{smtp} & 25 & \textsc{root} & \cve-2016-1000282\\
    \midrule
    5, \textsc{enterprise}-1 & \textsc{sshd.exe} & 22 & \textsc{root} & \cwe-251\\
    \midrule
    6, \textsc{enterprise}-2 & \textsc{sshd.exe} & 22 & \textsc{sshd\_server} & \cwe-251\\
     & \textsc{svchost.exe} & 135 & \textsc{system} & -\\
     & \textsc{svchost.exe} & 3389 & \textsc{system} & \cve-2019-0708\\
     & \textsc{smss.exe} & 445,139 & \textsc{system} & \cve-2017-0144\\
     & \textsc{tomcat8.exe} & 80,443 & \textsc{network} & \cwe-89,\textsc{http}-\textsc{(s)rfi}\\
    \midrule
    7, \textsc{enterprise}-3 & \textsc{sshd.exe} & 22 & \textsc{sshd\_server} & \cwe-251\\
     & \textsc{svchost.exe} & 135 & \textsc{system} & -\\
     & \textsc{svchost.exe} & 3389 & \textsc{system} & \cve-2019-0708\\
     & \textsc{smss.exe} & 445,139 & \textsc{system} & \cve-2017-0144\\
     & \textsc{tomcat8.exe} & 80,443 & \textsc{network} & \cwe-89,\textsc{http}-\textsc{(s)rfi}\\
    \midrule
    8, \textsc{operational}-1 & \textsc{sshd} & 22 & \textsc{root} & \cwe-251\\
    \midrule
    9, \textsc{operational}-2 & \textsc{sshd} & 22 & \textsc{root} & \cwe-251\\
    \midrule
    10, \textsc{operational}-3 & \textsc{sshd} & 22 & \textsc{root} & \cwe-251\\
    \midrule
    11, \textsc{operational}-4 & \textsc{sshd} & 22 & \textsc{root} & \cwe-251\\
    \midrule
    12, \textsc{defender} & \textsc{sshd} & 22 & \textsc{root} & \cwe-251\\
                    & \textsc{systemd} & 53,78 & \textsc{systemd+} & \\
    \midrule
    13, \textsc{attacker} & \textsc{sshd.exe} & 22 & \textsc{sshd\_server} & \cwe-251\\
     & \textsc{femitter.exe} & 21 & \textsc{system} & \cve-2020-26299\\
    \bottomrule\\
  \end{tabular}}
  \caption{Configuration of the target system in \textsc{cage-2} (Fig. \ref{fig:use_case}); vulnerabilities are identified by their identifiers in the Common Vulnerabilities and Exposures (\cve) database \cite{cve} and the Common Weakness Enumeration (\cwe) list \cite{cwe}.}\label{tab:processes}
\end{table}

\begin{table}
  \centering
  \scalebox{0.85}{
  \begin{tabular}{lll} \toprule
    {\textit{Type}} & {\textit{Actions}} & {\textsc{mitre att\&ck} technique} \\ \midrule
    Reconnaissance & Subnet scan for nodes & \textsc{t1018} system discovery \\
                   & Port scan on a specific node & \textsc{t1046} service scanning\\\\
    Exploits & \cve-2017-0144, \http-\textsc{srfi} & \textsc{t1210} service exploitation \\
             & \textsc{sql} \textsc{injection} (\cwe-89) & \textsc{t1210} service exploitation \\
             & \cve-2016-1000282 & \textsc{t1210} service exploitation \\
             &  \cve-2020-26299 \http-\textsc{rfi} & \textsc{t1210} service exploitation \\\\
    Brute-force & \ssh & \textsc{t1110} brute force\\\\
    Escalate & Escalate privileges of user to root & \textsc{t1068} privilege escalation\\\\
    Impact & Stop services running on node & \textsc{t1489} service stop\\
    \bottomrule\\
  \end{tabular}}
  \caption{Attacker actions in \textsc{cage}-2 \cite{cage_challenge_2_announcement}.}\label{tab:exploits}
\end{table}

\begin{table}
  \centering
  \scalebox{0.85}{
  \begin{tabular}{lll} \toprule
    {\textit{Type}} & {\textit{Interventions}} & {\textsc{mitre d3fend technique}} \\ \midrule
    Monitor & Network monitoring & \textsc{d3-nta} network analysis\\
            & Forensic analysis & \textsc{d3-fa} file analysis \\\\
    Start decoys & \textsc{apache}, \textsc{femitter} & \textsc{d3-de} decoy environment\\
                 & \textsc{haraka}, \textsc{smss} & \textsc{d3-de} decoy environment \\
                 & \textsc{sshd}, \textsc{svchost} \textsc{tomcat} & \textsc{d3-de} decoy environment \\\\
    Restore & Restore node to a checkpoint & \textsc{d3-ra} restore access \\
           & Attempt to remove attacker & \textsc{d3-fev} file eviction\\
    \bottomrule\\
  \end{tabular}}
  \caption{Defender interventions in \textsc{cage}-2 \cite{cage_challenge_2_announcement}.}\label{tab:defender_interventions}
\end{table}

\begin{table}
  \centering
  \scalebox{0.9}{
  \begin{tabular}{lll} \toprule
    {\textit{\textsc{id}}} & {\textit{Name}} & {\textit{Description}} \\ \midrule
    $1$ & \textsc{decoy-apache} & Starts a vulnerable \textsc{apache} \textsc{http} server decoy \\
    $2$ & \textsc{decoy-femitter} & Starts a vulnerable \textsc{femitter} \textsc{ftp} server decoy \\
    $3$ & \textsc{decoy-smtp} & Starts a vulnerable \textsc{haraka} \textsc{smtp} server decoy \\
    $4$ & \textsc{decoy-smss} & Starts a vulnerable \textsc{smss} server decoy \\
    $5$ & \textsc{decoy-sshd} & Starts an \textsc{ssh} server decoy with a weak password \\
    $6$ & \textsc{decoy-svchost} & Starts a vulnerable \textsc{svchost.exe} process decoy \\
    $7$ & \textsc{decoy-tomcat} & Starts a vulnerable \textsc{tomcat} \textsc{http} server decoy \\
    $8$ & \textsc{decoy-vsftpd} & Starts a vulnerable \textsc{vsftpd} \textsc{ftp} server decoy \\
    \bottomrule\\
  \end{tabular}}
  \caption{Decoy services in \textsc{cage}-2 \cite{cage_challenge_2_announcement}.}\label{tab:decoys}
\end{table}

\begin{figure}
  \centering
  \scalebox{0.95}{
    \begin{tikzpicture}[fill=white, >=stealth,
    node distance=3cm,
    database/.style={
      cylinder,
      cylinder uses custom fill,
      shape border rotate=90,
      aspect=0.25,
      draw}]

    \tikzset{
node distance = 9em and 4em,
sloped,
   box/.style = {%
    shape=rectangle,
    rounded corners,
    draw=blue!40,
    fill=blue!15,
    align=center,
    font=\fontsize{12}{12}\selectfont},
 arrow/.style = {%
    line width=0.1mm,
    -{Triangle[length=5mm,width=2mm]},
    shorten >=1mm, shorten <=1mm,
    font=\fontsize{8}{8}\selectfont},
}

\node[scale=0.7] (gi) at (8.8,0)
{
\begin{tikzpicture}
\node[draw,circle, minimum width=0.8cm, scale=1](n1) at (0,0) {};
\node[draw,circle, minimum width=0.8cm, scale=1](n2) at (1.5,0) {};
\node[draw,circle, minimum width=0.8cm, scale=1](n3) at (3,0) {};
\node[draw,circle, minimum width=0.8cm, scale=1](n4) at (4.5,0) {};

\node[draw,circle, minimum width=0.8cm, scale=1](n5) at (1.5,-1.25) {};
\node[draw,circle, minimum width=0.8cm, scale=1](n6) at (3,-1.25) {};
\node[draw,circle, minimum width=0.8cm, scale=1](n7) at (2.25,-2.5) {};
\node[draw,circle, minimum width=0.8cm, scale=1](n8) at (2.25,-3.75) {};

\node[inner sep=0pt,align=center, scale=1] (dots4) at (0,0)
{$1$};
\node[inner sep=0pt,align=center, scale=1] (dots4) at (1.5,0)
{$2$};
\node[inner sep=0pt,align=center, scale=1] (dots4) at (3,0)
{$3$};
\node[inner sep=0pt,align=center, scale=1] (dots4) at (4.5,0)
{$4$};
\node[inner sep=0pt,align=center, scale=1] (dots4) at (1.5,-1.25)
{$5$};
\node[inner sep=0pt,align=center, scale=1] (dots4) at (3,-1.25)
{$6$};
\node[inner sep=0pt,align=center, scale=1] (dots4) at (2.25,-2.5)
{$7$};
\node[inner sep=0pt,align=center, scale=1] (dots4) at (2.25,-3.75)
{$8$};

\draw[-{Latex[length=2mm]}, line width=0.22mm, color=black] (n1) to (n5);
\draw[-{Latex[length=2mm]}, line width=0.22mm, color=black] (n2) to (n5);
\draw[-{Latex[length=2mm]}, line width=0.22mm, color=black] (n3) to (n6);
\draw[-{Latex[length=2mm]}, line width=0.22mm, color=black] (n4) to (n6);
\draw[-{Latex[length=2mm]}, line width=0.22mm, color=black] (n5) to (n7);
\draw[-{Latex[length=2mm]}, line width=0.22mm, color=black] (n6) to (n7);
\draw[-{Latex[length=2mm]}, line width=0.22mm, color=black] (n7) to (n8);
\end{tikzpicture}
};

\end{tikzpicture}
  }
  \caption{Workflow graph $\mathcal{G}_{\mathrm{W}}$ in \textsc{cage}-2 \cite{cage_challenge_2_announcement}; circles represent nodes of the target system (Fig. \ref{fig:use_case}) and edges represent service dependencies.}
  \label{fig:workflow_graph}
\end{figure}
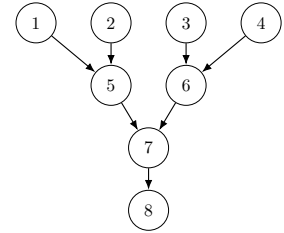

\hypersetup{
  colorlinks,
  linkcolor={black},
  citecolor={black},
  urlcolor={black}
}
\bibliographystyle{IEEEtran}
\bibliography{references}

\end{document}

